\newtheorem{assumption}{Assumption}[section]
\newcommand{\subalign}[1]{%
	\vcenter{%
		\Let@ \restore@math@cr \default@tag
		\baselineskip\fontdimen10 \scriptfont\tw@
		\advance\baselineskip\fontdimen12 \scriptfont\tw@
		\lineskip\thr@@\fontdimen8 \scriptfont\thr@@
		\lineskiplimit\lineskip
		\ialign{\hfil$\m@th\scriptstyle##$&$\m@th\scriptstyle{}##$\crcr
			#1\crcr
		}%
	}
}
\pgfplotsset{width=5\columnwidth /5, compat = 1.13, 
	height = 60\columnwidth /100, grid= major, 
	legend cell align = left, ticklabel style = {font=\scriptsize},
	every axis label/.append style={font=\small},
	legend style = {font=\tiny},title style={yshift=-7pt, font = \small} }
\tikzset{cross/.style={cross out, draw=black, minimum size=10*(#1-\pgflinewidth), inner sep=0pt, outer sep=0pt},cross/.default={1pt}}
\newcommand{\ubar}[1]{\underaccent{\bar}{#1}}
\begin{document}

\title{Uniform Error and Posterior Variance Bounds for Gaussian Process Regression with Application to Safe Control}

\author{\name Armin Lederer \email armin.lederer@tum.de \\
		\name Jonas Umlauft \email jonas.umlauft@tum.de\\
		\name Sandra Hirche \email hirche@tum.de\\
       \addr Chair of Information-oriented Control\\
       Department of Electrical and Computer Engineering\\
       Technical University of Munich\\
       Munich, Germany}

\editor{Kevin Murphy and Bernhard Sch{\"o}lkopf}

\maketitle

\begin{abstract}
	In application areas where data generation is expensive, Gaussian processes 
	are a preferred supervised learning model due to their high 
	data-efficiency. Particularly in model-based control, Gaussian processes 
	allow the derivation of performance guarantees using probabilistic model 
	error bounds. To make these approaches applicable in practice, two open 
	challenges must be solved i) Existing error bounds rely on prior 
	knowledge, which might not be available for many real-world tasks. 
	(ii) The relationship between training data and the posterior variance, 
	which mainly drives the error bound, is not well understood and prevents 
	the asymptotic analysis. This article addresses these 
	issues by presenting a novel uniform error bound using Lipschitz 
	continuity and an analysis of the posterior variance function for a large 
	class of kernels. Additionally, we show how these results can be used to 
	guarantee safe control of an unknown dynamical system and provide numerical 
	illustration examples.
\end{abstract}

\begin{keywords}
 	Gaussian processes, uniform error bounds, posterior variance 
  analysis, safe control, learning in feedback systems
\end{keywords}

\section{Introduction}
Modeling nonlinear systems using supervised learning 
techniques~\citep{Norgard2000} enabled model-based control to succeed in highly 
complex tasks, such as controlling a robotic unicycle in 
simulation~\citep{Deisenroth2011a}. Nevertheless, the application of 
learning-based control in real-world safety-critical scenarios, like autonomous 
driving, assistive and rehabilitation robotics, or personalized disease treatment is rare, 
due to missing performance guarantees 
for the resulting closed-loop behavior. Empirical evaluations exist, 
e.g.,~\citep{Huval2015} for autonomous cars, but are insufficient if the 
physical integrity of systems is at risk. As in human-centric applications 
training data is expensive to acquire, Gaussian 
processes (GPs) are particularly appealing as the regression 
generalizes well for small training sets. Furthermore, GPs are a probabilistic 
method based on Bayesian principles~\citep{Rasmussen2006}, which allows to 
properly encode prior knowledge and to quantify the uncertainty in the 
model. 

Due to these properties, GPs gained increasing attention in the field of 
reinforcement learning and system identification. Especially, when safety
guarantees are necessary, GPs are favored in reinforcement 
learning~\citep{Berkenkamp,	Berkenkamp2016,Berkenkamp2017, Koller2018}
as well as control~\citep{Berkenkamp2015, Umlauft2017, Beckers2018, 
Lederer2020c, Umlauft2018, Helwa2019}.
These approaches heavily rely on error bounds of GP regression 
and are therefore limited by the strict assumptions made in previous works on 
GP uniform error bounds~\citep{Srinivas2012,Chowdhury2017a}.  \looseness=-1

Furthermore, these bounds are based on the posterior variance function whose 
behavior for an increasing number of training data is not well understood. 
Especially when data points are added on-line, e.g. during the control 
tasks, compare~\citep{Umlauft2020}, the posterior variance has barely been 
analyzed 
formally due to a lack of suitable bounds.  Generally, there is only
limited understanding of the interaction between 
learning and control in feedback systems, which is 
crucial to provide guarantees for the control error and mainly motivates the 
work in this article. \looseness=-1

\subsection{Contribution}
The main contribution of this article is the derivation of a novel 
uniform error bound for GPs, which is guaranteed to converge to $0$ 
in the limit of infinitely many, suitably distributed training samples, 
and allows safety guarantees for the control of unknown dynamical systems. 
For the derivation of this uniform error bound, we proceed as follows:
First, we derive a bound for the posterior variance of GPs with
Lipschitz continuous kernels and propose improvements
for a more specific class of kernels. Additionally, we derive sufficient 
conditions for the distribution of the data to ensure the 
convergence of our bound  to zero. 
Second, we present a uniform error bound for GPs, which requires less prior 
knowledge and assumptions than in prior work as it is based on Lipschitz 
continuity. This uniform error bound is based on preliminary work presented
in \citep{Lederer2019}. We analyze bounds for the derivatives of GP sample functions
and employ them as constraints in likelihood maximization in order to 
ensure that the prior distribution fits to the available knowledge of the unknown 
function. Furthermore, we use the bounds on sample functions and the posterior 
variance to investigate the asymptotic behavior of the uniform error bound, which 
shows that arbitrarily small error bounds can be achieved under weak assumptions 
on the training data. The proposed GP bounds are employed to derive safety 
guarantees for controlling unknown dynamical systems based on Lyapunov 
theory~\citep{Khalil2002}. We illustrate all our results in simulations and 
demonstrate the superiority of our bounds for the GP posterior variance compared
to state-of-the-art methods. Finally, we demonstrate the safety of our control 
approach in an experiment with a robotic manipulator.\looseness=-1

\subsection{Structure of this article}
This article is structured as follows: We briefly introduce 
Gaussian process regression and provide a in-depth discussion of the related 
work in \cref{sec:background}. An analysis of the posterior variance bound is 
presented in \cref{sec:postVar}, followed by the derivation of a uniform error 
bound for GPs in \cref{sec:errorbound} using the probabilistic Lipschitz 
constant. In \cref{sec:safety} we demonstrate how the results of the former two 
section can be employed to prove safety of a model-based control law. A 
numerical 
illustration and comparison to prior work are provided in \cref{sec:numEval}, 
followed by a conclusion in \cref{sec:conclusion}.

\subsection{Notation}
Vectors/matrices
are denoted by lower/upper case bold symbols, the 
$n\times n$ identity matrix by~$\bm{I}_n$, the Euclidean norm 
by~$\|\cdot\|$, and $\lambda_{\min}(\bm{A})$ and $\lambda_{\max}(\bm{A})$ the
minimum and maximum eigenvalues of a matrix $\bm{A}$, respectively. Sets are 
denoted by upper case black board bold letters, and sets restricted to positive/non-negative 
numbers have an indexed~$+$/$+,0$, e.g.~$\mathbb{R}_+$ for 
all positive real valued numbers. The cardinality of 
sets is denoted by~$|\cdot|$ and subsets/strict subsets are indicated by \mbox{$\subset/\subseteq$}. 
The expectation operator
$E[\cdot]$ can have an additional index to specify 
the considered random variable. 
Class~$\mathcal{O}$ notation is used to provide 
asymptotic upper bounds on functions. Similarly, we
use class $\mathcal{K}_\infty$ functions 
$\alpha:\mathbb{R}_{+,0}\rightarrow\mathbb{R}_{+,0}$, 
which are strictly increasing, $\alpha(0)=0$ and satisfy
$\lim_{r\rightarrow\infty}\alpha(r)=\infty$.
The ceil and floor
operator are denoted by $\lceil\cdot\rceil$ and $\lfloor\cdot\rfloor$,
respectively. 
\looseness=-1

\section{Related Work and Background}
\label{sec:background}
\subsection{Gaussian Process Regression}
A Gaussian process is a stochastic process such 
that any finite number of outputs\linebreak
\mbox{$\{y_1,\ldots,y_M\} \subset\mathbb{R}$}
is assigned a joint Gaussian distribution with
prior mean function~$m:\mathbb{R}^d\rightarrow \mathbb{R}$ and covariance 
defined through the kernel 
$k:\mathbb{R}^d\times\mathbb{R}^d\rightarrow\mathbb{R}$
\citep{Rasmussen2006}. Therefore, the training outputs 
$y^{(i)}$ can be considered as observations of a sample 
function~$f:\mathbb{X}\subset\mathbb{R}^d\rightarrow\mathbb{R}$ of the GP 
distribution perturbed by i.i.d. zero mean Gaussian noise
with variance~$\sigma_n^2$. Without any specific information about the 
unknown function $f(\cdot)$ such as an approximate model, the prior mean 
function is typically set to $0$. We also assume this in the following
without loss of generality. Regression is performed by conditioning the 
prior GP distribution on the training data 
$\mathbb{D}_N=\{(\bm{x}^{(i)},y^{(i)})\}_{i=1}^N$ and a test point~$\bm{x}$. 
The conditional posterior 
distribution is again Gaussian and can be calculated analytically.
For this reason, we define the kernel matrix~$\bm{K}_N$ and the 
kernel vector~$\bm{k}_N(\bm{x})$ through~$K_{N,ij}=k(\bm{x}^{(i)},\bm{x}^{(j)})$ 
and~$k_{N,i}(\bm{x})=k(\bm{x},\bm{x}^{(i)})$, respectively, with~$i,j=1,\ldots,N$. 
Then, the posterior
mean~$\mu_N(\cdot)$ and variance~$\sigma_N^2(\cdot)$ are given by
\begin{align}
\mu_N(\bm{x})&=\bm{k}_N^T(\bm{x})\bm{A}_N^{-1}\bm{y}_N,\\
\sigma_N^2(\bm{x})&=k(\bm{x},\bm{x})-
\bm{k}_N^T(\bm{x})\bm{A}_N^{-1}\bm{k}_N(\bm{x}),
\label{eq:var}
\end{align}
where~$\bm{A}_{N}=\bm{K}_N+\sigma_n^2\bm{I}_{N}$
denotes the data covariance matrix and~$\bm{y}_N = [y^{(1)} \cdots y^{(N)}]^T$. 

The kernel typically depends on so called hyperparameters $\bm{\kappa}$ which allow 
to shape the prior distribution without changing properties of the 
sample functions such as periodicity or stationarity. Although there is a variety of
methods available for hyperparameter tuning \citep{Rasmussen2006}, in control-oriented 
applications they are often fitted to the training data by maximizing the marginal 
log-likelihood $P(\bm{y}_N|\bm{X},\bm{\kappa})$, which is commonly performed using
gradient-based optimization methods.

\subsection{Posterior Variance Bounds of Gaussian Processes}
Posterior variance bounds are well known for many methods closely 
related to Gaussian process regression. For noise-free interpolation,
the posterior variance has been analyzed using spectral 
methods~\citep{Stein1999}. While the asymptotic behavior
can be analyzed efficiently
with such methods, they are not suited to bound the 
posterior variance for specific training data sets. In the 
context of noise-free interpolation, many bounds from the 
area of scattered data approximation can be applied due to 
the equivalence of the posterior variance and the power
function~\citep{Kanagawa2018}. Therefore, classical results
\citep{Wu1993, Wendland2005, Schaback2006}
as well as newer findings~\citep{Beatson2010, Scheuerer2013}
can be directly used for GP interpolation. 
However, it is typically not clear how these results can
be generalized to regression with noisy observations.\looseness=-1

Posterior variance bounds for GP regression have mostly been 
developed as intermediate results for more
complex problems. For example, a variance bound has been developed
for GPs with isotropic kernels in the context of Bayesian 
optimization~\citep{Shekhar2018} while  
bounds for general kernels have been investigated within the 
analysis of average learning curves~\citep{Opper1999, Williams2000}
and experimental design~\citep{Wang2018}. Although these bounds
are well-suited for low data regimes, they fail to capture the 
asymptotic behavior. Therefore, upper bounds on the posterior variance
are missing which allow to describe the learning behavior over the
whole range of training data densities.

\subsection{Error Bounds of Gaussian Process Regression}
Uniform error bounds play a crucial role in quantifying the precision of a 
function approximator. For the case of noise free data, results of scattered 
data approximation with radial basis functions can be applied to
derive such bounds~\citep{Wendland2005} and translate to GP regression with
stationary kernels. Using Fourier transform methods, the classical result in 
\citep{Wu1993} derives error bounds for functions in the reproducing 
kernel Hilbert space (RKHS) associated with the interpolation kernel. By 
utilizing further properties of the RKHS, a uniform error bound with 
increased convergence rate is derived in \citep{Schaback2002}. These bounds are 
driven by the power function, which are - under certain conditions - equivalent 
to the GP posterior standard deviation~\citep{Kanagawa2018}. 

Extending scattered data interpolation to noisy observations leads to the 
concept of regularized kernel regression~\citep{Kanagawa2018}. For squared 
cost functions, this kernel ridge regression is identical to the GP posterior 
mean function~\citep{Rasmussen2006}.
The corresponding error bounds, e.g., in~\citep{Mendelson2002} depend on the 
empirical $\mathcal{L}_2$ covering number and the norm of the unknown function 
in the associated RKHS. With empirical $\mathcal{L}_2$ covering numbers, 
tighter 
error bounds can be derived under mild assumptions~\citep{Shi2013}. For general 
regularization, error bounds are 
derived in \citep{Dicker2017} as a function of the regularization and the RKHS 
norm of the function. 

Uniform error bounds depending on the maximum information gain and the RKHS 
norm for GP regression were derived in \citep{Srinivas2012}. However, these 
results only apply to bounded sub-Gaussian observation noise, which is a 
limitation compared to regularized kernel regression. To analyze the regret of 
an upper 
confidence bound algorithm in multi-armed bandit setting, an improved bound 
is derived in \citep{Chowdhury2017a}. Although these 
bounds are widely applied in safe reinforcement learning and control, they 
suffer from the following drawbacks: i) They depend on constants which are 
very 
difficult to calculate, which is no problem for the theoretical analysis, but 
prevents the application of the bounds in real-world tasks. ii) RKHS approaches 
face a general problem: The smoother the kernel, the smaller is the space of 
functions for which the bounds holds~\citep{Narcowich2006}. The RKHS attached 
to a covariance kernel is small compared to the support 
of the prior distribution of a GP~\citep{VanderVaart2011}.

Therefore, there is a lack of explicitly computable uniform error bounds, 
which do not rely on RKHS theory and the corresponding issues described previously. 
In order to avoid 
the difficulties that come with the RKHS view, we utilize the prior GP
distribution to derive error bounds for GP regression with noisy observations.

\section{Bounding the Posterior Variance of Gaussian Processes}
\label{sec:postVar}

Despite a wide variety of literature 
on average posterior variance 
bounds for isotropic kernels, data-dependent posterior variance bounds 
for general kernels have gained
far less attention. We derive in
\cref{subsec:varBound} an upper bound on the posterior 
variance, which depends on the number of samples in 
the neighborhood of the test point~$\bm{x}$. In \cref{subsec:probDist} we 
derive sufficient 
conditions on probability distributions of the training 
data that ensure the convergence of our bound. 

\subsection{Posterior Variance Bound}
\label{subsec:varBound}

The central idea in deriving an upper bound for 
the posterior variance of a GP lies 
in the observation that data close to a test point 
$\bm{x}$ usually lead to the highest decrease in the posterior 
variance. Therefore, it is natural to consider only 
training data close to the test point in the bound as 
more and more data is acquired. The following theorem 
formalizes this idea. The proofs for all the following
theoretical results can be found in the appendix.

\begin{theorem}
	\label{th:var_bound}
	Consider a GP with Lipschitz 
	continuous kernel~$k(\cdot,\cdot)$ with Lipschitz constant~$L_k$, an input 
	training data set~$\mathbb{D}_{N}^x=\{\bm{x}^{(i)}\}_{i=1}^{N}$ 
	and observation noise variance~$\sigma_n^2$. Let 
	$\mathbb{B}_{\rho}(\bm{x}^*)=\{ 
	\bm{x}'\in\mathbb{D}_N^x:~\|\bm{x}'-\bm{x}^*\|\leq 
	\rho \}$ denote the training data set restricted to a 
	ball around~$\bm{x}^*\in\mathbb{X}$ with radius~$\rho$. Then, for 
	each~$\bm{x}\in\mathbb{X}$, $\bm{x}^*\in\mathbb{X}$ and 
	$\rho\leq k(\bm{x},\bm{x})/L_k$, 
	the posterior variance is bounded by\looseness=-1
	\begin{align}
	\sigma_N^2(\bm{x})\leq \frac{(2L_k\rho(k(\bm{x},\bm{x})\!+\!k(
	\bm{x}^*\!,\bm{x}))\!-\!L_k^2\rho^2)\left|
	\mathbb{B}_{\rho}(\bm{x}^*)\right|
		\!+\!\sigma_n^2k(\bm{x},\bm{x})}{\left|
		\mathbb{B}_{\rho}(\bm{x}^*)\right|(k(\bm{x}^*\!,\bm{x}^*)
		\!+\!2L_k\rho)\!+\!\sigma_n^2}.
	\label{eq:sigbound}
	\end{align}
\end{theorem}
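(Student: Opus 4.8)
The plan is to bound the posterior variance from above by exhibiting a good suboptimal predictor. The key observation is that the posterior variance
\[
\sigma_N^2(\bm{x})=k(\bm{x},\bm{x})-\bm{k}_N^T(\bm{x})\bm{A}_N^{-1}\bm{k}_N(\bm{x})
\]
admits a variational characterization: it equals the minimum over all weight vectors $\bm{\alpha}\in\mathbb{R}^N$ of the regularized reconstruction error
\[
\sigma_N^2(\bm{x})=\min_{\bm{\alpha}}\Big(k(\bm{x},\bm{x})-2\bm{\alpha}^T\bm{k}_N(\bm{x})+\bm{\alpha}^T\bm{A}_N\bm{\alpha}\Big),
\]
which is minimized by $\bm{\alpha}^*=\bm{A}_N^{-1}\bm{k}_N(\bm{x})$. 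Hence \emph{any} choice of $\bm{\alpha}$ yields a valid upper bound on $\sigma_N^2(\bm{x})$. First I would verify this identity by direct computation, completing the square in $\bm{\alpha}$.

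The next step is to choose a simple, analytically tractable $\bm{\alpha}$ that exploits the Lipschitz continuity of the kernel. The natural choice is to place uniform weight $\bm{\alpha}=\tfrac{c}{|\mathbb{B}_\rho(\bm{x}^*)|}\bm{e}$ on exactly those training points lying inside the ball $\mathbb{B}_\rho(\bm{x}^*)$ and zero elsewhere, where $\bm{e}$ is the indicator vector of $\mathbb{B}_\rho(\bm{x}^*)$ and $c$ is a scalar to be tuned. With this choice, the three terms in the quadratic become sums of kernel evaluations over the ball. The Lipschitz property then lets me control each kernel entry: for any $\bm{x}^{(i)}\in\mathbb{B}_\rho(\bm{x}^*)$ I can write $k(\bm{x},\bm{x}^{(i)})\geq k(\bm{x},\bm{x}^*)-L_k\rho$ and $k(\bm{x}^{(i)},\bm{x}^{(j)})\leq k(\bm{x}^*,\bm{x}^*)+2L_k\rho$, since both arguments move by at most $\rho$. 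Substituting these bounds, together with $\bm{\alpha}^T\bm{A}_N\bm{\alpha}=\bm{\alpha}^T\bm{K}_N\bm{\alpha}+\sigma_n^2\|\bm{\alpha}\|^2$ and $\|\bm{\alpha}\|^2=c^2/|\mathbb{B}_\rho(\bm{x}^*)|$, collapses the expression into a simple function of the single scalar $c$.

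The final step is to optimize over $c$. After inserting the Lipschitz bounds the upper bound is a quadratic in $c$ of the form $k(\bm{x},\bm{x})-2c\,b+c^2 a$, where $a$ collects the $\bm{\alpha}^T\bm{A}_N\bm{\alpha}$ coefficient $(k(\bm{x}^*,\bm{x}^*)+2L_k\rho)+\sigma_n^2/|\mathbb{B}_\rho(\bm{x}^*)|$ and $b$ collects the cross term $k(\bm{x},\bm{x}^*)-L_k\rho$. Minimizing over $c$ gives $c^*=b/a$ and optimal value $k(\bm{x},\bm{x})-b^2/a$. The remaining work is purely algebraic: I would substitute $a$ and $b$, clear denominators by multiplying through by $|\mathbb{B}_\rho(\bm{x}^*)|$, and massage the resulting expression into the stated fraction~\eqref{eq:sigbound}. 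The constraint $\rho\leq k(\bm{x},\bm{x})/L_k$ enters to guarantee that the Lipschitz lower bound $k(\bm{x},\bm{x}^*)-L_k\rho$ used in the cross term stays nonnegative, so that taking $c^*\geq 0$ is legitimate and the bound on $b^2$ via $b\geq k(\bm{x},\bm{x}^*)-L_k\rho$ does not reverse sign.

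I expect the main obstacle to be the bookkeeping in the final algebraic reduction rather than any conceptual difficulty: one must be careful that the direction of each Lipschitz inequality is consistent with whether the corresponding term appears with a positive or negative sign in the quadratic (so that each replacement genuinely increases the upper bound), and that the cross term $k(\bm{x},\bm{x}^*)$ appearing in~\eqref{eq:sigbound} arises correctly from $b^2/a$ after expanding the square. A subtle point worth double-checking is why the bound features $k(\bm{x},\bm{x})+k(\bm{x}^*,\bm{x})$ in the numerator: this combination should emerge naturally when $b^2=(k(\bm{x},\bm{x}^*)-L_k\rho)^2$ is expanded and combined with the $k(\bm{x},\bm{x})\cdot a$ term over the common denominator, so the matching of cross-terms is the place where a sign or factor-of-two slip is most likely.
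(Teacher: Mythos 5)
Your variational route is sound and genuinely different in mechanism from the paper's proof: the paper starts from $\sigma_N^2(\bm{x})\leq k(\bm{x},\bm{x})-\|\bm{k}_N(\bm{x})\|^2/(\lambda_{\max}(\bm{K}_N)+\sigma_n^2)$, bounds $\lambda_{\max}(\bm{K}_N)$ via Gershgorin's theorem, and invokes the monotonicity of the posterior variance under data removal to restrict attention to $\mathbb{B}_{\rho}(\bm{x}^*)$, whereas you obtain the restriction for free by choosing a weight vector supported on the ball in the representer quadratic $\min_{\bm{\alpha}}\bigl(k(\bm{x},\bm{x})-2\bm{\alpha}^T\bm{k}_N(\bm{x})+\bm{\alpha}^T\bm{A}_N\bm{\alpha}\bigr)$. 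Your version is arguably cleaner (no eigenvalue estimates, no appeal to the variance-monotonicity lemma), and the Lipschitz replacements you propose are applied with the correct signs. Both arguments funnel into the same penultimate expression
\begin{align*}
\sigma_N^2(\bm{x})\;\leq\; k(\bm{x},\bm{x})-\frac{\bigl(k(\bm{x}^*,\bm{x})-L_k\rho\bigr)^2}{k(\bm{x}^*,\bm{x}^*)+2L_k\rho+\sigma_n^2/\left|\mathbb{B}_{\rho}(\bm{x}^*)\right|}.
\end{align*}

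The gap is exactly at the step you defer as ``purely algebraic.'' Clearing denominators in the display above gives a numerator that exceeds the one in \eqref{eq:sigbound} by the nonnegative quantity $\left|\mathbb{B}_{\rho}(\bm{x}^*)\right|\bigl(k(\bm{x},\bm{x})k(\bm{x}^*,\bm{x}^*)-k^2(\bm{x}^*,\bm{x})\bigr)$, which is $\geq 0$ by Cauchy--Schwarz in the RKHS. So the expression cannot be ``massaged into'' the stated fraction: your argument proves a correct but strictly weaker bound whenever $k(\bm{x},\bm{x})k(\bm{x}^*,\bm{x}^*)>k^2(\bm{x}^*,\bm{x})$. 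The paper's own proof hits the identical residual and disposes of it by asserting $k(\bm{x},\bm{x})k(\bm{x}^*,\bm{x}^*)=k^2(\bm{x}^*,\bm{x})$, which is the equality case of Cauchy--Schwarz and does not hold for general kernels and general $\bm{x}\neq\bm{x}^*$ (it is vacuous only when $\bm{x}^*=\bm{x}$, the case used later in \cref{th:varvan}). You therefore cannot complete the proof of \eqref{eq:sigbound} as stated along this route without the same unjustified identity; the honest conclusion of your derivation is the weaker bound above. A secondary, minor point: the hypothesis $\rho\leq k(\bm{x},\bm{x})/L_k$ guarantees $k(\bm{x},\bm{x})-L_k\rho\geq 0$ but not $k(\bm{x}^*,\bm{x})-L_k\rho\geq 0$, which is what you actually need for $b\geq 0$; this looseness is inherited from the paper and is harmless when $\bm{x}^*=\bm{x}$.
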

The point $\bm{x}^*$ is a reference point which can be
set such that the bound is minimized, e.g., by choosing 
the point of maximal variance in linear kernels or by choosing 
a point with many training samples in its proximity close to the 
test point $\bm{x}$ in stationary kernels. 
The parameter~$\rho$ can be interpreted as information
radius, which defines how far away from a reference point $\bm{x}^*$
training data is considered to be informative. However, this information radius is conservative
as all the data points with smaller radius are treated in
the theorem as if they had a distance of~$\rho$ to the 
test point. Therefore, a large~$\rho$ has the advantage
that many training points are considered, while a small
$\rho$ is beneficial if sufficiently many training samples
are close to the reference point~$\bm{x}^*$.

Note, that Theorem~\ref{th:var_bound} is very general as it is merely 
restricted 
to Lipschitz continuous kernels, which is a common 
property of kernels for regression~\citep{Rasmussen2006}. 
This generality comes at the price of tightness of the 
bound and tighter bounds exist under additional assumptions, 
e.g., the bound in~\citep{Shekhar2018} for isotropic, decreasing kernels,
which have non-positive derivatives 
$\frac{\partial}{\partial \tau}k(\tau)\leq 0$,~$\tau\geq 0$. However, this 
bound can directly be derived 
from \cref{th:var_bound}, which leads to the following corollary. 
\begin{corollary}
	\label{cor:var_bound}
	Consider a GP with isotropic, decreasing 
	covariance kernel~$k(\cdot)$, an input 
	training data set~$\mathbb{D}_{N}^x=
	\{\bm{x}^{(i)}\}_{i=1}^{N}$ and observation noise 
	variance~$\sigma_n^2$. Let 
	$\mathbb{B}_{\rho}(\bm{x})=\{ \bm{x}'
	\in\mathbb{D}_N^x:~\|\bm{x}'-\bm{x}\|\leq \rho \}$ 
	denote the training data set restricted to a ball 
	around~$\bm{x}$ with radius~$\rho$. Then, 
	for each~$\bm{x}\in\mathbb{X}$, the 
	posterior variance is bounded by
	\begin{align}
	\sigma_N^2(\bm{x})\leq k(0)-\frac{k^2(\rho)}{k(0)
		+\frac{\sigma_n^2}{|\mathbb{B}_{\rho}(\bm{x})|}}.
	\label{eq:isobound}
	\end{align}
\end{corollary}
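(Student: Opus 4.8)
The plan is to retrace the argument behind \cref{th:var_bound} with the reference point fixed to $\bm{x}^*=\bm{x}$, replacing the Lipschitz estimates on the kernel by the sharper bounds that the isotropic, decreasing structure supplies. The starting point is the variational characterization of the posterior variance: since $\bm{A}_N\succ0$, the quadratic $g(\bm{\alpha})=k(\bm{x},\bm{x})-2\bm{\alpha}^T\bm{k}_N(\bm{x})+\bm{\alpha}^T\bm{A}_N\bm{\alpha}$ is minimized at $\bm{\alpha}=\bm{A}_N^{-1}\bm{k}_N(\bm{x})$ with minimal value $\sigma_N^2(\bm{x})$, so that $\sigma_N^2(\bm{x})\le g(\bm{\alpha})$ for \emph{every} $\bm{\alpha}\in\mathbb{R}^N$. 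This is the inequality that drives the general bound, and it lets us extract an explicit estimate from any convenient choice of $\bm{\alpha}$.

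First I would take $\bm{\alpha}$ supported on the index set of $\mathbb{B}_{\rho}(\bm{x})$ with a single common nonnegative value $c$, so that, using $k(\bm{x},\bm{x})=k(0)$, $g(\bm{\alpha})=k(0)-2c\sum_{i\in\mathbb{B}_{\rho}(\bm{x})}k(\|\bm{x}-\bm{x}^{(i)}\|)+c^2\big(\sum_{i,j\in\mathbb{B}_{\rho}(\bm{x})}k(\|\bm{x}^{(i)}-\bm{x}^{(j)}\|)+\sigma_n^2|\mathbb{B}_{\rho}(\bm{x})|\big)$. Here the monotonicity of the kernel enters twice: every point in the ball satisfies $\|\bm{x}-\bm{x}^{(i)}\|\le\rho$, so $k(\|\bm{x}-\bm{x}^{(i)}\|)\ge k(\rho)$ and the cross term is at least $c\,|\mathbb{B}_{\rho}(\bm{x})|\,k(\rho)$; and every kernel entry is at most $k(0)$, so the quadratic term is at most $c^2(|\mathbb{B}_{\rho}(\bm{x})|^2k(0)+\sigma_n^2|\mathbb{B}_{\rho}(\bm{x})|)$. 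Because $c\ge0$, the cross term carries a minus sign, and the quadratic term a plus sign, substituting these bounds keeps $g(\bm{\alpha})$ an upper bound on $\sigma_N^2(\bm{x})$, yielding
\[\sigma_N^2(\bm{x})\le k(0)-2c\,|\mathbb{B}_{\rho}(\bm{x})|\,k(\rho)+c^2\big(|\mathbb{B}_{\rho}(\bm{x})|^2k(0)+\sigma_n^2|\mathbb{B}_{\rho}(\bm{x})|\big).\]

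It remains to minimize the right-hand side over $c\ge0$, a one-parameter quadratic with optimizer $c^*=k(\rho)/(|\mathbb{B}_{\rho}(\bm{x})|k(0)+\sigma_n^2)\ge0$ (nonnegative since a decreasing covariance kernel satisfies $0\le k(\rho)\le k(0)$); substituting $c^*$ and simplifying — equivalently, evaluating the resulting rank-one quadratic form via the Sherman--Morrison identity — collapses the expression to $k(0)-k^2(\rho)/\big(k(0)+\sigma_n^2/|\mathbb{B}_{\rho}(\bm{x})|\big)$, which is \eqref{eq:isobound}. I expect the only real care to lie in the bookkeeping of the two inequality directions, namely verifying that replacing the exact kernel values by $k(\rho)$ and $k(0)$ genuinely enlarges $g(\bm{\alpha})$, and in recognizing at the outset that the uniform weight on $\mathbb{B}_{\rho}(\bm{x})$ together with the choice $\bm{x}^*=\bm{x}$ is precisely what converts the Lipschitz offsets of \cref{th:var_bound} into the exact values $k(0)$ and $k(\rho)$; the subsequent scalar optimization is routine.
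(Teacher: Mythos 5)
Your proof is correct, and it reaches \eqref{eq:isobound} by a genuinely different route than the paper. The paper's own proof of \cref{cor:var_bound} is a one-line specialization of the intermediate inequality \eqref{eq:sigma_bound3} established inside the proof of \cref{th:var_bound}; that inequality is itself obtained by (i) lower-bounding $\bm{k}_N^T(\bm{x})\bm{A}_N^{-1}\bm{k}_N(\bm{x})\geq \|\bm{k}_N(\bm{x})\|^2/(\lambda_{\max}(\bm{K}_N)+\sigma_n^2)$, (ii) bounding $\lambda_{\max}(\bm{K}_N)$ via the Gershgorin theorem, and (iii) invoking the monotonicity $\sigma_N^2(\bm{x})\leq\sigma_{N-1}^2(\bm{x})$ to discard all data outside $\mathbb{B}_\rho(\bm{x})$; the isotropic, decreasing structure is then used exactly as you use it, to replace the min and max kernel values by $k(\rho)$ and $k(0)$. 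You instead work from the variational characterization $\sigma_N^2(\bm{x})=\min_{\bm{\alpha}}\{k(\bm{x},\bm{x})-2\bm{\alpha}^T\bm{k}_N(\bm{x})+\bm{\alpha}^T\bm{A}_N\bm{\alpha}\}$ and test with a constant weight supported on the ball; this makes the restriction to $\mathbb{B}_\rho(\bm{x})$ automatic (no appeal to the Vivarelli-type monotonicity result) and replaces Gershgorin by the elementary entrywise bound $k(\|\bm{x}^{(i)}-\bm{x}^{(j)}\|)\leq k(0)$ on the quadratic form --- which is, in effect, Gershgorin's row-sum estimate realized by the all-ones test vector, explaining why the two routes land on the identical expression after the scalar optimization in $c$. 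Your sign bookkeeping and the optimization over $c\geq 0$ check out (note only that the full cross term is $2c\,|\mathbb{B}_\rho(\bm{x})|\,k(\rho)$, a factor of $2$ your prose drops but your displayed inequality retains), and your explicit remark that $c^*\geq 0$ requires $k(\rho)\geq 0$ makes visible an assumption the paper leaves implicit when it squares $k(\rho)$; indeed the paper's stated auxiliary fact $\min_{\bm{x}'\in\mathbb{B}_\rho(\bm{x})}k(\bm{x}',\bm{x})\leq k(\rho)$ has the inequality reversed relative to what its argument needs, whereas you state the correct direction. What your approach buys is a self-contained, more elementary derivation; what the paper's buys is reuse of machinery already set up for the general Lipschitz case in \cref{th:var_bound}.
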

Although Corollary~\ref{cor:var_bound} considers isotropic kernels, it
can be straightforwardly extended to kernels with automatic relevance 
determination~\citep{Neal1996}. This can be achieved by replacing the 
restriction to a ball by an ellipsoid, which leads to a set 
$\tilde{\mathbb{B}}_{\rho}(\bm{x}^*)=\{ \bm{x}'\in\mathbb{D}_N^x: 
(\bm{x}'-\bm{x}^*)^T\mathrm{diag}(\bm{l}^2)^{-1}(\bm{x}'-\bm{x}^*)\leq \rho^2 \}$.
Therefore, the posterior variance of many commonly used stationary kernels can 
be efficiently bounded based on Corollary~\ref{cor:var_bound}.

\subsection{Asymptotic Analysis}
\label{subsec:probDist}
In addition, \cref{th:var_bound} can also be used 
for determining an asymptotic decay rate of the posterior variance 
for~$\lim_{N\rightarrow\infty}\sigma_N^2(\bm{x})$.
Even though the limit of infinitely many training
data cannot be reached in practice, this analysis
is important because it helps to determine the 
amount of training data which is necessary to 
achieve a desired posterior variance. In the following
corollary, we provide necessary conditions that
ensure the convergence to zero of the bound~\eqref{eq:sigbound}.
\begin{theorem}
	\label{th:varvan}
	Consider a GP with Lipschitz 
	continuous kernel~$k(\cdot,\cdot)$ , an infinitely large 
	input training data set~$\mathbb{D}_{\infty}^x=\{ \bm{x}^{(i)} 
	\}_{i=1}^{\infty}$ and the observation noise 
	variance~$\sigma_n^2$. Let 
	$\mathbb{D}_{N}^x=\{ \bm{x}^{(i)} \}_{i=1}^{N}$ 
	denote the subset of any~$N$ input training 
	samples and let~$L_k$ be the Lipschitz constant 
	of kernel~$k(\cdot,\cdot)$. Furthermore, let 
	$\mathbb{B}_{\rho}(\bm{x})=\{ \bm{x}'
	\in\mathbb{D}_N^x:~\|\bm{x}'-\bm{x}\|\leq \rho \}$ 
	denote the training data set restricted to a ball 
	around~$\bm{x}$ with radius~$\rho$. If there 
	exists a function 
	\mbox{$\rho: \mathbb{N}\rightarrow \mathbb{R}_+$} and 
	a class $\mathcal{K}_{\infty}$ function $\alpha:\mathbb{R}\rightarrow\mathbb{R}_+$
	such that 
	\begin{align*}
	\rho(N)&\leq \frac{k(\bm{x},\bm{x})}{L_k}
	\quad \forall N\in\mathbb{N}\\
	\rho(N)&\in\mathcal{O}\left( \frac{1}{\alpha(N)} \right)\\
	\frac{1}{\left| \mathbb{B}_{\rho(N)}(\bm{x}) \right|}&\in\mathcal{O}\left( \frac{1}{\alpha(N)} \right)
	\end{align*}
	holds, the posterior variance at~$\bm{x}$ 
	converges to zero as follows
	\begin{align*}
	\sigma_N(\bm{x})\in\mathcal{O}\left( \frac{1}{\sqrt{\alpha(N)}} \right).
	\end{align*}
\end{theorem}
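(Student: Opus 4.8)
The plan is to invoke the posterior variance bound from \cref{th:var_bound} and then track how its right-hand side behaves as $N\to\infty$ under the three stated assumptions. The key simplification is to choose the reference point $\bm{x}^*=\bm{x}$, so that $k(\bm{x}^*,\bm{x})=k(\bm{x}^*,\bm{x}^*)=k(\bm{x},\bm{x})$ and the ball $\mathbb{B}_\rho(\bm{x}^*)$ coincides with $\mathbb{B}_\rho(\bm{x})$, which is exactly the ball appearing in the assumptions. Writing $\kappa=k(\bm{x},\bm{x})$, $\rho=\rho(N)$ and $M=|\mathbb{B}_{\rho(N)}(\bm{x})|$, the bound in \eqref{eq:sigbound} collapses to
\begin{align*}
\sigma_N^2(\bm{x})\leq \frac{(4L_k\rho\kappa-L_k^2\rho^2)M+\sigma_n^2\kappa}{M(\kappa+2L_k\rho)+\sigma_n^2},
\end{align*}
where the first assumption $\rho(N)\leq\kappa/L_k$ is precisely what licenses the application of \cref{th:var_bound}.

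Next I would split this single fraction into two contributions: one carrying the $\rho$-dependence and one carrying the noise term $\sigma_n^2\kappa$. For the first, I bound the numerator from above by discarding the negative term $-L_k^2\rho^2M$ and the denominator from below by $M\kappa$, which yields a term no larger than $4L_k\rho$. For the second, I again use the denominator lower bound $M\kappa$ to obtain a term no larger than $\sigma_n^2/M$. Hence
\begin{align*}
\sigma_N^2(\bm{x})\leq 4L_k\rho(N)+\frac{\sigma_n^2}{|\mathbb{B}_{\rho(N)}(\bm{x})|}.
\end{align*}

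The two remaining assumptions $\rho(N)\in\mathcal{O}(1/\alpha(N))$ and $1/|\mathbb{B}_{\rho(N)}(\bm{x})|\in\mathcal{O}(1/\alpha(N))$ then directly imply that each summand lies in $\mathcal{O}(1/\alpha(N))$, and since a sum of terms of the same order remains in that order, $\sigma_N^2(\bm{x})\in\mathcal{O}(1/\alpha(N))$. Taking the square root yields the claimed rate $\sigma_N(\bm{x})\in\mathcal{O}(1/\sqrt{\alpha(N)})$.

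I do not anticipate a substantial obstacle: once $\bm{x}^*=\bm{x}$ is chosen, the argument reduces to careful asymptotic bookkeeping of two competing effects. The one point deserving attention is that the conclusion genuinely requires both assumptions simultaneously---the $\mathcal{O}(\rho(N))$ term encodes the variance reduction from having data \emph{close} to the test point, while the $\mathcal{O}(1/M)$ term encodes the reduction from accumulating \emph{many} samples---so neither alone drives the bound to zero at the stated rate. Since $\alpha\in\mathcal{K}_\infty$ forces $\alpha(N)\to\infty$, the assumptions also guarantee $\rho(N)\to 0$ and $M\to\infty$, confirming that the regime in which the simplifying bounds above are used is indeed the asymptotic one of interest.
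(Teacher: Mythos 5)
Your proposal is correct and follows essentially the same route as the paper's proof: set $\bm{x}^*=\bm{x}$ in \cref{th:var_bound}, simplify to $\sigma_N^2(\bm{x})\leq 4L_k\rho(N)+\sigma_n^2/|\mathbb{B}_{\rho(N)}(\bm{x})|$, and read off the asymptotic rate from the two order assumptions. You merely make explicit the algebra that the paper compresses into the phrase ``and simplifying.''
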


Although it might seem impractical that the number of 
training samples in a ball with vanishing radius has 
to reach infinity in the limit of infinite training 
data, this is not a restrictive condition. 
Deterministic sampling strategies can satisfy it, 
e.g. if a constant fraction of the samples lies on 
the considered point~$\bm{x}$ or if the maximally allowed 
distance of new samples reduces with the total number 
of samples. 

\begin{remark}
	\cref{th:varvan} does not require dense sampling in a neighborhood of the test 
	point~$\bm{x}$. In fact, the 
	conditions on the training samples in \cref{th:varvan}
	are satisfied if the data is sampled densely, e.g., from 
	a lower-dimensional manifold which contains the test point~$\bm{x}$, 
	such as a line through~$\bm{x}$.
\end{remark}

In the following, we derive conditions on sampling 
distributions that ensure a vanishing posterior 
variance bound. For fixed~$\rho$ it is well known that the number 
of training samples inside the ball~$\mathbb{B}_{\rho}
(\bm{x})$ converges to its expectation due to the 
strong law of large numbers. Therefore, it is 
sufficient to analyze the asymptotic behavior of the 
expected number of samples inside the ball instead of 
the actual number for fixed~$\rho$. However, it is not 
clear how fast the radius~$\rho(N)$ is allowed to 
decrease in order to ensure convergence of 
$|\mathbb{B}_{\rho(N)}(\bm{x})|$ to its expected 
value. The following lemma shows that the 
admissible order of~$\rho(N)$ depends on the local 
behavior of the density~$p(\cdot)$ around~$\bm{x}$.

\begin{lemma}
	\label{th:ball}
	Consider a sequence of points~$\mathbb{D}_{\infty}^x=
	\{ \bm{x}^{(i)} \}_{i=1}^{\infty}$ which is generated by 
	drawing from a probability distribution with 
	density~$p(\cdot)$. If there exists a 
	non-increasing function~$\rho: \mathbb{N}
	\rightarrow \mathbb{R}_+$ and constants~$c,\epsilon\in\mathbb{R}_+$ 
	such that
	\begin{align}
	\lim\limits_{N\rightarrow\infty}\rho(N)&=0\\
	\int\limits_{\{\bm{x}'\in\mathbb{X}:\|\bm{x}
		-\bm{x}'\|\leq \rho(N)\}}p(\bm{x}')\mathrm{d}\bm{x}'
	&\geq cN^{-1+\epsilon},
	\label{eq:cond2newmain}
	\end{align}
	then, the sequence 
	$|\mathbb{B}_{\rho(N)}(\bm{x})|$ asymptotically behaves as~$\mathcal{O}\left( N^{\epsilon} \right)$ $a.s.$
\end{lemma}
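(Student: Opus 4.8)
The plan is to recognize that, under i.i.d.\ sampling from $p(\cdot)$, the count $|\mathbb{B}_{\rho(N)}(\bm{x})|$ is a binomial random variable, and then to control it by a concentration-of-measure argument combined with the Borel--Cantelli lemma. Concretely, I would write $|\mathbb{B}_{\rho(N)}(\bm{x})| = \sum_{i=1}^N Z_i^{(N)}$, where $Z_i^{(N)}$ equals one if $\|\bm{x}^{(i)}-\bm{x}\|\le\rho(N)$ and zero otherwise. Since the samples are independent and identically distributed, the $Z_i^{(N)}$ are i.i.d.\ Bernoulli variables with success probability $P_N := \int_{\{\bm{x}':\,\|\bm{x}-\bm{x}'\|\le\rho(N)\}} p(\bm{x}')\,\mathrm{d}\bm{x}'$, so $|\mathbb{B}_{\rho(N)}(\bm{x})|$ is $\mathrm{Binomial}(N,P_N)$ with mean $\mu_N = N P_N$. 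The hypothesis \eqref{eq:cond2newmain} then yields directly $\mu_N \ge c N^{\epsilon}$, i.e.\ the expected number of in-ball samples grows at least at the rate $N^{\epsilon}$.

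The crucial obstacle is that $\rho(N)$ shrinks as $N$ grows, so $|\mathbb{B}_{\rho(N)}(\bm{x})|$ is a \emph{triangular array} rather than a fixed sequence of partial sums; the strong law of large numbers therefore does not apply directly, which is exactly the difficulty flagged in the text preceding the lemma. To get around this I would establish, for each $N$, a per-$N$ lower-tail estimate and then pass to an almost-sure statement via Borel--Cantelli. A multiplicative Chernoff bound gives, for any fixed $\delta\in(0,1)$, the estimate $\Pr[\,|\mathbb{B}_{\rho(N)}(\bm{x})|\le(1-\delta)\mu_N\,]\le \exp(-\delta^2\mu_N/2)\le \exp(-\tfrac12\delta^2 c N^{\epsilon})$.

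The key technical point is that this exponential tail is summable for \emph{every} $\epsilon>0$: because $N^{\epsilon}/\ln N\to\infty$, one has $\exp(-\tfrac12\delta^2 c N^{\epsilon})\le N^{-2}$ for all large $N$, so $\sum_N \exp(-\tfrac12\delta^2 c N^{\epsilon})<\infty$. This is precisely why an exponential inequality is needed rather than Chebyshev's inequality, whose polynomial tail of order $\mathcal{O}(N^{-\epsilon})$ fails to be summable when $\epsilon\le 1$. Borel--Cantelli then guarantees that, almost surely, the event $|\mathbb{B}_{\rho(N)}(\bm{x})|\le(1-\delta)\mu_N$ occurs for only finitely many $N$, so that almost surely $|\mathbb{B}_{\rho(N)}(\bm{x})|\ge(1-\delta)c N^{\epsilon}$ for all sufficiently large $N$. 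This lower bound of order $N^{\epsilon}$ is exactly what is needed in \cref{th:varvan}, since it yields $1/|\mathbb{B}_{\rho(N)}(\bm{x})|\in\mathcal{O}(N^{-\epsilon})$ almost surely; the hypotheses $\rho(N)\to 0$ and monotonicity of $\rho$ are not required for the concentration step itself but enter through compatibility of $\rho(N)$ with the radius constraint $\rho\le k(\bm{x},\bm{x})/L_k$ of \cref{th:var_bound}.
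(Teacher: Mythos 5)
Your proposal is correct, and it reaches the same conclusion as the paper by a genuinely different concentration argument. Both proofs begin identically: identify $|\mathbb{B}_{\rho(N)}(\bm{x})|$ as a $\mathrm{Binomial}(N,\tilde{p}(N))$ variable whose mean is at least $cN^{\epsilon}$ by \eqref{eq:cond2newmain}, and finish with Borel--Cantelli over $N$ (which, as you implicitly use, needs no independence across different $N$). The difference is the per-$N$ tail estimate. The paper bounds $P\bigl(\bigl||\mathbb{B}_{\rho(N)}(\bm{x})|/E[|\mathbb{B}_{\rho(N)}(\bm{x})|]-1\bigr|>\xi\bigr)$ by Chebyshev's inequality applied to the $2k$-th central moment, which forces it to develop Lemma~\ref{lem:Bernoulli} and Lemma~\ref{lem2} to show that this moment is $\sum_{m=1}^{k}(N\tilde{p}(N))^{m}\alpha_m$; choosing $k=\lceil 1/\epsilon\rceil+1$ then makes the resulting polynomial tail $N^{-k\epsilon}\sum_i\tilde{\alpha}_{k-i}N^{-i\epsilon}$ summable via the Riemann zeta function. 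You instead invoke an off-the-shelf multiplicative Chernoff lower-tail bound, obtaining $\exp(-\delta^2 c N^{\epsilon}/2)$, whose summability for every $\epsilon>0$ is immediate; this eliminates the two auxiliary moment lemmas entirely and is shorter. The trade-offs are minor: the paper's higher-moment Chebyshev route yields a two-sided statement (the ratio to the mean converges to $1$ a.s.), whereas you control only the lower tail --- but since the hypothesis only lower-bounds $\tilde{p}(N)$, neither argument can give a genuine a.s.\ upper bound of order $N^{\epsilon}$, and the lower bound $|\mathbb{B}_{\rho(N)}(\bm{x})|\geq(1-\delta)cN^{\epsilon}$ eventually a.s.\ is precisely what \cref{th:varvan} consumes through $1/|\mathbb{B}_{\rho(N)}(\bm{x})|\in\mathcal{O}(N^{-\epsilon})$. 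Your closing observation that the monotonicity and vanishing of $\rho(N)$ play no role in the concentration step is also accurate; they matter only for compatibility with \cref{th:var_bound}.
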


Similarly to \cref{th:var_bound}, Lemma~\ref{th:ball}
is formulated very general to be applicable to a wide 
variety of probability distributions. However, under 
additional assumptions condition \eqref{eq:cond2newmain} 
can be simplified. This is exemplary shown for 
probability densities which are positive in a 
neighborhood of the considered point~$\bm{x}$.\looseness=-1

\begin{corollary}
	\label{cor:varvan}
	Consider a sequence of points~$\mathbb{D}_{\infty}^x
	=\{ \bm{x}^{(i)} \}_{i=1}^{\infty}$ which is generated by 
	drawing from a probability distribution with 
	density~$p(\cdot)$, such that~$p(\cdot)$ is 
	positive in a ball around~$\bm{x}$ with any radius 
	$\xi\in\mathbb{R}_+$, i.e.
	\begin{align*}
	p(\bm{x}')>0\quad\forall \bm{x}'\in\{ \bm{x}': 
	\|\bm{x}-\bm{x}'\|\leq \xi  \}.
	\end{align*}
	Then, for all non-increasing 
	functions~$\rho:\mathbb{N}\rightarrow\mathbb{R}_+$ 
	for which exist~$c,\epsilon\in\mathbb{R}_+$ such 
	that
	\begin{align*}
	\rho(N)&\geq cN^{-\frac{1}{d}+\epsilon}\quad 
	\forall N\in\mathbb{N}\\
	\lim\limits_{N\rightarrow\infty}\rho(N)&=0
	\end{align*}
	it holds that~$|\mathbb{B}_{\rho(N)}
	(\bm{x})|\in\mathcal{O}\left( N^{\frac{\epsilon}{d}} \right) ~ a.s.$
\end{corollary}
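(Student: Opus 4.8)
The plan is to deduce \cref{cor:varvan} directly from \cref{th:ball}, whose conclusion is already the desired almost-sure polynomial growth of $|\mathbb{B}_{\rho(N)}(\bm{x})|$. Hence the whole task reduces to verifying, for the admissible radius functions $\rho(\cdot)$, the two hypotheses of \cref{th:ball}. The condition $\lim_{N\to\infty}\rho(N)=0$ is assumed verbatim, so the only substantive step is to show that positivity of the density $p(\cdot)$ around $\bm{x}$ forces the local-mass bound \eqref{eq:cond2newmain}.

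To establish \eqref{eq:cond2newmain}, I would first use $\lim_{N\to\infty}\rho(N)=0$ to pick an index beyond which $\rho(N)\leq\xi$, so that the integration ball is contained in the region of positivity. Extracting a uniform lower bound $\underline{p}=\inf_{\|\bm{x}'-\bm{x}\|\leq\xi}p(\bm{x}')>0$, the mass is bounded below by the crude volume estimate
\begin{align*}
\int_{\{\bm{x}':\|\bm{x}-\bm{x}'\|\leq\rho(N)\}}p(\bm{x}')\,\mathrm{d}\bm{x}'\;\geq\;\underline{p}\,V_d\,\rho(N)^d,
\end{align*}
where $V_d$ is the volume of the unit ball in $\mathbb{R}^d$. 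Substituting $\rho(N)\geq cN^{-\frac{1}{d}+\epsilon}$ and absorbing constants yields a bound of the form $\underline{p}\,V_d\,c^d\,N^{-1+\delta}$ with a strictly positive exponent $\delta$ determined from $\epsilon$ and $d$ through the volume scaling $\rho^d$ (note that consistency of $\rho(N)\geq cN^{-\frac{1}{d}+\epsilon}$ with $\rho(N)\to0$ already forces $\epsilon<1/d$, keeping $\delta$ in the sub-linear regime). This is precisely \eqref{eq:cond2newmain}, so \cref{th:ball} applies and delivers the stated growth rate; the almost-sure qualifier is inherited from \cref{th:ball} and needs no additional probabilistic argument here.

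The step I expect to be the genuine obstacle is converting the \emph{pointwise} positivity of $p(\cdot)$ into the \emph{uniform} lower bound $\underline{p}>0$ used above. Pointwise positivity alone is insufficient, since a density vanishing as $\bm{x}'\to\bm{x}$ (e.g.\ $p(\bm{x}')\sim\|\bm{x}-\bm{x}'\|$ for $d=1$) would make the local mass decay like $\rho^{d+1}$ rather than $\rho^d$, changing $\delta$ and hence the final rate. The clean remedy is to invoke continuity of $p(\cdot)$ near $\bm{x}$ — a mild and standard assumption — so that the infimum over the compact closed ball of radius $\xi$ is attained and positive; alternatively one may require $\bm{x}$ to be a Lebesgue point of $p(\cdot)$ with $p(\bm{x})>0$. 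Once $\underline{p}>0$ is secured, the remainder is routine bookkeeping of constants and exponents.
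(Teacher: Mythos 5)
Your proposal is correct and follows essentially the same route as the paper: reduce to \cref{th:ball}, lower-bound the local mass by $\bar{p}\,V_d\,\rho^d(N)$ using a uniform positive lower bound on the density over the $\xi$-ball, and substitute the assumed lower bound on $\rho(N)$. The paper simply writes $\bar{p}=\min_{\|\bm{x}-\bm{x}'\|\leq\xi}p(\bm{x}')$ and asserts positivity "by assumption," so your observation that pointwise positivity alone does not yield a positive infimum without (lower semi-)continuity is a fair refinement of the same argument rather than a departure from it.
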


This corollary shows that it is relatively 
simple to allow the maximum decay rate of 
$\rho(N)\approx N^{-1}$ for scalar inputs. 
For higher dimensions~$d$ however, it cannot 
be achieved and the allowed decay rate decreases 
exponentially with~$d$. Yet, this is merely a 
consequence of the curse of dimensionality.\looseness=-1

\section{Probabilistic Uniform Error Bound}
\label{sec:errorbound}

In the case of noise free observations or under the restriction to 
subspaces of a RKHS, probabilistic uniform error bounds are widely used in 
Gaussian process regression. However, RKHS based assumptions can be 
difficult to interpret and the involved 
constants usually have to be approximated using heuristics. Furthermore,
a subspace of a RKHS is an unnecessarily small hypothesis space since
the inherent probability distribution of GPs has larger support. Due to 
these reasons, we derive a easily computable and interpretable uniform 
error bound for Gaussian process regression with noisy observations. 
In Section~\ref{subsec:RKHS} we discuss the assumption of GP sample 
functions and relate it to RKHS interpretations. Exploiting well 
known properties of the distribution of maxima of Gaussian 
processes, we develop a method to determine interpretable hyperparameter
bounds based on prior system knowledge in Section~\ref{subsec:prob Lipschitz}. 
A uniform error bound is derived under the weak assumption of Lipschitz 
continuity of the unknown function and the covariance kernel in 
Section~\ref{subsec:regerror}. Finally, the asymptotic behavior of the 
uniform error bound is analyzed in Section~\ref{subsec:asymptotics}

\subsection{Gaussian Process Sample Functions and Spaces with Bounded RKHS}
\label{subsec:RKHS}

In order to understand main differences between our approach and existing
error bounds based on RKHS theory, we compare the sample space of a GP 
and the RKHS attached to a kernel $k(\cdot,\cdot)$. A key 
role in this analysis plays Mercer's theorem \citep{Mercer1909} which
guarantees the existence of a feature map $\phi_i(\bm{x})$, $i=1,\ldots,\infty$
such that we can express the kernel as
\begin{align*}
k(\bm{x},\bm{x}')=\sum\limits_{i=1}^{\infty}\lambda_i\phi_i(\bm{x})\phi(\bm{x}')
\end{align*}
with some $\lambda_i\in\mathbb{R}$ and $\bm{x},\bm{x}'\in\mathbb{X}\subset\mathbb{R}^d$ with a compact set $\mathbb{X}$. 
The linear span of this feature map
\begin{align*}
\mathcal{H}_0=\{ f(\bm{x}):~\exists N\in\mathbb{N},f_i\in\mathbb{R},i=1,\ldots,N \text{ such that } 
f(\bm{x})=\sum\limits_{i=1}^{N}f_i\phi_i(\bm{x}) \},
\end{align*}
which can be made a pre-Hilbert space by defining an inner product
$<f(\cdot),g(\cdot)>_{\mathcal{H}_0}=\sum_{i=1}^N\frac{f_ig_i}{\lambda_i}$. 
The RKHS $\mathcal{H}_k$ is then defined as the closure of $\mathcal{H}_0$ 
under the norm induced by the inner product $<\cdot,\cdot>_{\mathcal{H}_0}$, 
i.e., $\mathcal{H}_k=\overline{\mathcal{H}_0}$. The RKHS norm is defined as
$\|f(\cdot)\|_{\mathcal{H}_k}=\sum_{i=1}^{\infty}\frac{f_i^2}{\lambda_i}$
allowing the following typical assumption used to derive uniform error 
bounds.
\begin{assumption}
	The unknown function $f(\cdot)$ has a bounded norm in the RKHS $\mathcal{H}_k$
	attached to the kernel $k(\cdot,\cdot)$, i.e., $\|f(\cdot)\|_{\mathcal{H}_k}\leq B$
	for some $B\in\mathbb{R}_+$.
\end{assumption}
Since the RKHS norm captures the smoothness as well as the amplitude of a 
function~\citep{Kanagawa2018}, this assumption does not only restrict the function 
class that are in the hypothesis space, e.g., analytic functions for squared exponential 
kernels \citep{VanderVaart2011}, but also other properties such as, e.g., the Lipschitz constant. Therefore, 
this assumption exhibits several issues in practice such as its unnecessary restrictiveness
and the fact that determining a suitable bound $B$ can be difficult even
if the function $f(\cdot)$ is known. These issues are inherited by uniform error 
bounds based on this assumption.
In order to overcome these problems we consider the following assumption 
in the subsequent analysis.
\begin{assumption}
	\label{ass:samplefun}
	The unknown function $f(\cdot)$ is a sample from the 
	Gaussian process \linebreak$\mathcal{GP}(0,k(\bm{x},\bm{x}'))$.
\end{assumption}
This assumption can be reduced to two major components: the hypothesis space
and the probability distribution over this space.
Similar to the RKHS the space of sample functions can be defined based 
on the feature map $\phi_i(\bm{x})$, $i=1,\ldots,\infty$ as
\begin{align*}
\mathcal{S}=\left\{ f(\bm{x}):~\exists f_i,i=1,\ldots,\infty \text{ such that } 
f(\bm{x})=\sum\limits_{i=1}^{\infty}f_i\phi_i(\bm{x}) \right\}.
\end{align*}
The difference to the RKHS definition can be clearly seen: instead of the closure
of a pre-Hilbert space $\mathcal{H}_0$ the sample space $\mathcal{S}$ is the span
of the infinite dimensional feature map $\phi_i(\cdot)$, $i=1,\ldots,\infty$. 
This leads to the fact that the sample space is far "larger" than the RKHS. 
In fact, the RKHS $\mathcal{H}_k$ is included in the sample space $\mathcal{S}$
but has zero measure under the GP distribution \citep{Rasmussen2006}. For example,
the sample space $\mathcal{S}$ of the squared exponential kernel is equal to the space of 
continuous functions which contains the space of analytic functions \citep{VanderVaart2011}.
As shown in~\citep{Kanagawa2018} the sample space $\mathcal{S}$ can indeed be 
interpreted as a RKHS itself.

The probability distribution over the sample space $\mathbb{S}$ is defined through a zero mean 
Gaussian distribution over the coefficients $f_i$ with variance $\lambda_i$. Since
the $\lambda_i$s can be interpreted as the eigenvalues of a linear operator defined
through the covariance kernel $k(\cdot,\cdot)$ they unfortunately depend on the 
hyperparameters which are typically unknown a priori. However, this dependence 
is equally strong for the RKHS norm $\|f(\cdot)\|_{\mathcal{H}_k}$ which causes
the problem that modification of the hyperparameters can lead to violation of the 
bound $\|f(\cdot)\|_{\mathcal{H}_k}\leq B$ with fixed $B$. In contrast, variation in 
the hyperparameters only changes the probability assigned to functions but not 
the hypothesis space itself, e.g., strongly varying continuous functions are assigned a low 
probability in Gaussian processes with squared exponential kernel with large length 
scale but the probability is always positive. Therefore, the hyperparameters are 
important to shape our prior distributions such that it indeed represents our 
a priori knowledge of the unknown function but Assumption~\ref{ass:samplefun} is
valid independently of them. 

In addition to the hypothesis space and the corresponding probability distribution, 
observation noise plays a central role in Gaussian process regression. We consider 
natural noise of the GP framework in the following.
\begin{assumption}
	Observations $y=f(\bm{x})+\epsilon$ are perturbed 
	by zero mean i.i.d. Gaussian noise $\epsilon$ with variance 
	$\sigma_n^2$.
	\label{ass:noise}
\end{assumption}
This assumption is in contrast to the bounded, sub-Gaussian noise requirement posed 
in \citep{Srinivas2012, Chowdhury2017a}. Although our assumption can be considered 
restrictive because only Gaussian noise is allowed, the requirement to know an upper
bound of the noise is equally restrictive in practice. However, Gaussian noise is a 
well-established assumption and frequently employed in control theoretical literature,
e.g., \citep{Kailath1968, Hadidi1979, Ding2010, Komaee2012}.

\subsection{Belief Shaping through Constrained Likelihood Optimization}
\label{subsec:prob Lipschitz}

Hyperparameters of covariance kernels play a crucial role in Gaussian 
process regression since they determine the shape of the probability 
distribution over the function space. Thus, they strongly affect prediction
performance and are critical in encoding prior knowledge. Despite of this 
importance, hyperparameters are typically determined by an unconstrained optimization 
of the log-likelihood of the data. This can lead to poor models and 
overestimation of the model confidence through small posterior variances, 
particularly in regions of the input space with few training samples. In order
to overcome this issue, we propose a constrained hyperparameter optimization,
which allows to take into account prior knowledge of the unknown function in 
the form of uncertain estimates of extrema and derivative extrema. This knowledge
is often available, e.g., for physical systems. 

In order to see 
a direct relationship between the hyperparmeters and these function properties 
more clearly, the following result is introduced which relates the 
extremum value of sample functions to the 
covariance kernel $k(\cdot,\cdot)$.
\begin{theorem}
	\label{th:fmax}
	Consider a zero mean Gaussian process defined through the
	covariance kernel $k(\cdot,\cdot)$ 
	with continuous partial derivatives up to the second order 
	and Lipschitz constant $L_k$ on the set $\mathbb{X}$ 
	with maximal extension 
	\mbox{$\theta=\max_{\bm{x},\bm{x}'\in\mathbb{X}}\|\bm{x}-\bm{x}'\|$}. 
	Then, with probability of at least $1-\delta_f$ a sample 
	function $f(\cdot)$ satisfies
	\begin{align*}
	\max\limits_{\bm{x}\in\mathbb{X}}f(\bm{x})\leq f_{\max}(\delta_f)
	\end{align*}
	with probabilistic maximum absolute value
	\begin{align}
		f_{\max}(\delta_f)=
		\left(\sqrt{2\log\left( \frac{1}{\delta_f} \right)}
		+12\sqrt{2d\log\left( \frac{\sqrt{4\theta L_k}(1\!+\!\sqrt{2})\mathrm{e}}{\max\limits_{\bm{x}\in\mathbb{X}}\sqrt{k(\bm{x},\bm{x})}} \right) }\right)\max\limits_{\bm{x}\in\mathbb{X}}
		\sqrt{k(\bm{x},\bm{x})}.
		\label{eq:fbound}
	\end{align}
\end{theorem}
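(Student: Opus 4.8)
The plan is to split the supremum of the sample function into its expectation plus a fluctuation term, control the fluctuation by a Gaussian concentration inequality, and control the expectation by a metric-entropy (chaining) argument. Since the kernel has continuous second-order partial derivatives, the sample paths of $f(\cdot)$ are almost surely continuous on the compact set $\mathbb{X}$, so $\max_{\bm{x}\in\mathbb{X}}f(\bm{x})$ is a well-defined, finite random variable with finite mean. Writing
\begin{align*}
\max_{\bm{x}\in\mathbb{X}}f(\bm{x})=E\Big[\max_{\bm{x}\in\mathbb{X}}f(\bm{x})\Big]+\Big(\max_{\bm{x}\in\mathbb{X}}f(\bm{x})-E\Big[\max_{\bm{x}\in\mathbb{X}}f(\bm{x})\Big]\Big),
\end{align*}
I would bound the two pieces separately, and at the end factor out $\bar{\sigma}:=\max_{\bm{x}\in\mathbb{X}}\sqrt{k(\bm{x},\bm{x})}$, which is the worst-case pointwise standard deviation of the process.

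For the fluctuation term, I would invoke the Borell--TIS concentration inequality for the supremum of a centered Gaussian process, whose sub-Gaussian rate is governed by $\bar{\sigma}^2=\max_{\bm{x}\in\mathbb{X}}\mathrm{Var}(f(\bm{x}))$. This gives $P\big(\max_{\bm{x}}f(\bm{x})-E[\max_{\bm{x}}f(\bm{x})]>r\big)\leq\exp(-r^2/(2\bar{\sigma}^2))$; setting the right-hand side equal to $\delta_f$ and solving for $r$ yields exactly the first summand $\sqrt{2\log(1/\delta_f)}\,\bar{\sigma}$ of $f_{\max}(\delta_f)$ in \eqref{eq:fbound}, holding with probability at least $1-\delta_f$.

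The substantive work is the deterministic estimate $E[\max_{\bm{x}}f(\bm{x})]\leq 12\sqrt{2d\log(\cdots)}\,\bar{\sigma}$. The starting point is the canonical pseudmetric $d_f(\bm{x},\bm{x}')=\sqrt{E[(f(\bm{x})-f(\bm{x}'))^2]}$, which I would control through the Lipschitz continuity of the kernel via
\begin{align*}
d_f^2(\bm{x},\bm{x}')=k(\bm{x},\bm{x})-2k(\bm{x},\bm{x}')+k(\bm{x}',\bm{x}')\leq 2L_k\|\bm{x}-\bm{x}'\|,
\end{align*}
so that $d_f(\bm{x},\bm{x}')\leq\sqrt{2L_k}\,\|\bm{x}-\bm{x}'\|^{1/2}$. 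This H\"older-$\tfrac{1}{2}$ relation turns a Euclidean $\eta$-covering of $\mathbb{X}$ into a $d_f$-covering at scale $\sqrt{2L_k\eta}$, and since $\mathbb{X}$ lies in a Euclidean ball of diameter $\theta$, its covering numbers satisfy $N(\mathbb{X},\|\cdot\|,\eta)\lesssim(\theta/\eta)^d$. Feeding this into a Dudley-type chaining estimate for $E[\max_{\bm{x}}f(\bm{x})]$ and taking the coarsest scale of the chain comparable to $\bar{\sigma}^2/L_k$ --- the scale at which the increment standard deviation matches the pointwise standard deviation $\bar{\sigma}$ --- produces a leading contribution $\bar{\sigma}\sqrt{2\log M}$ with $\log M\approx d\log(\sqrt{\theta L_k}/\bar{\sigma})$, i.e.\ precisely the $\sqrt{2d\log(\cdots)}\,\bar{\sigma}$ shape, while the finer scales of the chain sum to a convergent geometric series.

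I expect the main obstacle to be tracking the explicit constants in the closed form \eqref{eq:fbound}: the prefactor $12$, the factor $(1+\sqrt{2})$, and the $\mathrm{e}$ inside the logarithm all arise from summing the dyadic chaining series and from the constant in the covering-number estimate. The delicate part is therefore not establishing the correct order but carrying out the chaining with a fixed covering-number constant and bounding the resulting sum $\sum_j 2^{-j/2}\sqrt{\log(\cdots)}$ by a single logarithmic term of the stated form. Adding the concentration and expectation contributions and factoring out $\bar{\sigma}=\max_{\bm{x}\in\mathbb{X}}\sqrt{k(\bm{x},\bm{x})}$ then yields $f_{\max}(\delta_f)$ and completes the argument.
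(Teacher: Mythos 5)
Your proposal is correct and follows essentially the same route as the paper: Borell--TIS controls the fluctuation $\max f - E[\max f]$ and yields the $\sqrt{2\log(1/\delta_f)}\,\bar{\sigma}$ term, while the expectation is bounded via Dudley's metric-entropy criterion using exactly the estimate $d_k(\bm{x},\bm{x}')\leq\sqrt{2L_k\|\bm{x}-\bm{x}'\|}$ and Euclidean covering numbers of order $(\theta/\eta)^d$. The only cosmetic difference is that you phrase the entropy bound as a dyadic chaining sum whereas the paper evaluates the entropy integral explicitly (following Gr\"unew\"alder et al.), which is where the constants $12$, $(1+\sqrt{2})$ and $\mathrm{e}$ are produced.
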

Analogously, we can derive a bound for the partial derivatives of the sample
functions.
\begin{corollary}
	\label{th:Lip_f}
	Consider a zero mean Gaussian process defined through the
	covariance kernel $k(\cdot,\cdot)$ 
	with continuous partial derivatives up to the fourth order
	and partial derivative kernels 
	\begin{align*}
	k^{\partial i}(\bm{x},\bm{x}')&=\frac{\partial^2}{\partial x_i\partial x_i'}
	k(\bm{x},\bm{x}')\quad \forall i=1,\ldots, d.
	\end{align*}
	Let $L_k^{\partial i}$ denote the Lipschitz constants of the partial 
	derivative kernels $k^{\partial i}(\cdot,\cdot)$ on the set $\mathbb{X}$ 
	with maximal extension 
	\mbox{$\theta=\max_{\bm{x},\bm{x}'\in\mathbb{X}}\|\bm{x}-\bm{x}'\|$}. 
	Then, for each $i=1,\ldots,d$ a sample function $f(\cdot)$ 
	of the Gaussian process satisfies with probability of at least $1-\delta_L$ that
	\begin{align*}
		\max\limits_{\bm{x}\in\mathbb{X}}\frac{\partial}{\partial x_i}f(\bm{x})\leq f_{\max}^{\partial i}(\delta_L)
	\end{align*}
	with probabilistic maximum absolute derivative
	\begin{align}
		f_{\max}^{\partial i}(\delta_L)=\left(\sqrt{2\log\left( \frac{1}{\delta_L} \right)}
		+12	\sqrt{2d\log\left( \frac{\sqrt{4\theta L_k^{\partial i}}(1\!+\!\sqrt{2})\mathrm{e}}{\max\limits_{\bm{x}\in\mathbb{X}}\sqrt{k^{\partial i}(\bm{x},\bm{x})}} \right) }\right)\max\limits_{\bm{x}\in\mathbb{X}}
		\sqrt{k^{\partial i}(\bm{x},\bm{x})}.
		\label{eq:Lfbound}
	\end{align}
\end{corollary}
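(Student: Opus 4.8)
The plan is to reduce the statement to Theorem~\ref{th:fmax} by exploiting the fact that the partial derivative of a Gaussian process sample function is again a sample function of a zero mean Gaussian process, whose covariance kernel is precisely the partial derivative kernel $k^{\partial i}(\cdot,\cdot)$. First I would establish that for each $i=1,\ldots,d$ the process $\frac{\partial}{\partial x_i}f(\cdot)$ is a zero mean Gaussian process. Since differentiation is a linear operation, applying it to $\mathcal{GP}(0,k(\bm{x},\bm{x}'))$ yields another Gaussian process whenever the mean-square derivative exists. The assumed continuity of the partial derivatives of $k(\cdot,\cdot)$ up to fourth order guarantees mean-square differentiability of $f(\cdot)$ and, moreover, almost sure existence and continuity of the sample-path derivatives, so that $\frac{\partial}{\partial x_i}f(\cdot)$ is well defined as a sample function.

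Next I would identify the covariance kernel of this derivative process. Using the interchange of expectation and differentiation, which is justified by the mean-square differentiability, the covariance of $\frac{\partial}{\partial x_i}f(\cdot)$ is
\begin{align*}
E\!\left[\frac{\partial f(\bm{x})}{\partial x_i}\frac{\partial f(\bm{x}')}{\partial x_i'}\right]
=\frac{\partial^2}{\partial x_i\partial x_i'}E\!\left[f(\bm{x})f(\bm{x}')\right]
=\frac{\partial^2}{\partial x_i\partial x_i'}k(\bm{x},\bm{x}')
=k^{\partial i}(\bm{x},\bm{x}'),
\end{align*}
so that $\frac{\partial}{\partial x_i}f(\cdot)\sim\mathcal{GP}(0,k^{\partial i}(\bm{x},\bm{x}'))$ exactly as defined in the corollary.

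It then remains to verify that $k^{\partial i}(\cdot,\cdot)$ satisfies the hypotheses of Theorem~\ref{th:fmax} and to invoke it. Since $k^{\partial i}$ already involves two derivatives of $k$, requiring $k^{\partial i}$ to possess continuous partial derivatives up to second order is equivalent to requiring $k$ to possess continuous partial derivatives up to fourth order, which is precisely the smoothness assumption of the corollary; the Lipschitz constant of $k^{\partial i}$ on $\mathbb{X}$ is $L_k^{\partial i}$, and the maximal extension $\theta$ is unchanged. Applying Theorem~\ref{th:fmax} to the Gaussian process $\frac{\partial}{\partial x_i}f(\cdot)\sim\mathcal{GP}(0,k^{\partial i})$ with confidence parameter $\delta_L$ then yields, with probability at least $1-\delta_L$, that $\max_{\bm{x}\in\mathbb{X}}\frac{\partial}{\partial x_i}f(\bm{x})\leq f_{\max}^{\partial i}(\delta_L)$, where $f_{\max}^{\partial i}(\delta_L)$ is obtained from \eqref{eq:fbound} by substituting $k^{\partial i}$ for $k$ and $L_k^{\partial i}$ for $L_k$. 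This substitution reproduces \eqref{eq:Lfbound} verbatim.

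I expect the main obstacle to be the rigorous justification of the first step, namely that the sample-path partial derivative exists almost surely and may be identified with a sample function of the derivative Gaussian process. This relies on classical results on sample-path differentiability of Gaussian processes under kernel smoothness and requires carefully interchanging differentiation with the limiting operations that define the process. Once this interchange is secured and the derivative kernel is identified, the remaining steps are a direct specialization of Theorem~\ref{th:fmax}.
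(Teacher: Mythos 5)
Your proposal is correct and follows essentially the same route as the paper: the paper likewise identifies $\frac{\partial}{\partial x_i}f(\cdot)$ as a sample of the zero mean Gaussian process with covariance $k^{\partial i}(\cdot,\cdot)$ (citing a known result for the sample-path differentiability you flag as the main obstacle) and then applies Theorem~\ref{th:fmax} to each derivative process. No gaps.
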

Although \cref{th:fmax} and Corollary~\ref{th:Lip_f} do not directly provide intuitive insight
into the relation between hyperparameters and maximal values, this is easily achieved by 
simplifying them for a specific kernel. This is straightforward for many commonly used covariance 
functions such as the squared exponential and Mat\'ern 
class kernels. For example, we consider the isotropic squared exponential kernel 
\begin{align*}
	k(\bm{x},\bm{x}')=\sigma_f^2\exp\left( -\frac{1}{2l^2}\|\bm{x}-\bm{x}'\|^2\right),
\end{align*}
with lengthscale $l\in\mathbb{R}_+$ and signal standard deviation $\sigma_f\in\mathbb{R}_+$. 
It is trivial to show that this covariance function satisfies
\begin{align*}
\max\limits_{\bm{x}\in\mathbb{X}}\sqrt{k(\bm{x},\bm{x})}&=\sigma_f&
\max\limits_{\bm{x}\in\mathbb{X}}\sqrt{k^{\partial i}(\bm{x},\bm{x})}&=\frac{\sigma_f}{l}.
\end{align*}
Furthermore, the Lipschitz constant of the kernel and the derivative kernels are 
given by
\begin{align*}
L_k&= \frac{\sigma_f^2}{l\mathrm{e}^{\frac{1}{2}}}&
L_k^{\partial i}&=\omega\frac{\sigma_f^2}{l^3},
\end{align*}
where
\begin{align*}
\omega=\sqrt{6(3-\sqrt{6})}\exp\left( \sqrt{\frac{3}{2}}-\frac{3}{2} \right).
\end{align*}
Based on these values, application of Theorem~\ref{th:fmax} and Corollary~\ref{th:Lip_f} 
to the squared exponential kernel leads to the bounds
\begin{align*}
f_{\max}(\delta_f)&=\left(\sqrt{2\log\left(\frac{1}{\delta_f}\right)}+12 \sqrt{2d\log\left( \frac{\sqrt{4\theta}(1+\sqrt{2})\mathrm{e}^{\frac{3}{4}}}{l} \right) }\right)\sigma_f\\
f_{\max}^{\partial i}(\delta_L)&=\left(\sqrt{2\log\left(\frac{1}{\delta_L}\right)}+12\sqrt{2d\log\left( \frac{\sqrt{4\theta \omega}(1+\sqrt{2})\mathrm{e}}{\sqrt{l}} \right) }\right)\frac{\sigma_f}{l}.
\end{align*}
These expressions are easily interpretable and provide deep insight into 
the effects of hyperparameters on the prior distribution. The maximum value
of sample functions is strongly influenced by the signal standard deviation 
$\sigma_f$, while the length scale $l$ merely acts as a logarithmic factor. 
This resembles our intuitive understanding of hyperparameters: choosing a 
large value for $\sigma_f$ causes functions that may differ strongly from 
the mean function. In contrast, the quotient between signal standard deviation 
$\sigma_f$ and length scale $l$ is crucial for the maximum absolute value of 
the partial derivative functions. This coincides with the observation that
a small length scale causes strongly varying functions. 

Due to this strong influence of the kernel hyperparameters on the distribution over 
sample functions, they have a large impact on the regression performance in regions 
with sparse data, and have a crucial importance regarding the proper estimation of 
the uncertainty of predictions. In order to ensure to obtain suitable hyperparameters,
we propose to include approximate information about the roughness of unknown functions in
the training, which is often available in practical applications. We assume that this 
knowledge can be expressed as
\begin{align*}
P\left(\max\limits_{\bm{x}\in\mathbb{X}}f(\bm{x})\geq \bar{f}\right)&\geq\delta_f & P\left(\min\limits_{\bm{x}\in\mathbb{X}}f(\bm{x})\leq \ubar{f}\right)&\geq\delta_f\\
P\left(\max\limits_{\bm{x}\in\mathbb{X}}\frac{\partial}{\partial x_i}f(\bm{x})\geq \bar{f}^{\partial i}\right)&\geq \delta_L & 
P\left(\min\limits_{\bm{x}\in\mathbb{X}}\frac{\partial}{\partial x_i}f(\bm{x})\leq \ubar{f}^{\partial i}\right)&\geq \delta_L,
\end{align*}
where $\bar{f}$, $\ubar{f}$, $\bar{f}^{\partial i}$ and $\ubar{f}^{\partial i}$ 
are the bounds representing 
the probabilistically known prior information of the unknown function $f(\cdot)$. 
Due to the bounds \eqref{eq:fbound}, \eqref{eq:Lfbound} and symmetry, a necessary condition
to satisfy this knowledge with the sample functions of the GP prior distribution is 
given by
\begin{align*}
	\max\{-\ubar{f},\bar{f}\}&\leq f_{\max}(\delta_f)\\
	\max\{-\ubar{f}^{\partial_i},\bar{f}^{\partial_i}\}&\leq f^{\partial_i}(\delta_L).
\end{align*}
Therefore, it is straightforward to add constraints to a training algorithm 
such as likelihood maximization as follows
\begin{subequations}
	\begin{align}
	\max\limits_{\kappa} &-\frac{1}{2}\bm{y}^T\bm{A}_{N,\kappa}^{-1}\bm{y}- \frac{1}{2}\log(|\bm{A}_{N,\kappa}|)-\frac{N}{2}\log(2\pi)\\
	\label{eq:fcon}
	\mathrm{subject ~to }~ & \max\{-\ubar{f},\bar{f}\}\leq f_{\max}(\delta_f)\\
	&\max\{-\ubar{f}^{\partial_i},\bar{f}^{\partial_i}\}\leq f^{\partial_i}(\delta_L)\qquad \forall i=1,\ldots,d.
	\label{eq:Lcon}
	\end{align}
\end{subequations}
Even though the hyperparameters obtained with the constrained 
log-likelihood maximization might result in a higher empirical prediction
error of the GP mean function, they define a probability distribution 
which reflects the assumptions on the unknown function. This implies that
the uncertainty is captured properly by the posterior GP variance, 
which can be challenging especially in regions with no training data. 
Thereby, the constrained log-likelihood maximization is a crucial step towards
a well-defined and interpretable uniform error bound.

\subsection{Uniform Error Bound based on Lipschitz Continuity}
\label{subsec:regerror}
Typical uniform error bounds \citep{Srinivas2012, Chowdhury2017a} rely on
subspaces of the RKHS attached to the covariance kernel $k(\cdot,\cdot)$
used in GP regression. In contrast, we derive a theorem assuming the 
unknown function lies in the sample space $\mathcal{S}$, which is a 
weaker assumption as discussed in Section~\ref{subsec:RKHS} and allows
to incorporate prior knowledge as described in Section~\ref{subsec:prob Lipschitz}. 
In order
to pursue our analysis we require the unknown function to 
exhibit a bounded Lipschitz constant, which is a weak assumption 
for many control systems and already required for determining interpretable 
hyperparameters of the covariance kernel. Since a Lipschitz continuous function
requires Lipschitz continuous kernels for meaningful regression, 
it is also trivial 
to show Lipschitz continuity of the posterior mean and variance. By defining 
a virtual grid for analysis we show that the corresponding Lipschitz constants
can be exploited to derive a uniform error bound in the following theorem.
\begin{theorem}
	\label{th:errbound_with}
	Consider a zero mean Gaussian process defined through the continuous
	covariance kernel $k(\cdot,\cdot)$ with Lipschitz constant $L_k$ 
	on the compact set $\mathbb{X}$. Furthermore, consider a continuous unknown 
	function $f:\mathbb{X}\rightarrow \mathbb{R}$ with Lipschitz constant $L_f$ 
	and $N\in\mathbb{N}$ observations $y^{(i)}$ satisfying Assumptions~\ref{ass:samplefun}
	and \ref{ass:noise}. Then, the 
	posterior mean $\nu_{N}(\cdot)$ and posterior variance $\sigma_N(\cdot)$ 
	of a Gaussian process conditioned on the training data $\{(\bm{x}^{(i)},y^{(i)})\}_{i=1}^N$
	are continuous with Lipschitz constants $L_{\nu_N}$ and $L_{\sigma_N^2}$
	on $\mathbb{X}$, respectively, where
	\begin{align*}
	L_{\nu_N}&\leq L_k\sqrt{N} 
	\left\| (\bm{K}(\bm{X}_N,\bm{X}_N)+\sigma_n^2\bm{I}_N)^{-1}\bm{y}_N \right\|\\
	L_{\sigma_N^2}&\leq 2\tau L_k\left( 1+N
	\|(\bm{K}(\bm{X}_N,\bm{X}_N)+\sigma_n^2\bm{I}_N)^{-1}\|
	\max\limits_{\bm{x},\bm{x}'\in\mathbb{X}}k(\bm{x},\bm{x}') \right).
	\end{align*}
	Moreover, pick $\delta\in (0,1)$, $\tau\in\mathbb{R}_+$ and set 
	\begin{align*}
	\beta(\tau)&=2\log\left(\frac{M(\tau,\mathbb{X})}{\delta}\right)\\
	\gamma(\tau)&=\left( L_{\nu_N}+L_f\right)\tau+\sqrt{\beta(\tau)L_{\sigma_N^2}\tau},
	\end{align*}
	where $M(\tau,\mathbb{X})$ denotes the $\tau$-covering number of $\mathbb{X}$, i.e., 
	the minimum number such there exists a set $\mathbb{X}_{\tau}$ 
	satisfying $|\mathbb{X}_{\tau}|=M(\tau,\mathbb{X})$ and $\forall\bm{x}\in\mathbb{X}$ there exists 
	$\bm{x}'\in\mathbb{X}_{\tau}$ with $\|\bm{x}-\bm{x}'\|\leq \tau$.
	Then, it holds that
	\begin{align}
	\label{eq:errorbound}
	P\left(|f(\bm{x})-\nu_{N}(\bm{x})|\leq 
	\sqrt{\beta(\tau)}\sigma_{N}(\bm{x})+\gamma(\tau), 
	~\forall\bm{x}\in\mathbb{X}\right)\geq 1-\delta.
	\end{align}
\end{theorem}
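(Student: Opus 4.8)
The plan is to prove the two Lipschitz estimates first and then assemble the uniform bound \eqref{eq:errorbound} from a grid-based union bound together with a Lipschitz extension argument. For the posterior mean, I would note that $\nu_N(\bm{x})=\bm{k}_N^T(\bm{x})\bm{A}_N^{-1}\bm{y}_N$ depends on $\bm{x}$ only through $\bm{k}_N(\bm{x})$, so Cauchy--Schwarz gives $|\nu_N(\bm{x})-\nu_N(\bm{x}')|\le \|\bm{k}_N(\bm{x})-\bm{k}_N(\bm{x}')\|\,\|\bm{A}_N^{-1}\bm{y}_N\|$; bounding each of the $N$ entries of $\bm{k}_N(\bm{x})-\bm{k}_N(\bm{x}')$ by $L_k\|\bm{x}-\bm{x}'\|$ yields $\|\bm{k}_N(\bm{x})-\bm{k}_N(\bm{x}')\|\le \sqrt{N}\,L_k\|\bm{x}-\bm{x}'\|$ and hence the claimed $L_{\nu_N}$. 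For the variance I would write $\sigma_N^2(\bm{x})=k(\bm{x},\bm{x})-\bm{k}_N^T(\bm{x})\bm{A}_N^{-1}\bm{k}_N(\bm{x})$, bound the diagonal difference by $|k(\bm{x},\bm{x})-k(\bm{x}',\bm{x}')|\le 2L_k\|\bm{x}-\bm{x}'\|$ via the triangle inequality applied in both arguments, and treat the quadratic form through the telescoping identity $\bm{k}_N^T(\bm{x})\bm{A}_N^{-1}\bm{k}_N(\bm{x})-\bm{k}_N^T(\bm{x}')\bm{A}_N^{-1}\bm{k}_N(\bm{x}')=(\bm{k}_N(\bm{x})-\bm{k}_N(\bm{x}'))^T\bm{A}_N^{-1}\bm{k}_N(\bm{x})+\bm{k}_N^T(\bm{x}')\bm{A}_N^{-1}(\bm{k}_N(\bm{x})-\bm{k}_N(\bm{x}'))$. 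Using $\|\bm{k}_N(\cdot)\|\le\sqrt{N}\max_{\bm{x},\bm{x}'\in\mathbb{X}}k(\bm{x},\bm{x}')$ and the operator norm of $\bm{A}_N^{-1}$ then gives $L_{\sigma_N^2}$.

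The probabilistic core is the observation that, under \cref{ass:samplefun} and \cref{ass:noise}, the posterior of a GP sample at any fixed test point is Gaussian, i.e.\ $f(\bm{x})-\nu_N(\bm{x})\mid\mathbb{D}_N\sim\mathcal{N}(0,\sigma_N^2(\bm{x}))$. I would combine this with the standard Gaussian tail bound $P(|Z|>r\sigma)\le e^{-r^2/2}$ for $Z\sim\mathcal{N}(0,\sigma^2)$ (which follows from the estimate $\tfrac12 e^{-r^2/2}$ on the one-sided tail). Choosing $r=\sqrt{\beta(\tau)}$ with $\beta(\tau)=2\log(M(\tau,\mathbb{X})/\delta)$ makes the per-point failure probability exactly $e^{-\beta(\tau)/2}=\delta/M(\tau,\mathbb{X})$.

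Next I would introduce a $\tau$-cover $\mathbb{X}_{\tau}$ of $\mathbb{X}$ with $|\mathbb{X}_{\tau}|=M(\tau,\mathbb{X})$ and apply the tail bound at each of its points. A union bound over these $M(\tau,\mathbb{X})$ points then shows that, with probability at least $1-\delta$, the inequality $|f(\bm{x}_j)-\nu_N(\bm{x}_j)|\le\sqrt{\beta(\tau)}\,\sigma_N(\bm{x}_j)$ holds simultaneously at every grid point $\bm{x}_j\in\mathbb{X}_{\tau}$. On this event the remaining task is the deterministic extension from the grid to all of $\mathbb{X}$.

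For arbitrary $\bm{x}\in\mathbb{X}$ I would pick the nearest grid point $\bm{x}_j$ with $\|\bm{x}-\bm{x}_j\|\le\tau$ and split $|f(\bm{x})-\nu_N(\bm{x})|\le|f(\bm{x})-f(\bm{x}_j)|+|f(\bm{x}_j)-\nu_N(\bm{x}_j)|+|\nu_N(\bm{x}_j)-\nu_N(\bm{x})|$, bounding the outer terms by $L_f\tau$ and $L_{\nu_N}\tau$. I expect the main obstacle to be the final assembly, specifically converting $\sigma_N(\bm{x}_j)$ back to $\sigma_N(\bm{x})$: from the variance Lipschitz estimate, $\sigma_N^2(\bm{x}_j)\le\sigma_N^2(\bm{x})+L_{\sigma_N^2}\tau$, and the sub-additivity $\sqrt{a+b}\le\sqrt{a}+\sqrt{b}$ gives $\sigma_N(\bm{x}_j)\le\sigma_N(\bm{x})+\sqrt{L_{\sigma_N^2}\tau}$. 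Collecting terms produces $\sqrt{\beta(\tau)}\,\sigma_N(\bm{x})+(L_{\nu_N}+L_f)\tau+\sqrt{\beta(\tau)L_{\sigma_N^2}\tau}=\sqrt{\beta(\tau)}\,\sigma_N(\bm{x})+\gamma(\tau)$, which is the asserted bound; the delicate points are getting the residual to collapse exactly into the term $\sqrt{\beta(\tau)L_{\sigma_N^2}\tau}$ via this square-root trick and carefully justifying the Gaussian-posterior claim underlying the pointwise tail bound.
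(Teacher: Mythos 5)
Your proposal is correct and follows essentially the same route as the paper's own proof: the same Cauchy--Schwarz/Lipschitz estimates for $L_{\nu_N}$ and $L_{\sigma_N^2}$, the same Gaussian tail bound plus union bound over a $\tau$-cover (which the paper imports from Srinivas et al.\ rather than deriving from the posterior normality as you do), and the same triangle-inequality extension with the square-root transfer from the variance to the standard deviation. The residual terms do collapse exactly into $\gamma(\tau)$ as you anticipate, so no gap remains.
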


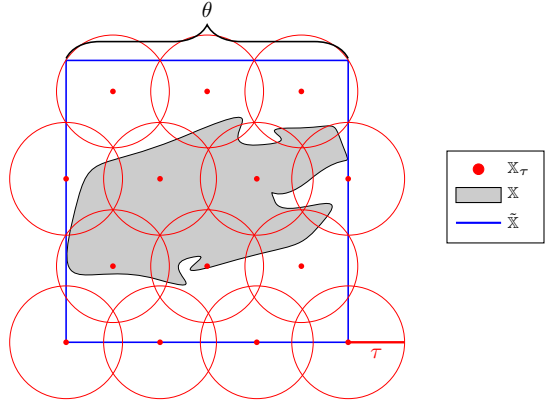
\begin{figure}
	\center
	\tikzsetnextfilename{cover_illustration}
	\begin{minipage}{0.4\textwidth}
		\center
		\scalebox{0.75}{
		\begin{tikzpicture}		
		\draw[smooth cycle, tension=2, fill=black!20] plot coordinates{(0,0) (1.5,-0.98) (2.1,-0.6) (2.7,-0.7) (4.5,0.1) (3.6,0.4) (4.24,0.8) (4.8,1.4) (3.9,1.7) (3.3,1.4) (2.1,1.6)};
		
		\draw[blue, thick] (-0.1,-2.1)--(4.9,-2.1)--(4.9,2.9)--(-0.1,2.9)--(-0.1,-2.1);
		
		\fill[red] (0.73,2.35) circle (0.05cm);
		\fill[red] (2.4,2.35) circle (0.05cm);
		\fill[red] (4.07,2.35) circle (0.05cm);
		
		\fill[red] (-0.1,0.8) circle (0.05cm);
		\fill[red] (1.565,0.8) circle (0.05cm);
		\fill[red] (3.28,0.8) circle (0.05cm);
		\fill[red] (4.9,0.8) circle (0.05cm);
		
		\fill[red] (0.73,-0.75) circle (0.05cm);
		\fill[red] (2.4,-0.75) circle (0.05cm);
		\fill[red] (4.07,-0.75) circle (0.05cm);
		
		\fill[red] (-0.1,-2.1) circle (0.05cm);
		\fill[red] (1.565,-2.1) circle (0.05cm);
		\fill[red] (3.28,-2.1) circle (0.05cm);
		\fill[red] (4.9,-2.1) circle (0.05cm);
		
		\draw[red] (0.73,2.35) circle (1);
		\draw[red] (2.4,2.35) circle (1);
		\draw[red] (4.07,2.35) circle (1);
		
		\draw[red] (-0.1,0.8) circle (1);
		\draw[red] (1.565,0.8) circle (1);
		\draw[red] (3.28,0.8) circle (1);
		\draw[red] (4.9,0.8) circle (1);
		
		\draw[red] (0.73,-0.75) circle (1);
		\draw[red] (2.4,-0.75) circle (1);
		\draw[red] (4.07,-0.75) circle (1);
		
		\draw[red] (-0.1,-2.1) circle (1);
		\draw[red] (1.565,-2.1) circle (1);
		\draw[red] (3.28,-2.1) circle (1);
		\draw[red] (4.9,-2.1) circle (1);
		
		\draw[red,very thick] (4.9,-2.1)--(5.9,-2.1);
		
		\node[red] at (5.4,-2.3) {$\tau$};
		\draw [thick,decorate, decoration={brace,amplitude=17pt,mirror,raise=1pt}, yshift=0pt]
		(4.9,2.9) -- (-0.1,2.9) node [black,midway,xshift=0.0cm,yshift=0.9cm] {
			$\theta$};
		\end{tikzpicture}
	}
	\end{minipage}
	\begin{minipage}{0.2\textwidth}
		\begin{tikzpicture}
			\begin{axis}[%
			hide axis,
			xmin=-5,
			xmax=5,
			ymin=0,
			ymax=1
			]
				\addlegendimage{red,only marks};
				\addlegendentry{$\mathbb{X}_{\tau}$};
				\addlegendimage{fill=black!20, area legend};
				\addlegendentry{$\mathbb{X}$};
				\addlegendimage{blue, thick};
				\addlegendentry{$\tilde{\mathbb{X}}$};
			\end{axis}
	   \end{tikzpicture}
	\end{minipage}	
	\caption{Illustration of the determination of an upper bound for the covering number~$M(\tau,\mathbb{X})$.}
	\label{fig:cover}
\end{figure}

It is important to note that most of the parameters in \cref{th:errbound_with}
do not require a difficult analysis such that the bound \eqref{eq:errorbound} 
can be directly evaluated. While the computation of the exact covering number 
$M(\tau,\mathbb{X})$ is a difficult problem for general sets $\mathbb{X}$,
it can be easily upper bounded as illustrated in \cref{fig:cover}. For this reason, 
we overapproximate the set 
$\mathbb{X}$ through a hypercube $\tilde{\mathbb{X}}$ with edge length $\theta$. 
Then, the covering number of $\tilde{\mathbb{X}}$ is bounded by \citep{Shalev-Shwartz2013} 
\begin{align*}
M(\tau,\tilde{\mathbb{X}})\leq \left(\frac{\theta\sqrt{d}}{2\tau}\right)^d,
\end{align*}
which is by construction also a bound for the covering number of $\mathbb{X}$, 
i.e., 
\begin{align*}
M(\tau,\mathbb{X})\leq \left(\frac{\theta\sqrt{d}}{2\tau}\right)^d.
\end{align*}

Since a probabilistic Lipschitz constant $L_f$ of the unknown function 
is a preliminary requirement for suitable hyperparameters 
as explained in Section~\ref{subsec:prob Lipschitz},
the uniform error bound mainly depends on training data $\mathbb{D}$, 
the considered set $\mathbb{X}$, the probability $\delta$ and the virtual grid 
constant $\tau$ allowing the direct computation of the bound \eqref{eq:errorbound}.
Hence, it is trivially evaluated with very little computational complexity emphasizing 
its high applicability, especially in safe control of unknown systems which 
can require fast computation of bounds.

Furthermore, it is important to observe that the virtual grid constant $\tau$ 
balances the effect of the state space discretization and the inherent uncertainty 
measured by the posterior standard deviation $\sigma_N(\cdot)$. Therefore,
$\gamma(\tau)$ can be made arbitrarily small by choosing s sufficiently fine
virtual grid. This in turn increases $\beta(\tau)$ and thus the 
effect of the posterior standard deviation $\sigma_N(\cdot)$ on the bound. 
However, $\beta(\tau)$ depends merely logarithmically on $\tau$ such that
even poor Lipschitz constants $L_{\nu_N}$, $L_{\sigma_N^2}$ and $L_f$ can
be easily compensated by small virtual grid constants $\tau$. In the following
subsection we exploit this trade-off to analyze the asymptotic behavior $N\rightarrow\infty$
of our uniform error bound and show that convergence to zero can be achieved with 
suitable training data.

\begin{remark}
	Since the standard deviation $\sigma_N(\cdot)$ varies within the state space $\mathbb{X}$,
	an optimal virtual grid constant $\tau$, which minimizes the expression 
	$\sqrt{\beta(\tau)}\sigma_{N}(\bm{x})+\gamma(\tau)$ for all $\bm{x}\in\mathbb{X}$, does 
	not exist in general. While simple approaches such as choosing $\tau$ such that 
	$\gamma(\tau)$ is negligible for all $\bm{x}\in\mathbb{X}$ provide satisfying 
	results in our simulations, more complex approaches remain open research questions.
\end{remark}

\subsection{Analysis of Asymptotic Behavior}
\label{subsec:asymptotics}

A crucial question in safe reinforcement learning and control of unknown systems is the existence
of lower bounds for the learning error since they determine the achievable control performance. 
It can be directly observed from Theorem~\ref{th:errbound_with} that the training 
data plays the critical role in this question since it affects the posterior standard deviation 
$\sigma_N(\cdot)$. By analyzing the asymptotic behavior of \eqref{eq:errorbound}, i.e., considering
the limit $N\rightarrow\infty$, we can derive conditions for the posterior variance which 
ensure that an arbitrarily small error can be achieved with suitable training data. 
This is shown in the following theorem.

\begin{theorem}
	\label{th:as_err}
	Consider a zero mean Gaussian process defined through the
	covariance kernel $k(\cdot,\cdot)$ with continuous partial derivatives up to the 
	fourth order on the set $\mathbb{X}$. Furthermore, consider an infinite data stream
	of observations $(\bm{x}^{(i)},y^{(i)})$ of an unknown, Lipschitz continuous function 
	\mbox{$f:\mathbb{X}\rightarrow \mathbb{R}$} which satisfies Assumptions~\ref{ass:samplefun}
	and \ref{ass:noise}.
	Let $\nu_N(\cdot)$ and $\sigma_N(\cdot)$ denote the mean and standard deviation 
	of the Gaussian process conditioned on the first $N$ observations. If there exists
	a class $\mathcal{K}_{\infty}$ function $\alpha:\mathbb{R}\rightarrow\mathbb{R}_+$ 
	and a constant~$\epsilon>0$ such that 
	\begin{align*}
		\sigma_N(\bm{x})&\in \mathcal{O}\left(\frac{1}{\alpha(N)}\right)\subset\mathcal{O}\left(\frac{1}{\sqrt{\log(N)}}\right)&\forall\bm{x}\in\mathbb{X},
	\end{align*}
	then the learning error uniformly converges to zero almost surely with rate
	\begin{align*}
		\sup\limits_{\bm{x}\in\mathbb{X}}\|\nu_N(\bm{x})-f(\bm{x})\|\in\mathcal{O}\left(\frac{\sqrt{\log(N)}}{\alpha(N)}\right)\quad \text{a.s.}
	\end{align*}
\end{theorem}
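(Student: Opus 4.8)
The plan is to upgrade the finite-sample, high-probability bound of \cref{th:errbound_with} into an almost-sure asymptotic statement by invoking it along the data stream with an $N$-dependent confidence level $\delta_N$ and an $N$-dependent grid constant $\tau_N$, and then applying the first Borel--Cantelli lemma. Concretely, for each $N$ I would instantiate \cref{th:errbound_with} with $\delta=\delta_N$ and $\tau=\tau_N$, take the supremum over $\bm{x}\in\mathbb{X}$ of the resulting inequality, and use the hypothesis $\sup_{\bm{x}\in\mathbb{X}}\sigma_N(\bm{x})\in\mathcal{O}(1/\alpha(N))$ to obtain, on an event of probability at least $1-\delta_N$,
\begin{align*}
\sup_{\bm{x}\in\mathbb{X}}\|\nu_N(\bm{x})-f(\bm{x})\|\leq \sqrt{\beta(\tau_N)}\,\sup_{\bm{x}\in\mathbb{X}}\sigma_N(\bm{x})+\gamma(\tau_N).
\end{align*}
The remaining work is to choose $\delta_N$ and $\tau_N$ so that the right-hand side is $\mathcal{O}(\sqrt{\log N}/\alpha(N))$ and so that the exceptional probabilities are summable.

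Before choosing these free parameters I would control the $N$-dependence of the Lipschitz constants appearing in $\gamma(\tau)$. Since $\bm{K}(\bm{X}_N,\bm{X}_N)$ is positive semidefinite, $\|(\bm{K}(\bm{X}_N,\bm{X}_N)+\sigma_n^2\bm{I}_N)^{-1}\|\leq \sigma_n^{-2}$, so the bound on $L_{\sigma_N^2}$ in \cref{th:errbound_with} gives $L_{\sigma_N^2}\in\mathcal{O}(N)$ deterministically. For $L_{\nu_N}$ the same spectral estimate yields $L_{\nu_N}\leq L_k\sqrt{N}\,\sigma_n^{-2}\|\bm{y}_N\|$, and since $f$ is continuous on the compact set $\mathbb{X}$ it is bounded there, so that $\|\bm{y}_N\|^2=\sum_{i=1}^N (f(\bm{x}^{(i)})+\epsilon_i)^2$. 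Applying the strong law of large numbers to the i.i.d. Gaussian terms $\epsilon_i$ (Assumption~\ref{ass:noise}) shows that $\tfrac1N\|\bm{y}_N\|^2$ converges almost surely, whence $\|\bm{y}_N\|\in\mathcal{O}(\sqrt{N})$ and therefore $L_{\nu_N}\in\mathcal{O}(N)$ almost surely.

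With these rates in hand I would pick $\delta_N=N^{-2}$, so that $\sum_N\delta_N<\infty$ and $\log(1/\delta_N)=\mathcal{O}(\log N)$, and a polynomially small grid $\tau_N=1/(N\alpha(N)^2)$. Using the covering-number estimate $M(\tau,\mathbb{X})\leq(\theta\sqrt{d}/(2\tau))^d$, this keeps $\beta(\tau_N)=2\log(M(\tau_N,\mathbb{X})/\delta_N)\in\mathcal{O}(\log N)$, because $\log(1/\tau_N)$ is logarithmic in $N$ once $\alpha$ is polynomially bounded. Substituting $L_{\nu_N},L_{\sigma_N^2}\in\mathcal{O}(N)$ and $\beta(\tau_N)\in\mathcal{O}(\log N)$ into $\gamma(\tau_N)=(L_{\nu_N}+L_f)\tau_N+\sqrt{\beta(\tau_N)L_{\sigma_N^2}\tau_N}$ produces a first term of order $N\tau_N=1/\alpha(N)^2$ and a second term of order $\sqrt{\log N\cdot N\tau_N}=\sqrt{\log N}/\alpha(N)$; since $\alpha(N)\to\infty$ and $\alpha(N)\geq c\sqrt{\log N}$, both are $\mathcal{O}(\sqrt{\log N}/\alpha(N))$. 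Together with $\sqrt{\beta(\tau_N)}\,\sup_{\bm{x}\in\mathbb{X}}\sigma_N(\bm{x})\in\mathcal{O}(\sqrt{\log N}/\alpha(N))$, the entire right-hand side is then $\mathcal{O}(\sqrt{\log N}/\alpha(N))$ on each confidence event.

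Finally I would assemble the almost-sure conclusion. Writing $E_N$ for the complement of the event on which the displayed inequality holds, \cref{th:errbound_with} gives $P(E_N)\leq\delta_N$, and since $\sum_N\delta_N<\infty$ the first Borel--Cantelli lemma implies $P(\limsup_N E_N)=0$; hence, almost surely, the bound holds for all sufficiently large $N$, and the asymptotic rate $\sup_{\bm{x}\in\mathbb{X}}\|\nu_N(\bm{x})-f(\bm{x})\|\in\mathcal{O}(\sqrt{\log N}/\alpha(N))$ follows. I expect the main obstacle to be the joint, $N$-dependent tuning of $\tau_N$ and $\delta_N$: the grid must shrink fast enough to suppress the growing Lipschitz constants $L_{\nu_N},L_{\sigma_N^2}\in\mathcal{O}(N)$ in $\gamma(\tau_N)$, yet slowly enough (polynomially) that $\beta(\tau_N)$ stays logarithmic so that $\sqrt{\beta(\tau_N)}\,\sigma_N$ does not exceed the target rate, all while keeping $\sum_N\delta_N$ finite for Borel--Cantelli. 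The secondary subtlety is the almost-sure, rather than deterministic, control of $\|\bm{y}_N\|$ forced by the unbounded Gaussian observation noise.
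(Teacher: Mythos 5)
Your proposal is correct and follows the same skeleton as the paper's proof: instantiate \cref{th:errbound_with} along the data stream with $N$-dependent confidence levels and grid constants, establish $L_{\nu_N},L_{\sigma_N^2}\in\mathcal{O}(N)$, shrink $\tau_N$ fast enough that $\gamma(\tau_N)$ is dominated while $\beta(\tau_N)$ stays logarithmic, and convert summable failure probabilities into an almost-sure statement. The differences are in the stochastic ingredients, and they mostly favour your version. For the almost-sure upgrade the paper absorbs a $\pi^2N^2/3$ factor into $\beta_N$, obtains the bound for all $N$ with probability $1-\delta/2$, and then argues by contradiction that validity for every $\delta$ forces probability one; your Borel--Cantelli argument with $\delta_N=N^{-2}$ reaches the same conclusion more directly. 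For $\|\bm{y}_N\|$ the paper combines the probabilistic supremum bound of \cref{th:fmax} with a $\chi^2$ concentration inequality union-bounded over $N$, introducing two extra confidence parameters; you instead use that the assumed Lipschitz continuity of $f$ on the compact $\mathbb{X}$ bounds $\sup_{\bm{x}}|f(\bm{x})|$ deterministically, so only the noise contribution needs the strong law (strictly, $\tfrac1N\|\bm{y}_N\|^2$ need only be a.s.\ bounded rather than convergent, which your decomposition delivers after handling the cross term $\tfrac1N\sum_i f(\bm{x}^{(i)})\epsilon_i$ by Kolmogorov's SLLN for independent, uniformly bounded-variance summands). Similarly you take $L_f$ as given, where the paper re-derives it probabilistically from \cref{th:Lip_f}. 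The one loose end is your grid schedule $\tau_N=1/(N\alpha(N)^2)$: keeping $\beta(\tau_N)\in\mathcal{O}(\log N)$ requires $\log\alpha(N)\in\mathcal{O}(\log N)$, which you assume parenthetically but which is not among the hypotheses. This is harmless in substance --- with observation noise the posterior standard deviation cannot decay faster than order $N^{-1/2}$, so any admissible $\alpha$ is at most polynomial, and the paper's own $\alpha$-independent choice $\tau_N\in\mathcal{O}(N^{-2})$ carries the same implicit restriction through its $\gamma$ term --- but it should be stated and discharged explicitly rather than left as an aside.
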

Compared to Theorem~\ref{th:errbound_with} this theorem does not require
knowledge of the Lipschitz constant $L_f$ of the unknown function $f(\cdot)$. 
However, it instead requires increased smoothness of the covariance kernel.
This smoothness guarantees the existence of probabilistic maximum absolute values 
and Lipschitz constants due to Theorem~\ref{th:fmax} and Corollary~\ref{th:Lip_f}, 
respectively, which is exploited together with Theorem~\ref{th:errbound_with}
in the proof. Moreover, note that this theorem guarantees almost sure convergence,
while previous versions in \citep{Lederer2019} merely guaranteed convergence with 
arbitrary probability. 

Although the asymptotic analysis is typically not performed
for the related approaches in \citep{Srinivas2012,Chowdhury2017a}, similar conditions
for the posterior variance can be straightforwardly derived. Due to the dependence 
on the information gain of these approaches, the necessary decrease rate depends
strongly on the covariance kernel $k(\cdot,\cdot)$. For example, the information
gain behaves as $\mathcal{O}\left( \log(N)^{d+1} \right)$ for the squared exponential kernel 
which leads to the conditions 
$\sigma_N(\cdot)\in\mathcal{O}\left( \log(N)^{-\frac{d}{2}-2} \right)$ for 
\citep{Srinivas2012} and $\sigma_N(\cdot)\in\mathcal{O}\left( 
\log(N)^{-\frac{d+1}{2}} \right)$ for \citep{Chowdhury2017a}. For the 
Mat\'ern kernel with smoothness parameter $\nu$ the information gain exhibits 
the asymptotic behavior $\mathcal{O}\left( N^{\frac{d(d+1)}{2\nu+d(d+1)}}\log(N) \right)$ 
such that $\sigma_N(\cdot)\in\mathcal{O}\left( N^{-\frac{d(d+1)}{4\nu+2d(d+1)}}\log(N)^{-2} \right)$ 
for \citep{Srinivas2012} and $\sigma_N(\cdot)\in\mathcal{O}\left( 
N^{-\frac{d(d+1)}{4\nu+2d(d+1)}}\log(N)^{-\frac{1}{2}} \right)$ for \citep{Chowdhury2017a} are required.
It can be clearly observed that in both cases the conditions on the 
posterior variance are far more restrictive than required by Theorem~\ref{th:as_err}.

In addition to the weaker conditions of Theorem~\ref{th:as_err}, it allows to directly 
pose a condition on the infinite training data sequence to ensure a vanishing uniform error 
bound by exploiting Theorem~\ref{th:varvan}. This is shown in the following corollary. 

\begin{corollary}
	\label{th:as_err_data}
	Consider a zero mean Gaussian process defined through the
	covariance kernel $k(\cdot,\cdot)$ with continuous partial derivatives up to the 
	fourth order on the set $\mathbb{X}$. Furthermore, consider an infinite sequence of 
	observations $y^{(i)}$ of an unknown, Lipschitz continuous function 
	\mbox{$f:\mathbb{X}\rightarrow \mathbb{R}$} which satisfies Assumptions~\ref{ass:samplefun}
	and \ref{ass:noise}.
	If there exists a function \linebreak$\rho:\mathbb{N}\rightarrow\mathbb{R}_+$ and a constant $\epsilon\in\mathbb{R}_+$ 
	such that the assumptions of \cref{th:varvan} are satisfied with \linebreak$1/\alpha(N)\in\mathcal{O}(\log(N)^{-1-\epsilon})$ for all $\bm{x}\in\mathbb{X}$,
	then the learning error uniformly converges to zero almost surely with rate
	\begin{align*}
	\sup\limits_{\bm{x}\in\mathbb{X}}\|\nu_N(\bm{x})-f(\bm{x})\|\in\mathcal{O}\left(\sqrt{\frac{\log(N)}{\alpha(N)}}\right)\quad \text{a.s.}
	\end{align*}
\end{corollary}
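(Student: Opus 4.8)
The plan is to chain the two asymptotic results already established: \cref{th:varvan} controls the decay of the posterior variance at a fixed test point, and \cref{th:as_err} converts such a variance decay into a uniform (supremum) error rate. The corollary is thus essentially a composition of these two statements, and the only real work is to verify that the hypotheses of \cref{th:as_err} are met and to track the rates correctly through the composition.

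First I would invoke \cref{th:varvan}. Its Lipschitz-kernel requirement is implied by the assumed continuity of the kernel's partial derivatives up to fourth order, and by hypothesis the function $\rho(\cdot)$ together with the constant $\epsilon$ makes the three conditions of \cref{th:varvan} hold for every $\bm{x}\in\mathbb{X}$. This yields the pointwise variance estimate $\sigma_N(\bm{x})\in\mathcal{O}(1/\sqrt{\alpha(N)})$ for each $\bm{x}\in\mathbb{X}$.

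Next I would set $\tilde\alpha(N)=\sqrt{\alpha(N)}$ in order to apply \cref{th:as_err}. Since $\alpha$ is a class $\mathcal{K}_\infty$ function and $\sqrt{\cdot}$ is strictly increasing with $\sqrt{0}=0$ and $\sqrt{r}\to\infty$, the composition $\tilde\alpha$ is again class $\mathcal{K}_\infty$, and the variance bound reads $\sigma_N(\bm{x})\in\mathcal{O}(1/\tilde\alpha(N))$. It then remains to verify the inclusion $\mathcal{O}(1/\tilde\alpha(N))\subset\mathcal{O}(1/\sqrt{\log(N)})$ demanded by \cref{th:as_err}; squaring, this is the statement $1/\alpha(N)\in\mathcal{O}(1/\log(N))$, which is immediate from the hypothesis $1/\alpha(N)\in\mathcal{O}(\log(N)^{-1-\epsilon})$ because $\mathcal{O}(\log(N)^{-1-\epsilon})\subset\mathcal{O}(\log(N)^{-1})$ for $\epsilon>0$. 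Applying \cref{th:as_err} with $\tilde\alpha$ then produces the uniform rate $\mathcal{O}(\sqrt{\log(N)}/\tilde\alpha(N))=\mathcal{O}(\sqrt{\log(N)/\alpha(N)})$ almost surely, which is exactly the claimed bound.

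The step deserving the most care is not any individual estimate but the bookkeeping that guarantees the final rate genuinely vanishes: the factor $\sqrt{\log(N)}$ that \cref{th:as_err} introduces through the covering/confidence term $\sqrt{\beta(\tau)}$ must be strictly dominated by $\sqrt{\alpha(N)}$. This is precisely where $\epsilon>0$ is used, since it gives $\log(N)/\alpha(N)\in\mathcal{O}(\log(N)^{-\epsilon})\to 0$, whereas $\epsilon=0$ would leave the bound stuck at a positive constant. I would therefore stress that the $1+\epsilon$ in the hypothesis is the exact convergence margin rather than a technical artifact. Finally, the almost-sure qualifier requires no extra argument: it is produced by \cref{th:as_err} itself, and the variance estimate supplied by \cref{th:varvan} holds on a probability-one event, so the two combine without loss.
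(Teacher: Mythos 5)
Your proposal is correct and follows the same route as the paper, whose proof of this corollary is just the one-line remark that it follows from combining \cref{th:varvan} and \cref{th:as_err}; you have filled in exactly that composition, with the substitution $\tilde\alpha=\sqrt{\alpha}$ and the verification that $1/\alpha(N)\in\mathcal{O}(\log(N)^{-1-\epsilon})$ supplies the $\log(N)^{-1/2-\epsilon/2}$ margin that the proof of \cref{th:as_err} actually requires. The rate bookkeeping $\mathcal{O}(\sqrt{\log N}/\tilde\alpha(N))=\mathcal{O}(\sqrt{\log(N)/\alpha(N)})$ is right.
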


\section{Safety Guarantees for Control of Unknown Dynamical Systems}
\label{sec:safety}
When applying learning controllers in safety-critical applications like
autonomous driving or robots working in close proximity to humans, 
upper bounds for the tracking error are crucial to provide formal 
safety guarantees for the dynamical systems. Therefore, we demonstrate
how the results in the previous sections can be applied to design 
regulators for the safe control of unknown dynamical systems. A feedback
linearization controller for robust tracking is presented in \cref{subsec:track}.
In section~\ref{subsec:stab} the stability of this controller is analyzed and 
the asymptotic behavior for infinite training data is investigated.

\subsection{Tracking Control Design}
\label{subsec:track}
We consider a nonlinear control affine dynamical system
\begin{align}
\label{eq:sys}
\dot{x}_1  = x_2,\qquad \dot{x}_2  = x_3,\qquad \ldots \qquad \dot{x}_d = f(\bm{x}) + u,
\end{align}
with state~$\bm{x}=[x_1\ x_2\ \ldots\ x_d]^T \in \mathbb{X} \subset \mathbb{R}^d$ and control 
input~$u \in \mathbb{U} \subseteq \mathbb{R}$. Although we assume that the function~$f(\cdot)$
is unknown, the structure of the dynamics is not. However, this is not a severe 
restriction since nonlinear control affine systems of the 
form \eqref{eq:sys} can be applied in a wide variety of applications such as Lagrangian 
dynamics and many physical systems.

Our goal is to determine a policy~$\pi:\mathbb{X} \to \mathbb{U}$ such that the 
output~$x_1$ tracks the desired trajectory~$x_{\mathrm{ref}}(t)$ with vanishing tracking 
error~$\bm{e} =[e_1\ \ldots\ e_d]^T=\bm{x}-\bm{x}_{\mathrm{ref}}$ where
$\bm{x}_{\mathrm{ref}} =[x_{\mathrm{ref}}\ \dot{x}_{\mathrm{ref}}\ \ldots \overset{\scriptscriptstyle d}{\dot{x}}_{\mathrm{ref}}]^T$, 
i.e.,~$\lim_{t\to\infty} \|\bm{e}\|= 0$. 
We assume that training data of the real system is 
available in the form of noisy observations~$y^{(i)} = f(\bm{x}^{(i)}) + \epsilon$, $i=1,\ldots,N$, 
$\epsilon\sim\mathcal{N}(0,\sigma_n^2)$ such that we can train a Gaussian process and
use its posterior mean function $\nu_N(\cdot)$ as a model estimate of the unknown 
function $f(\cdot)$. Based on this model estimate we compensate the 
unknown non-linearity $f(\cdot)$ which is commonly referred to as feedback
linearization for control affine systems \citep{Khalil2002}. After compensation
of the non-linearity we apply linear regulators for the tracking such that
we obtain the policy
\begin{align}
\label{eq:FeliCtrl}
u =  -\nu_N(\bm{x}) +\pi(\bm{x}),
\end{align}
with the linear controller for tracking
\begin{align*}
\pi(\bm{x})=\overset{\scriptscriptstyle d+1}{\dot{x}}_{\mathrm{ref}}
-k_c  \begin{bmatrix}
\bm{\lambda}&1
\end{bmatrix}\bm{e}
\end{align*}
with control gain $k_c\in\mathbb{R}_+$ and filter coefficients $\bm{\lambda}=\begin{bmatrix}
\lambda_1&\lambda_2&\ldots&\lambda_{d-1}
\end{bmatrix}\in\mathbb{R}^{d-1}$, such that the polynomial 
$s^{d-1}+\lambda_{d-1}s^{d-2}+\ldots+\lambda_1$ with $s\in\mathbb{C}$ is Hurwitz 
\citep{Hurwitz1895}. The application of this policy leads to the error dynamics \citep{Chowdhary2015}
\begin{align}
\label{eq:err_dyn}
\dot{\bm{e}}=\underbrace{\begin{bmatrix}
0&1&0&\ldots&0\\
0&0&1&\ldots&0\\
\vdots&\vdots&\vdots&\ddots&\vdots\\
-k_c\lambda_1&-k_c\lambda_2&-k_c\lambda_3&\ldots&-k_c
\end{bmatrix}}_{\bm{A}}\bm{e} +\begin{bmatrix}
0\\0\\ \vdots\\f(\bm{x})-\nu_N(\bm{x})
\end{bmatrix}.
\end{align}
The first addend corresponds to the nominal error dynamics, which are described 
by a linear system with dynamics matrix $\bm{A}$, while the second addend 
represents the effect of the learning error on the tracking error dynamics. 

\begin{remark}
	Our assumption on the available training data allows noisy observations of the 
	derived state $\dot{x}_d$ while the states $\bm{x}$ themselves must be measured
	noise free. Although this assumption is debatable, it reflects practical 
	implementation well due to the fact that the time derivative $\dot{x}_d$ is 
	typically realized with finite difference approximations. These approximations
	inject considerably more noise than direct measurements of the state $\bm{x}$.
\end{remark}

\subsection{Probabilistic Stability Analysis}
\label{subsec:stab}
For safety critical applications, e.g., when robots and human work in 
close proximity, it is crucial to formally verify that the controlled
system satisfies safety constraints. This requires a sufficiently 
precise model $\nu_N(\cdot)$ and a properly chosen control gain $k_c$ 
as well as filter coefficients $\bm{\lambda}$ such that an upper bound
on the tracking error $\bm{e}$ can be determined as defined in the
following.
\begin{definition}[Ultimate Boundedness]
	\label{def:boundedness}
	The tracking error~$\bm{e}(t)$ between a dynamical system~$\dot{\bm{x}} = \bm{f}(\bm{x},\bm{u})$ 
	and a reference trajectory $\bm{x}_{\mathrm{ref}}$ 
	is ultimately bounded, if there exists a positive constant~$b\in 
	\mathbb{R}_+$ such that for every~$a\in \mathbb{R}_+$, there is a~$T=T(a,b)\in 
	\mathbb{R}_+$ such that
	\begin{align*}
	\|\bm{e}(t_0)\|\leq a \quad \Rightarrow \quad \|\bm{e}(t)\|\leq b, 
	\forall t\geq t_0 + T.
	\end{align*}
\end{definition}
As an analytical computation of the trajectories~$\bm{x}(t)$ is generally
impossible, we conduct a stability analysis based on Lyapunov theory. 
This analysis exhibits the advantage that conclusions about the closed-loop
system behavior can be drawn without the necessity of simulating the 
controlled system or even executing the policy on the real system~\citep{Khalil2002}.
In the following, we consider a bounded reference trajectory $\bm{x}_{\mathrm{ref}}$
in order to 
comply with the requirement of a compact state space $\mathbb{X}$ for the 
uniform error bound of Gaussian process regression.
\begin{lemma}[\citep{Khalil2002}]
	\label{lem:Lyap}
	Consider a dynamical system~$\dot{\bm{x}} = \bm{f}(\bm{x},\bm{u})$
	and a bounded reference trajectory~$\bm{x}_{\mathrm{ref}}$. If there 
	exists a positive definite (so called Lyapunov) function,~\mbox{$V:\mathbb{R}^d \to \mathbb{R}_{+,0}$},
	and a set $\mathbb{V}=\{\bm{e}\in\mathbb{R}^d| V(\bm{e})\leq\ubar{v}\}$ for 
	$\ubar{v}\in\mathbb{R}^+$, such that
	\begin{align*}
		\dot{V}(\bm{e})&< 0 &\forall \bm{e}\in\{\bm{e}\in\mathbb{R}^d|  \ubar{v}<V(\bm{e})\leq \bar{v}\}\\
		\ubar{v} &\leq \bar{v}=\min\limits_{\bm{x}\in\mathbb{R}^d\setminus\mathbb{X}}V(\bm{x}-\bm{x}_{\mathrm{ref}}),
	\end{align*}
	then the tracking error $\bm{e}$ is ultimately bounded to the set $\mathbb{V}$ for all $\bm{x}(t_0)\in\{\bm{x}\in\mathbb{X}| V(\bm{x}-\bm{x}_{\mathrm{ref}})<\bar{v}\}$.
\end{lemma}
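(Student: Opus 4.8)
The plan is to prove this as a standard Lyapunov ultimate-boundedness argument in the spirit of the invariance and comparison results of \citep{Khalil2002}, organized around the nested sublevel sets $\mathbb{V} = \{\bm{e} : V(\bm{e}) \le \ubar{v}\} \subset \Omega := \{\bm{e} : V(\bm{e}) \le \bar{v}\}$. The key is to show that $\Omega$ is forward invariant, that every trajectory starting in $\Omega$ reaches $\mathbb{V}$ in finite time, and that $\mathbb{V}$ is itself forward invariant, after which the ultimate bound $b$ and the time $T$ of \cref{def:boundedness} are read off directly.

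First I would verify that the hypotheses remain valid along the trajectory. The condition $\bar{v} = \min_{\bm{x} \in \mathbb{R}^d \setminus \mathbb{X}} V(\bm{x} - \bm{x}_{\mathrm{ref}})$ guarantees that every error $\bm{e}$ with $V(\bm{e}) < \bar{v}$ corresponds to a state $\bm{x} = \bm{e} + \bm{x}_{\mathrm{ref}} \in \mathbb{X}$; otherwise $V(\bm{e})$ would be at least $\bar{v}$. This matters because the derivative condition $\dot V < 0$ (and the uniform error bound from \cref{th:errbound_with} that produces it) is only available for states in $\mathbb{X}$. Hence for any initial condition with $V(\bm{e}(t_0)) < \bar{v}$ the trajectory initially lies in the admissible region.

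Next I would establish positive invariance of $\Omega$: since $\dot V(\bm{e}) < 0$ on the closed annulus $\{\bm{e} : \ubar{v} < V(\bm{e}) \le \bar{v}\}$, in particular on the boundary $\{V = \bar{v}\}$, the value of $V$ along any trajectory touching this boundary is strictly decreasing, so the trajectory cannot cross outward and the state stays in $\mathbb{X}$ for all $t \ge t_0$. On the compact annulus $\{\ubar{v} \le V(\bm{e}) \le \bar{v}\}$ the continuous function $\dot V$ attains a maximum $-c$ with $c > 0$, so whenever the trajectory is in this annulus $\dot V \le -c$ and integration gives $V(\bm{e}(t)) \le V(\bm{e}(t_0)) - c(t - t_0)$; thus $V$ reaches the level $\ubar{v}$ in finite time $T \le (V(\bm{e}(t_0)) - \ubar{v})/c$, i.e., the trajectory enters $\mathbb{V}$. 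I would then show $\mathbb{V}$ is forward invariant: on its boundary $\{V = \ubar{v}\}$ the derivative condition yields $\dot V < 0$ (by the limit from the annulus), so once $V(\bm{e}) \le \ubar{v}$ the trajectory cannot leave $\mathbb{V}$. Finally, since $V$ is positive definite and continuous, $\mathbb{V}$ is compact, so $b := \max_{\bm{e} \in \mathbb{V}} \|\bm{e}\|$ is finite; for any $a$ with $\{\|\bm{e}\| \le a\} \subset \Omega$, setting $T(a,b)$ to the reaching-time bound above with $V(\bm{e}(t_0)) \le \max_{\|\bm{e}\| \le a} V(\bm{e})$ gives $\|\bm{e}(t)\| \le b$ for all $t \ge t_0 + T$, which is exactly \cref{def:boundedness}.

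I expect the main obstacle to be the bookkeeping required to keep the state inside $\mathbb{X}$ along the entire trajectory, since the $\dot V < 0$ hypothesis and the error bound that produces it hold only there; this is precisely what the choice of $\bar v$ together with the forward invariance of $\Omega$ secure, so the argument must establish invariance before invoking the derivative condition elsewhere. The remaining pieces — the strictly negative maximum of $\dot V$ on the compact annulus and the resulting finite reaching time — are routine once invariance is in place.
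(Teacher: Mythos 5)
The paper gives no proof of this lemma --- it is cited directly from \citep{Khalil2002} --- and your argument is exactly the standard textbook proof (forward invariance of the outer sublevel set, a strictly negative maximum of $\dot V$ on a compact annulus giving a finite reaching time, and invariance of the inner sublevel set), so it is correct in substance. One minor imprecision: the hypothesis gives $\dot V<0$ only on the half-open annulus $\{\ubar{v}<V(\bm{e})\leq\bar{v}\}$, so you cannot conclude $\dot V<0$ (or even a negative maximum) on the closed set including $\{V=\ubar{v}\}$ ``by the limit from the annulus''; the standard fix is to run the compactness argument on $\{\ubar{v}+\varepsilon\leq V\leq\bar{v}\}$, which yields ultimate boundedness to $\{V\leq\ubar{v}+\varepsilon\}$ for every $\varepsilon>0$ (sufficient for \cref{def:boundedness}), and to deduce invariance of $\mathbb{V}$ from the strict decrease of $V$ at any instant the trajectory sits strictly above the level $\ubar{v}$.
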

Based on this lemma we can check the satisfaction of safety constraints 
by explicitly determining the set $\mathbb{V}$ as shown in the following.
\begin{theorem}
	\label{thm:Stable}
	Consider a control affine system~\eqref{eq:sys}, where~$f(\cdot)$ 
	admits a Lipschitz constant $L_f$ on $\mathbb{X}\subset\mathbb{R}^d$, 
	with feedback linearizing controller~\eqref{eq:FeliCtrl}. Let $\bm{P}=\begin{bmatrix}
	\bm{p}_1&\ldots&\bm{p}_d
	\end{bmatrix}\in\mathbb{R}^{d\times d}$
	the unique, positive definite solution to the algebraic Riccati equation 
	\begin{align*}
	\bm{A}^T\bm{P}+\bm{P}\bm{A}=-\bm{I}_d
	\end{align*} 
	with $\bm{A}$ defined in \eqref{eq:err_dyn}. 
	Assume that~$f(\cdot)$ satisfies \cref{ass:samplefun} and the observations 
	$y^{(i)}$, $i=1,\ldots,N$, satisfy the conditions of \cref{ass:noise}. Consider 
	the set of initial states $\mathbb{X}_0$ and the Lyapunov decrease region $\mathbb{L}$
	defined as
	\begin{align*}
		\mathbb{X}_0&=\left\{\bm{x}\in\mathbb{X}\bigg| (\bm{x}-\bm{x}_{\mathrm{ref}})^T\bm{P}(\bm{x}-\bm{x}_{\mathrm{ref}})\leq \min\limits_{\bm{x}'\in\mathbb{R}^d\setminus\mathbb{X}}(\bm{x}'-\bm{x}_{\mathrm{ref}})^T\bm{P}(\bm{x}'-\bm{x}_{\mathrm{ref}}) \right\}\\
		\mathbb{L}&=\left\{ \bm{x}\in\mathbb{X}\big|  \|\bm{x}-\bm{x}_{\mathrm{ref}}\|^2\leq 2\left(\sqrt{\beta(\tau)}\sigma_N(\bm{x})+\gamma(\tau)\right)\left|(\bm{x}-\bm{x}_{\mathrm{ref}})^T\bm{p}_d\right| \right\}
	\end{align*}
	with $\beta(\tau)$ and $\gamma(\tau)$ from \cref{th:errbound_with}. 
	If $\mathbb{X}\setminus\mathbb{L}\subseteq \mathbb{X}_0$, then the tracking 
	error $\bm{e}(t)$ converges with probability of at least $1-\delta$ 
	for all $\bm{x}(t_0)\in\mathbb{X}_0$ to the ultimately bounded set
	\begin{align*}
		\mathbb{V}=\left\{\bm{e}\in\mathbb{R}^d\big| \bm{e}^T\bm{P}\bm{e}\leq\ubar{v}\right\}
	\end{align*}
	with ultimate bound
	\begin{align*}
	\ubar{v}=\max\limits_{\bm{x}\in\mathbb{L}}(\bm{x}-\bm{x}_{\mathrm{ref}})^T\bm{P}(\bm{x}-\bm{x}_{\mathrm{ref}}).
	\end{align*}
\end{theorem}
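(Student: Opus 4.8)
The plan is to use the quadratic Lyapunov candidate $V(\bm{e})=\bm{e}^T\bm{P}\bm{e}$, which is positive definite since $\bm{P}$ is the positive definite solution of the Riccati equation, and to show that the feedback linearizing controller \eqref{eq:FeliCtrl} makes this function strictly decreasing everywhere outside $\mathbb{L}$. First I would differentiate $V$ along the error dynamics \eqref{eq:err_dyn}. Writing the perturbation column as $(f(\bm{x})-\nu_N(\bm{x}))\bm{b}$ with $\bm{b}=[0\ \cdots\ 0\ 1]^T$ and using $\bm{P}\bm{b}=\bm{p}_d$ together with the Riccati identity $\bm{A}^T\bm{P}+\bm{P}\bm{A}=-\bm{I}_d$, this yields
\begin{align*}
\dot{V}(\bm{e})=-\|\bm{e}\|^2+2\left(f(\bm{x})-\nu_N(\bm{x})\right)\bm{e}^T\bm{p}_d,
\end{align*}
so the learning error enters $\dot{V}$ only through the scalar $\bm{e}^T\bm{p}_d$.

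The second step is to insert the uniform error bound of \cref{th:errbound_with}. Since $f(\cdot)$ satisfies \cref{ass:samplefun} and the observations satisfy \cref{ass:noise}, that theorem guarantees with probability at least $1-\delta$ that $|f(\bm{x})-\nu_N(\bm{x})|\le\sqrt{\beta(\tau)}\sigma_N(\bm{x})+\gamma(\tau)$ for all $\bm{x}\in\mathbb{X}$. Substituting and bounding $2(f-\nu_N)\bm{e}^T\bm{p}_d\le 2(\sqrt{\beta(\tau)}\sigma_N(\bm{x})+\gamma(\tau))|\bm{e}^T\bm{p}_d|$ shows, on the event of probability $1-\delta$, that $\dot{V}<0$ exactly when $\|\bm{e}\|^2>2(\sqrt{\beta(\tau)}\sigma_N(\bm{x})+\gamma(\tau))|(\bm{x}-\bm{x}_{\mathrm{ref}})^T\bm{p}_d|$, i.e. precisely on $\mathbb{X}\setminus\mathbb{L}$. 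All remaining arguments are conditioned on this single event, which is the source of the final probability $1-\delta$.

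The third step is to match this decrease condition to the hypotheses of \cref{lem:Lyap} with the identifications $\bar{v}=\min_{\bm{x}'\in\mathbb{R}^d\setminus\mathbb{X}}V(\bm{x}'-\bm{x}_{\mathrm{ref}})$ and $\ubar{v}=\max_{\bm{x}\in\mathbb{L}}V(\bm{x}-\bm{x}_{\mathrm{ref}})$. Because $V(\bm{x}-\bm{x}_{\mathrm{ref}})>\ubar{v}$ forces $\bm{x}\notin\mathbb{L}$ while $V(\bm{x}-\bm{x}_{\mathrm{ref}})\le\bar{v}$ forces $\bm{x}\in\mathbb{X}$, every point of the shell $\{\ubar{v}<V\le\bar{v}\}$ lies in $\mathbb{X}\setminus\mathbb{L}$, so the decrease hypothesis $\dot{V}<0$ on the shell is verified with no further work. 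It then remains to use the standing assumption $\mathbb{X}\setminus\mathbb{L}\subseteq\mathbb{X}_0$, equivalently $\mathbb{X}\setminus\mathbb{X}_0\subseteq\mathbb{L}$, to certify the geometric hypotheses of the lemma: that the sublevel sets nest correctly and that a trajectory initialized in $\mathbb{X}_0$ cannot escape $\mathbb{X}$ (where the error bound is valid) before $V$ is driven into $\mathbb{V}=\{\bm{e}^T\bm{P}\bm{e}\le\ubar{v}\}$. Concretely, since any point of $\mathbb{X}$ outside $\mathbb{X}_0$ is contained in $\mathbb{L}$, the decrease region $\mathbb{X}\setminus\mathbb{L}$ sits inside $\mathbb{X}_0$, which I would use to establish forward invariance of $\mathbb{X}_0$ and rule out escape through $\partial\mathbb{X}$; invoking \cref{lem:Lyap} then delivers ultimate boundedness to $\mathbb{V}$ for all $\bm{x}(t_0)\in\mathbb{X}_0$.

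I expect the main obstacle to be exactly this last geometric bookkeeping: translating the inclusion $\mathbb{X}\setminus\mathbb{L}\subseteq\mathbb{X}_0$ into the level-set ordering and forward invariance demanded by \cref{lem:Lyap}, and in particular ensuring the trajectory never leaves $\mathbb{X}$ so that the probabilistic error bound, which only holds on $\mathbb{X}$, applies uniformly along the whole trajectory. The differentiation of $V$ and the substitution of the error bound are routine; the care lies in handling the boundaries of $\mathbb{X}$, $\mathbb{L}$, and the two sublevel sets consistently, and in keeping every statement on the single probability-$(1-\delta)$ event furnished by \cref{th:errbound_with}.
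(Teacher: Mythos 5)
Your proposal is correct and follows essentially the same route as the paper: the quadratic Lyapunov function $V(\bm{e})=\bm{e}^T\bm{P}\bm{e}$, the derivative bound $\dot{V}\leq-\|\bm{e}\|^2+2\bigl(\sqrt{\beta(\tau)}\sigma_N(\bm{x})+\gamma(\tau)\bigr)|\bm{e}^T\bm{p}_d|$ obtained from the Riccati identity and \cref{th:errbound_with}, and the invocation of \cref{lem:Lyap} under the inclusion $\mathbb{X}\setminus\mathbb{L}\subseteq\mathbb{X}_0$. If anything, you are more explicit than the paper about the level-set bookkeeping and about conditioning every step on the single probability-$(1-\delta)$ event, which the paper leaves implicit.
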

It is trivial to see that the ultimate bound $\ubar{v}$ and consequently the 
extension of the ultimately bounded set $\mathbb{V}$ can be made 
arbitrarily small by reducing the posterior variance $\sigma_N(\cdot)$, 
which can be achieved by adding more training data. Therefore, it is 
straight forward to prove a vanishing control error $\bm{e}$ in the 
limit of infinite training data, which is defined as asymptotic stability 
in control literature~\citep{Khalil2002}. This is shown in the following
corollary.

\begin{corollary}
	\label{th:asstab}
	Consider a control affine system~\eqref{eq:sys}, where~$f(\cdot)$ satisfies 
	\cref{ass:samplefun} for a Gaussian process defined through the covariance
	kernel $k(\cdot,\cdot)$ with continuous partial derivatives up to the fourth
	order on the set $\mathbb{X}$ and the infinite observation sequence 
	$y^{(i)}$, $i=1,\ldots,\infty$, satisfies 
	the conditions of \cref{ass:noise}. If the assumptions of \cref{th:varvan}
	are satisfied with $\alpha(N)\in\mathcal{O}\left(\log(N)^{-1-\epsilon}\right)$ for all $\bm{x}\in\mathbb{X}$, 
	then the tracking error satisfies the ultimate bound ultimate bound $\ubar{v}\in\mathcal{O}\left( \sqrt{\log(N)/\alpha(N)} \right)$ with probability of at least $1-\delta$. Moreover, the 
	feedback linearizing controller almost surely asymptotically 
	stabilizes the system for all initial states $\bm{x}(t_0)\in\mathbb{X}_0$ in the limit of 
	infinitely many training samples.
\end{corollary}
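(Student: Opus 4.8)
The plan is to apply \cref{thm:Stable} to obtain ultimate boundedness for finite $N$, to make the size of the ultimately bounded set explicit in terms of the uniform error bound, and finally to substitute the asymptotic rate of that error bound supplied by \cref{th:as_err_data}. First I would check that the hypotheses of \cref{thm:Stable} are in force. Since the kernel has continuous partial derivatives up to fourth order, \cref{th:Lip_f} (together with \cref{th:fmax}) guarantees that the sample function $f(\cdot)$ admits, with arbitrarily high probability, a finite Lipschitz constant $L_f$ on the compact set $\mathbb{X}$; combined with \cref{ass:samplefun} and \cref{ass:noise} this is exactly what \cref{thm:Stable} requires. Hence, with probability at least $1-\delta$, the tracking error is ultimately bounded to $\mathbb{V}=\{\bm{e}:\bm{e}^T\bm{P}\bm{e}\leq\ubar{v}\}$ with $\ubar{v}=\max_{\bm{x}\in\mathbb{L}}(\bm{x}-\bm{x}_{\mathrm{ref}})^T\bm{P}(\bm{x}-\bm{x}_{\mathrm{ref}})$.

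Next I would make $\ubar{v}$ explicit. Writing $\bm{e}=\bm{x}-\bm{x}_{\mathrm{ref}}$ and $\bar{\eta}=\max_{\bm{x}\in\mathbb{X}}(\sqrt{\beta(\tau)}\sigma_N(\bm{x})+\gamma(\tau))$, the defining inequality of $\mathbb{L}$ together with Cauchy--Schwarz, $|\bm{e}^T\bm{p}_d|\leq\|\bm{e}\|\,\|\bm{p}_d\|$, yields $\|\bm{e}\|\leq 2\bar{\eta}\|\bm{p}_d\|$ for every $\bm{x}\in\mathbb{L}$ with $\bm{e}\neq\bm{0}$. Consequently $(\bm{x}-\bm{x}_{\mathrm{ref}})^T\bm{P}(\bm{x}-\bm{x}_{\mathrm{ref}})\leq\lambda_{\max}(\bm{P})\|\bm{e}\|^2\leq 4\lambda_{\max}(\bm{P})\|\bm{p}_d\|^2\bar{\eta}^2$, so $\ubar{v}$ is controlled by a constant multiple of the (squared) uniform error bound $\bar{\eta}$. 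The key point is that $\ubar{v}$ inherits the asymptotics of $\bar{\eta}$ up to the fixed constant $4\lambda_{\max}(\bm{P})\|\bm{p}_d\|^2$.

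It then remains to quantify $\bar{\eta}$ asymptotically. Since the conditions of \cref{th:varvan} hold with the stated order on $\alpha$, \cref{th:varvan} gives $\sigma_N(\bm{x})\in\mathcal{O}(1/\sqrt{\alpha(N)})$ for every $\bm{x}\in\mathbb{X}$, and \cref{th:as_err_data} (equivalently, \cref{th:as_err} with a virtual grid $\tau=\tau(N)$ shrinking polynomially, so that $\beta(\tau)\in\mathcal{O}(\log N)$ while $\gamma(\tau)\to 0$ faster than $\sigma_N$) shows $\bar{\eta}\in\mathcal{O}(\sqrt{\log(N)/\alpha(N)})$ almost surely. Substituting this rate into the bound on $\ubar{v}$ yields the stated convergence rate for the ultimate bound, which holds with probability at least $1-\delta$ for fixed $N$. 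Because $\alpha$ is class $\mathcal{K}_{\infty}$ and dominates $\log N$, we have $\bar{\eta}\to 0$ and hence $\ubar{v}\to 0$; as the convergence in \cref{th:varvan} and \cref{th:as_err_data} is almost sure, the ultimately bounded set $\mathbb{V}$ shrinks to $\{\bm{0}\}$ almost surely, which is precisely almost-sure asymptotic stabilization of $\bm{e}=\bm{0}$ for all $\bm{x}(t_0)\in\mathbb{X}_0$ in the limit $N\to\infty$.

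The main obstacle I anticipate lies in the step underlying \cref{th:as_err_data}: the data-dependent Lipschitz constants $L_{\nu_N}$ and $L_{\sigma_N^2}$ from \cref{th:errbound_with} grow with $N$ (through the $\sqrt{N}$ and $N$ factors and the norm of the inverse kernel matrix), so one must couple the shrinking grid $\tau(N)$ to this growth carefully, keeping $\gamma(\tau)$ negligible while $\beta(\tau)$ grows only logarithmically. The second delicate point is upgrading the fixed-$N$, probability-$(1-\delta)$ guarantee of \cref{thm:Stable} to the almost-sure statement, which I expect to require a summable sequence $\delta_N$ and a Borel--Cantelli argument so that the error bound holds simultaneously for all sufficiently large $N$.
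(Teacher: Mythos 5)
Your proposal is correct and follows essentially the same route as the paper, whose proof is literally the one-line combination of Corollary~\ref{th:as_err_data} and Theorem~\ref{thm:Stable}; you simply fill in the details (the Cauchy--Schwarz step bounding $\ubar{v}$ by a constant times the squared uniform error bound, and the substitution of the rate from Corollary~\ref{th:as_err_data}). Note that your explicit computation in fact gives $\ubar{v}\in\mathcal{O}\left(\log(N)/\alpha(N)\right)$, which is stronger than, and hence implies, the stated $\mathcal{O}\left(\sqrt{\log(N)/\alpha(N)}\right)$ since the argument tends to zero.
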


\section{Numerical Evaluation}
\label{sec:numEval}
In this section we illustrate the behavior 
of the proposed bounds for the posterior variance,
the learning error and the control error. \cref{subsec:VarUniform} 
compares our variance bounds to the exact posterior
variance and bounds from literature for different sampling
distributions. In \cref{subsec:simPrior} we illustrate the
importance of suitable prior distributions for meaningful
uniform error bounds and evaluate the proposed constrained
hyperparameter optimization. Finally, the data-dependency of 
the feedback linearizing controller is analyzed in \cref{subsec:safety}
before it is applied to reference tracking 
with a real-world robotic manipulator in \cref{subsec:robot}.

\subsection{Decrease Rate of Posterior Variance Bounds}
\label{subsec:VarUniform}

We compare the bounds in \cref{th:var_bound} and 
Corollary~\ref{cor:var_bound} to the exact 
posterior variance for GPs with a 
squared exponential, a Mat\'ern kernel with 
$\nu=\frac{1}{2}$, a polynomial kernel with $p=3$ and 
a neural network kernel. Furthermore, we evaluate the 
bound $\sigma_2^2(\bm{x})$ proposed in \citep{Williams2000}, which 
considers the two closest points in the data set 
to the test point resulting in 
\begin{align*}
\bar{\sigma}_2^2(x)=
k(0)-\frac{(k(0)+\sigma_n^2)(k^2(\rho_2)+k^2(\rho_1))-2k(\eta)k(\rho_1)k(\rho_2)}
{(k(0)+\sigma_n^2)^2-k^2(\eta)},
\end{align*}
where~$\rho_1$ and~$\rho_2$ are the distances to the two closest training 
samples and $\eta$ is the distance between the two closest training samples.
Additionally, we consider the bound for the mean square prediction error proposed
in~\citep{Wang2018} which corresponds to a bound on the posterior variance. This bound
is given by
\begin{align*}
	\bar{\sigma}_{\mathrm{MSPE}}^2(\bm{x})&=k(\bm{x},\bm{x})-2k(\underline{\bm{x}},\bm{x})+
	k(\underline{\bm{x}},\underline{\bm{x}})-\frac{(k(\underline{\bm{x}},\bm{x})+
		k(\underline{\bm{x}},\underline{\bm{x}}))^2}{N\sup\limits_{\bm{x}',\bm{x}''\in\mathbb{X}}
		k(\bm{x}',\bm{x}'')+\sigma_n^2}\nonumber\\
	&+\frac{\sigma_n^2(N\sup\limits_{\bm{x}',\bm{x}''\in\mathbb{X}}
		k(\bm{x}',\bm{x}'')+2(k(\underline{\bm{x}},\bm{x})+
		k(\underline{\bm{x}},\underline{\bm{x}}))^2)}{N\sup\limits_{\bm{x}',\bm{x}''\in\mathbb{X}}
		k(\bm{x}',\bm{x}'')+\sigma_n^2},
\end{align*}
where $\underline{\bm{x}}$ denotes the closest point in the training data set 
to the test point $\bm{x}$. The posterior variance is 
evaluated at the point $x=1$ for a uniform training 
data distribution $\mathcal{U}([0.5,1.5])$. 
Furthermore, the 
length scale of the kernels 
is set to $l=1$ where applicable and the noise 
variance is set to~$\sigma_n^2=0.1$. 
In order to obtain a good value 
for the information radius $\rho$, consider the 
following approximation of \eqref{eq:isobound} for 
isotropic kernels 
\begin{align*}
\hat{\sigma}_{\rho}^2(1)\approx k(0)-
\frac{k^2(\rho)}{k(0)}+\frac{k(0)\sigma_n^2}{N\rho k(0)
	+\sigma_n^2},
\end{align*}
where we use the expectation of 
$E[|\mathbb{B}_{\rho}(1)|]=N\rho$ instead of the 
random variable~$|\mathbb{B}_{\rho}(1)|$. For the 
squared exponential kernel the Taylor 
expansion around~$\rho=0$ yields
\begin{align*}
k(0)-\frac{k^2(\rho)}{k(0)}\approx 
2\frac{\rho^2}{l^2}+\mathcal{O}(\rho^3).
\end{align*}
Therefore, for large~$N$ the best asymptotic behavior 
of \eqref{eq:isobound} 
is achieved with~$\rho(N)=cN^{-\frac{1}{3}}$ 
for the squared exponential kernel under uniform sampling
and leads 
to~$\hat{\sigma}_{\rho}^2(1)\approx \mathcal{O}
(N^{-\frac{2}{3}})$.
The same approach can 
be used to calculate the information radius~$\rho(N)$ with 
the best asymptotic behavior of the bound in
\cref{cor:var_bound} for the Mat\'ern kernel
with~$\nu=\frac{1}{2}$. This leads
to~$\rho(N)=cN^{-\frac{1}{2}}$ and an asymptotic 
behavior of~$\hat{\sigma}_{\rho}^2(1)\approx 
\mathcal{O}(N^{-\frac{1}{2}})$. For the 
non-isotropic kernels, we pursue a similar 
approach and substitute the expected number of
samples~$N\rho$ in \eqref{eq:sigbound}, 
which results in the asymptotically
optimal~$\rho(N)=cN^{-\frac{1}{2}}$ and 
$\hat{\sigma}_{\rho}^2(1)\approx 
\mathcal{O}(N^{-\frac{1}{2}})$. For the general bound in 
Theorem~\ref{th:var_bound} we consider $\bm{x}^*=\bm{x}=1$ as
reference point for the isotropic kernels, whereas we choose 
$\bm{x}^*=1.4$ as reference point for the non-isotropic kernels 
because it exhibits higher variance $k(1.4,1.4)$ and lies not
on the boundary of the considered interval~$[0.5,1.5]$. The 
posterior variance bounds and the exact posterior variance 
$\sigma_N^2(1)$ averaged over $20$ training data sets are 
illustrated in \cref{fig:varUni}.

\begin{figure}[t]
	\centering
		\begin{minipage}{0.47\textwidth}
			\centering
			\def\file{plots/var_uniform_squaredexponential.txt}
			\tikzsetnextfilename{var_SE}
			\begin{tikzpicture}
			\begin{loglogaxis}[xmin=0,xmax=12200,ymin=0.000004,ymax=0.99, samples=100,
			grid=none, axis y line=left, axis x line=bottom, 
			ylabel={$\sigma_N^2$}, 
			scaled x ticks=false,legend pos=south west, title={Squared Exponential}
			]
			\addplot[black] table[x = idx,y = sig_m ]{\file};
			\addplot[red,dashed] table[x = idx,y = sig_bm ]{\file};
			\addplot[red] table[x = idx,y = sig_bm_gen ]{\file};
			\addplot[blue] table[x = idx,y = sig2 ]{\file};
			\addplot[green] table[x = idx,y = sig_wang ]{\file};
			\end{loglogaxis}
			\end{tikzpicture}
		\end{minipage}\hfill
		\begin{minipage}{0.47\textwidth}
			\centering
			\def\file{plots/var_uniform_exponential.txt}
			\tikzsetnextfilename{var_exp}
			\begin{tikzpicture}
			\begin{loglogaxis}[xmin=0,xmax=12200,ymin=0.000004,ymax=0.99, samples=100,
			grid=none, axis y line=left, axis x line=bottom, 
			ylabel={$\sigma_N^2$}, 
			scaled x ticks=false, 
			legend columns = 2, 
			legend style={at={(-0.215,-0.55)}, anchor= west}, title={Exponential}
			]
			\addplot[black] table[x = idx,y = sig_m ]{\file};
			\addplot[red,dashed] table[x = idx,y = sig_bm ]{\file};
			\addplot[red] table[x = idx,y = sig_bm_gen ]{\file};
			\addplot[blue] table[x = idx,y = sig2 ]{\file};
			\addplot[green] table[x = idx,y = sig_wang ]{\file};
			\end{loglogaxis}
			\end{tikzpicture}
		\end{minipage}
		\begin{minipage}{0.47\textwidth}
			\centering
			\def\file{plots/var_uniform_polynomial.txt}
			\tikzsetnextfilename{var_poly}
			\begin{tikzpicture}
			\begin{loglogaxis}[xmin=0,xmax=12200,ymin=0.000004,ymax=0.99, samples=100,
			grid=none, axis y line=left, axis x line=bottom, 
			xlabel=$N$,ylabel={$\sigma_N^2$}, 
			scaled x ticks=false,legend pos=south west, title={Polynomial}
			]
			\addplot[black] table[x = idx,y = sig_m ]{\file};
			\addplot[red] table[x = idx,y = sig_bm_gen ]{\file}; 
			\addplot[green] table[x = idx,y = sig_wang ]{\file};
			\end{loglogaxis}
			\end{tikzpicture}
		\end{minipage}\hfill
		\begin{minipage}{0.47\textwidth}
			\centering
			\def\file{plots/var_uniform_neuralnetwork.txt}
			\tikzsetnextfilename{var_nn}
			\begin{tikzpicture}
			\begin{loglogaxis}[xmin=0,xmax=12200,ymin=0.000004,ymax=0.99, 
			samples=100,
			grid=none, axis y line=left, axis x line=bottom, 
			xlabel=$N$,ylabel={$\sigma_N^2$}, 
			scaled x ticks=false, 
			legend columns = 2, 
			legend style={at={(-0.215,-0.55)}, anchor= west}, title={Neural 
			Network}
			]
			\addplot[black] table[x = idx,y = sig_m ]{\file};
			\addplot[red] table[x = idx,y = sig_bm_gen ]{\file};
			\addplot[green] table[x = idx,y = sig_wang ]{\file};
			\end{loglogaxis}
			\end{tikzpicture}
		\end{minipage}\\
		\begin{minipage}{\columnwidth}
			\pgfplotsset{width=30\columnwidth/100,
				height = 30\columnwidth/100 }
			\centering \tikzset{external/export next=false}
			\begin{tikzpicture} 
			\begin{axis}[%
			hide axis,
			xmin=10,
			xmax=50,
			ymin=0,
			ymax=0.4,
			legend columns = 3,
			legend style={draw=white!15!black,legend cell align=left}
			]
			\addlegendimage{black}
			\addlegendentry{$\sigma^2_{\mathrm{num}}(1)$};
			\addlegendimage{red,dashed}
			\addlegendentry{$\hat{\sigma}_{\rho}^2(1)$ from 
			Cor.~\ref{cor:var_bound}};
			\addlegendimage{red}
			\addlegendentry{$\bar{\sigma}_{\rho}^2(1)$ from 
			Thm.~\ref{th:var_bound}};
			\addlegendimage{blue}
			\addlegendentry{$\bar{\sigma}_{2}^2(1)$ \citep{Williams2000}};
			\addlegendimage{green}
			\addlegendentry{$\bar{\sigma}_{\mathrm{MSPE}}^2(1)$ 
			\citep{Wang2018}};
			\end{axis}
			\end{tikzpicture}
		\end{minipage}

	\vspace{-0.25cm}	
	\caption{Average posterior variance and bounds 
		of the squared exponential, the Mat\'ern
		kernel with~$\nu=\frac{1}{2}$, the 
		polynomial kernel with~$p=3$ and the 
		neural network kernel for 
		uniformly sampled training data}
	\label{fig:varUni}
\end{figure}
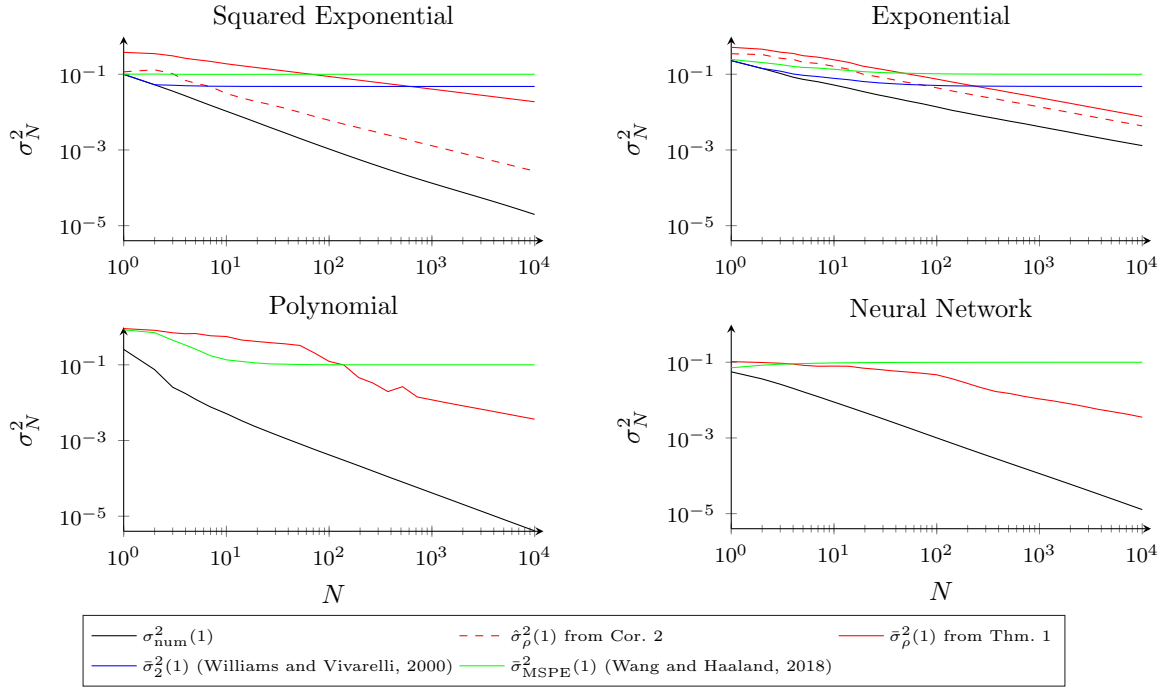

\begin{figure}[t]
			\center
		\begin{minipage}{0.47\textwidth}
			\centering
			\def\file{plots/var_vanishing_squaredexponential.txt}
			\tikzsetnextfilename{var_SE_vanishing}
			\begin{tikzpicture}
			\begin{loglogaxis}[xmin=0,xmax=12200,ymin=0.000004,
			ymax=0.99, samples=100,
			grid=none, axis y line=left, axis x line=bottom, 
			ylabel={$\sigma_N^2$}, 
			scaled x ticks=false,legend pos=south west, title={Squared Exponential}
			]
			\addplot[black] table[x = idx,y = sig_m ]{\file};
			\addplot[red,dashed] table[x = idx,y = sig_bm ]{\file};
			\addplot[red] table[x = idx,y = sig_bm_gen ]{\file};
			\addplot[blue] table[x = idx,y = sig2 ]{\file};
			\addplot[green] table[x = idx,y = sig_wang ]{\file};
			\end{loglogaxis}
			\end{tikzpicture}
		\end{minipage}\hfill
		\begin{minipage}{0.47\textwidth}
			\centering
			\def\file{plots/var_vanishing_exponential.txt}
			\tikzsetnextfilename{var_exp_vanishing}
			\begin{tikzpicture}
			\begin{loglogaxis}[xmin=0,xmax=12200,ymin=0.000004,
			ymax=0.99, samples=100, grid=none, axis y line=left, 
			axis x line=bottom, ylabel={$\sigma_N^2$},
			scaled x ticks=false, 
			legend columns = 2, 
			legend style={at={(-0.215,-0.55)}, anchor= west}, title={Exponential}
			]
			\addplot[black] table[x = idx,y = sig_m ]{\file};
			\addplot[red,dashed] table[x = idx,y = sig_bm ]{\file};
			\addplot[red] table[x = idx,y = sig_bm_gen ]{\file};
			\addplot[blue] table[x = idx,y = sig2 ]{\file};
			\addplot[green] table[x = idx,y = sig_wang ]{\file};
			\end{loglogaxis}
			\end{tikzpicture}
		\end{minipage}
		\begin{minipage}{0.47\textwidth}
			\centering
			\def\file{plots/var_vanishing_polynomial.txt}
			\tikzsetnextfilename{var_poly}
			\begin{tikzpicture}
			\begin{loglogaxis}[xmin=0,xmax=12200,ymin=0.000004,ymax=0.99, samples=100,
			grid=none, axis y line=left, axis x line=bottom, 
			xlabel=$N$,ylabel={$\sigma_N^2$}, 
			scaled x ticks=false,legend pos=south west, title={Polynomial}
			]
			\addplot[black] table[x = idx,y = sig_m ]{\file};
			\addplot[red] table[x = idx,y = sig_bm_gen ]{\file};
			\addplot[green] table[x = idx,y = sig_wang ]{\file};
			\end{loglogaxis}
			\end{tikzpicture}
		\end{minipage}\hfill
		\begin{minipage}{0.47\textwidth}
			\centering
			\def\file{plots/var_vanishing_neuralnetwork.txt}
			\tikzsetnextfilename{var_nn}
			\begin{tikzpicture}
			\begin{loglogaxis}[xmin=0,xmax=12200,ymin=0.000004,ymax=0.99, samples=100,
			grid=none, axis y line=left, axis x line=bottom, 
			xlabel=$N$,ylabel={$\sigma_N^2$}, 
			scaled x ticks=false, 
			legend columns = 2, 
			legend style={at={(-0.215,-0.55)}, anchor= west}, title={Neural Network}
			]
			\addplot[black] table[x = idx,y = sig_m ]{\file};
			\addplot[red] table[x = idx,y = sig_bm_gen ]{\file};
			\addplot[green] table[x = idx,y = sig_wang ]{\file};
			\end{loglogaxis}
			\end{tikzpicture}
		\end{minipage}\\
		\begin{minipage}{\textwidth}
		\pgfplotsset{width=30\columnwidth/100,
			height = 30\columnwidth/100 }
		\centering	\tikzset{external/export next=false}
		\begin{tikzpicture} 
		\begin{axis}[%
		hide axis,
		xmin=10,
		xmax=50,
		ymin=0,
		ymax=0.4,
		legend columns = 3,
		legend style={draw=white!15!black,legend cell align=left}
		]
		\addlegendimage{black}
		\addlegendentry{$\sigma^2_{\mathrm{num}}(1)$};
		\addlegendimage{red,dashed}
		\addlegendentry{$\hat{\sigma}_{\rho}^2(1)$ from 
		Cor.~\ref{cor:var_bound}};
		\addlegendimage{red}
		\addlegendentry{$\bar{\sigma}_{\rho}^2(1)$ from 
		Thm.~\ref{th:var_bound}};
		\addlegendimage{blue}
		\addlegendentry{$\bar{\sigma}_{2}^2(1)$ \citep{Williams2000}};
		\addlegendimage{green}
		\addlegendentry{$\bar{\sigma}_{\mathrm{MSPE}}^2(1)$ \citep{Wang2018}};
		\end{axis}
		\end{tikzpicture}
	\end{minipage}				
		
	\vspace{-0.25cm}
	\caption{Average posterior variance and bounds 
		of the squared exponential, the Mat\'ern
		kernel with~$\nu=\frac{1}{2}$, the 
		polynomial kernel with~$p=3$ and the 
		neural network kernel for 
		training data sampled from vanishing distribution}
	\label{fig:VarVan}
\end{figure}
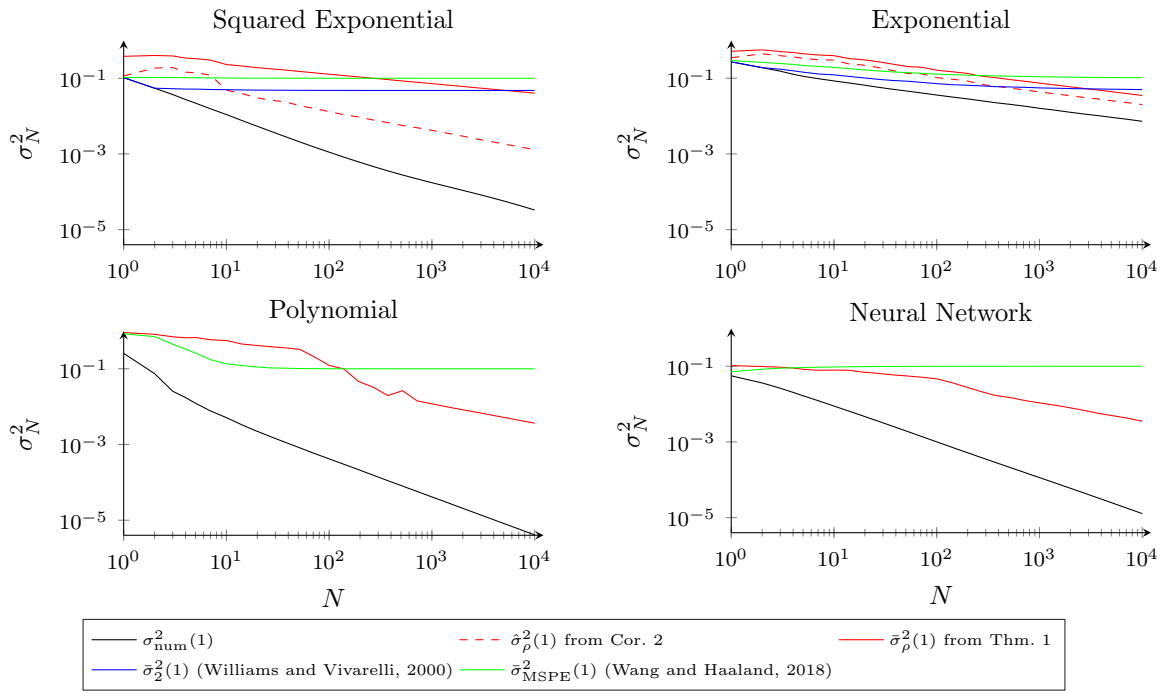

We also compare the bounds in \cref{th:var_bound} and 
\cref{cor:var_bound} to the exact 
posterior variance for training data sampled from 
the distribution with density function
\begin{align*}
p(x)=4|1-x|,\quad 0.5\leq x\leq 1.5.
\end{align*}
This probability density vanishes at the test point 
$x=1$ and it leads to~$\tilde{p}(N)=4\rho^2(N)$
for~\mbox{$\rho(N)\leq 0.5$}. By employing a Taylor 
expansion of the kernel around the test point,
we can derive the optimal asymptotic decay rates
for~$\rho(N)$ as in the previous section. 
For the Mat\'ern kernel, this leads to 
$\rho(N)=cN^{-\frac{1}{3}}$ and an asymptotic 
behavior of the posterior variance~$\sigma_N^2(1)\approx
\mathcal{O}(N^{-\frac{1}{3}})$. For the squared 
exponential kernel, a slightly faster decreasing 
$\rho(N)=cN^{-\frac{1}{4}}$ can be chosen, which 
results in~$\hat{\sigma}_{\rho}^2(1)\approx 
\mathcal{O}(N^{-\frac{1}{2}})$. For the non-isotropic
kernels we choose the reference point at $\bm{x}^*=1.4$
such that it does not suffer from the vanishing 
distribution. Hence, we choose $\rho(N)=cN^{-\frac{1}{2}}$
as for the uniform distributed training data. 
The resulting posterior variance bounds and the 
exact posterior variance are illustrated 
in \cref{fig:VarVan}. \looseness=-1

The posterior variance bounds for the isotropic
squared exponential and Mat\'ern kernel exhibit 
a similar decrease rate as the actually observed 
one in \cref{fig:varUni} and \cref{fig:VarVan}. 
Indeed, the bound for the
Mat\'ern kernel shows the exact same behavior
and only differs by a constant factor for large~$N$. 
However, for non-isotropic kernels, our 
bound in \cref{th:var_bound} is rather loose 
as it converges with~$\mathcal{O}(N^{-\frac{1}{2}})$ 
while the true posterior variance exhibits a decay
rate of approximately~$\mathcal{O}(N^{-1})$ for the 
uniform distribution in \cref{fig:varUni}. Furthermore, 
merely a small difference of the decrease rate of the 
numerically estimated posterior variance and our 
proposed bound $\bar{\sigma}_{\rho}^2(1)$ can be observed 
between both figures. This is caused by the non-isotropy
of the kernel and the bound: the kernel considers 
data globally, while our bound considers data locally 
around a non-local reference point $\bm{x}^*$. Finally,
both of our proposed bounds are advantageous compared 
to existing approaches in the fact that they converge to
a constant decrease rate, whereas the existing bounds 
converge to a constant value. Therefore, our bounds 
can be applied to for small as well as large numbers 
of training samples, while existing approaches work 
well only for small numbers of training samples.

\subsection{Importance of Priors for Uniform Error Bounds}
\label{subsec:simPrior}

It is always possible to find a probability $\delta$ such that the uniform
error bound \eqref{eq:un_err} holds on the whole considered set $\mathbb{X}$.
However, this value can be very small. Therefore, suitable hyperparameters 
are crucial to shape the prior distribution such that it assigns proper 
probability to the functions that coincide with the prior information and 
consequently, the uniform error bound holds with reasonably small values 
of $\delta$. In order to demonstrate the effectiveness of incorporating 
prior information about maximum values and maximum derivative
values through constraints on the hyperparameters, we compare 
the satisfaction of the uniform error bound \eqref{eq:errorbound} 
for constrained and unconstrained hyperparameters on two practically
relevant scenarios.

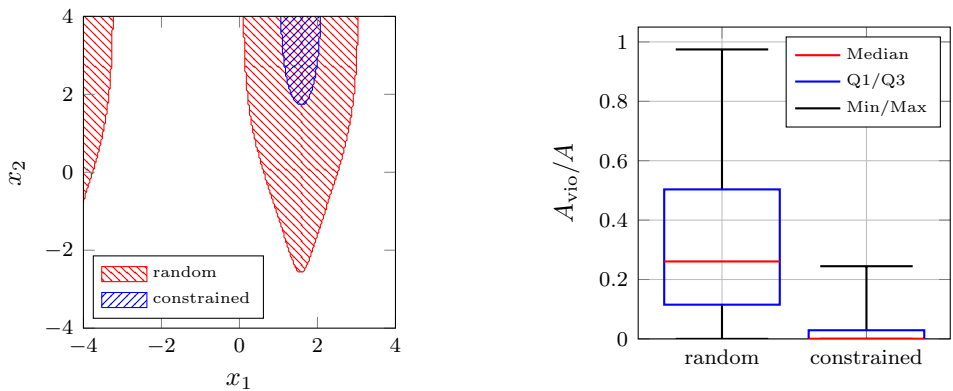
\begin{figure}
	\pgfplotsset{width=80\columnwidth/100,
		height = 80\columnwidth/100 }
	\centering
	\begin{minipage}{0.47\textwidth}
		\tikzsetnextfilename{SEisobound1}
		\def\file{plots/SEisobound1.txt}
		\center
		\begin{tikzpicture}
		\begin{axis}[grid=none,enlargelimits=false, axis on top,
		xlabel={$x_1$}, ylabel={$x_2$}, legend pos = south east,
		xmin=-4, xmax = 4, ymin = -4, ymax = 4,	legend columns=1,
		legend pos=south west,
		axis background/.style={fill=white},
		]
		\addplot[red, fill=red, pattern=north west lines,pattern color=red,forget plot] table[x=contred0_c2_1,y=contred0_c2_2]{\file};
		\addplot[red, fill=red,pattern=north west lines,pattern color=red,forget plot] table[x=contred0_c1_1,y=contred0_c1_2]{\file};
		\addplot[blue,fill=blue,pattern=north east lines, pattern color=blue,forget plot] table[x=contred_c1_1,y=contred_c1_2]{\file};
		
		\addlegendimage{area legend,red,fill=red,pattern=north west lines, pattern color=red}
		\addlegendentry{random};
		\addlegendimage{area legend,blue,fill=blue,pattern=north east lines, pattern color=blue}
		\addlegendentry{constrained};
		\end{axis}
		\end{tikzpicture}
	\end{minipage}\hfill
	\begin{minipage}{0.47\textwidth}
		\pgfplotsset{boxplot/every median/.style={color=red,thick},
			boxplot/every box/.style={blue=red,thick},
			boxplot/every whisker/.style={color=black,thick},}
		\begin{tikzpicture}
		\begin{axis}[
		ymax=1.05,
		ymin=0,
		boxplot/draw direction=y,
		xtick distance=1,
		ytick distance=0.2,
		ylabel={$A_{\mathrm{vio}}/A$},
		xticklabels={,,{random},{constrained}},xticklabel style={align=center},legend pos=north east]		]
		\addlegendimage{thick,red};
		\addlegendentry{Median};
		\addlegendimage{thick,blue};
		\addlegendentry{Q1/Q3};
		\addlegendimage{thick,black};
		\addlegendentry{Min/Max};
		
		] coordinates {};
		
		\addplot+[black,
		boxplot prepared={
			median=0.2608,
			upper quartile=0.5034,
			lower quartile=0.1152,
			upper whisker=0.9747,
			lower whisker=0.0
		},
		] coordinates {};
		\addplot+[black,
		boxplot prepared={
			median=0.0,
			upper quartile=0.0291,
			lower quartile=0.0,
			upper whisker=0.2447,
			lower whisker=0.0
		},
		] coordinates {};
		
		\end{axis}
		\end{tikzpicture}
	\end{minipage}
	\caption{Violation of the uniform error bound for GP regression with isotropic squared 
		exponential kernel, where the hyperparameters are obtained from constrained and 
		unconstrained random sampling}
	\label{fig:errorBound_isoprior}
\end{figure}

The first scenario considers the problem of choosing
hyperparameters without any data, which is an issue that is typically
avoided in applications such as sequential decision making by 
assuming randomly distributed measurements. In physical applications, 
however, there is typically no principled way for obtaining such 
initial data. Therefore, we assume random hyperparameters instead 
and enforce the constraints \eqref{eq:fcon} and \eqref{eq:Lcon} 
by redrawing samples. We evaluate the performance on the function
$f(\bm{x})=\sin(x_1)+\frac{1}{1+\mathrm{e}^{-x_2}}$ and consider the
prior information $\bar{f}=1.5$, $\ubar{f}=-0.8$, $\bar{f}^{\partial 1}=0.8$, 
$\ubar{f}^{\partial 1}=-0.8$,$\bar{f}^{\partial 2}=0.2$ and 
$\ubar{f}^{\partial 2}=0.0$ with probability $\delta_f=\delta_L=0.99$. 
We investigate the uniform error bound with $\delta=0.1$ and $\tau=1e-2$
on the region $[-4,4]^2$ for a Gaussian process with isotropic 
squared exponential kernel. The signal standard deviation $\sigma_f$ 
and the inverse length scale $1/l$ are drawn from the exponential 
distribution with mean $0.05$ and we perform $1000$ simulation 
iterations. The violation of the uniform error bound for a 
constrained and unconstrained hyperparameter random sample is depicted 
on the left side of \cref{fig:errorBound_isoprior}, while the right side 
illustrates the quotient of the violation surface $A_{\mathrm{vio}}$ and 
the total surface $A$. 
Since the assumed bounds $\bar{f}$ and $\bar{f}^{\partial i}$ are smaller
than the actual function properties and the constraints
\eqref{eq:fcon} and \eqref{eq:Lcon} are merely necessary conditions, the 
satisfaction of the uniform error bound cannot be guaranteed in general 
for the chosen value of $\delta$. However, the simulations clearly 
indicate that the constraints significantly reduce the constraint violation
compared to purely random hyperparameters.

In the second scenario, we address the problem of training data which
lies only on a small subset of the input domain such that the data 
only captures limited information of the true system. We consider the 
function $f(\bm{x})=(1-\tanh(100x_2))x_1$ and prior bounds 
$\bar{f}=8$, $\ubar{f}=-8$, $\bar{f}^{\partial 1}=2$, $\ubar{f}^{\partial 1}=0$, 
$\bar{f}^{\partial 2}=400$ and $\ubar{f}^{\partial 2}=-400$ 
with probability $\delta_f=\delta_L=0.5$. The error bound is 
investigated on $[-4,4]^2$ with $\delta=0.01$, $\tau=1e-8$ and a 
probabilistic Lipschitz constant based on \cref{th:Lip_f} with 
$\delta_L=0.01$ for a Mat\'ern kernel with $\nu=\frac{3}{2}$ which 
is trained using $20$ training samples uniformly distributed on the set 
$[-4,4]\times[0,4]$. The result of single simulation and the violation
percentage over $1000$ iterations of constrained and unconstrained 
hyperparameter optimization are depicted in \cref{fig:errorBound_Maternprior}.
It can be clearly seen that the constraints \eqref{eq:fcon} and \eqref{eq:Lcon}
significantly contribute to the interpretability and reliability of 
the uniform error bound~\eqref{eq:errorbound}. 

\begin{figure}
	\pgfplotsset{width=80\columnwidth/100,
		height = 80\columnwidth/100 }
	\centering
	\begin{minipage}{0.47\textwidth}
		\tikzsetnextfilename{Maternbound}
		\def\file{plots/Maternbound1.txt}
		\center
		\begin{tikzpicture}
		\begin{axis}[grid=none,enlargelimits=false, axis on top,
		xlabel={$x_1$}, ylabel={$x_2$}, legend pos = south east,
		xmin=-4, xmax = 4, ymin = -4, ymax = 4,	legend columns=1,
		legend pos=south east]
		\addplot[black, only marks] table[x=Xtrain_1,y=Xtrain_2]{\file};
		\addlegendentry{training data};
		\addplot[red, fill=red,pattern=north west lines,pattern color=red,forget plot] table[x=cont0cell_c1_1,y=cont0cell_c1_2]{\file};
		\addplot[red, fill=red,pattern=north west lines,pattern color=red,forget plot] table[x=cont0cell_c2_1,y=cont0cell_c2_2]{\file};
		
		\addplot[blue,fill=blue,pattern=north east lines, pattern color=blue,forget plot] table[x=contcell_c1_1,y=contcell_c1_2]{\file};
		\addplot[blue,fill=blue,pattern=north east lines, pattern color=blue,forget plot] table[x=contcell_c2_1,y=contcell_c2_2]{\file};
		
		\addlegendimage{area legend,red,fill=red,pattern=north west lines, pattern color=red}
		\addlegendentry{unconstrained};
		\addlegendimage{area legend,blue,fill=blue,pattern=north east lines, pattern color=blue}
		\addlegendentry{constrained};
		\end{axis}
		\end{tikzpicture}
	\end{minipage}\hfill
	\begin{minipage}{0.47\textwidth}
		\pgfplotsset{boxplot/every median/.style={color=red,thick},
			boxplot/every box/.style={blue=red,thick},
			boxplot/every whisker/.style={color=black,thick},}
		\begin{tikzpicture}
		\begin{axis}[
		ymax=0.58,
		ymin=-0.0,
		boxplot/draw direction=y,
		xtick distance=1,
		ytick distance=0.1,
		ylabel={$A_{\mathrm{vio}}/A$},
		xticklabels={,,{unconstrained},{constrained}},xticklabel style={align=center},legend pos=north east]
		\addlegendimage{thick,red};
		\addlegendentry{Median};
		\addlegendimage{thick,blue};
		\addlegendentry{Q1/Q3};
		\addlegendimage{thick,black};
		\addlegendentry{Min/Max};
		
		] coordinates {};
		
		\addplot+[black,
		boxplot prepared={
			median=0.3200,
			upper quartile=0.34,
			lower quartile=0.3,
			upper whisker=0.4,
			lower whisker=0.0
		},
		] coordinates {};
		\addplot+[black,
		boxplot prepared={
			median=0.18,
			upper quartile=0.3,
			lower quartile=0.12,
			upper whisker=0.38,
			lower whisker=0.0
		},
		] coordinates {};
		
		\end{axis}
		\end{tikzpicture}
	\end{minipage}	
	\caption{Violation of the uniform error bound for GP regression with Mat\'ern kernel with 
		$\nu=\frac{3}{2}$, where the hyperparameters are obtained from constrained and 
		unconstrained loglikelihood optimization}
	\label{fig:errorBound_Maternprior}
\end{figure}
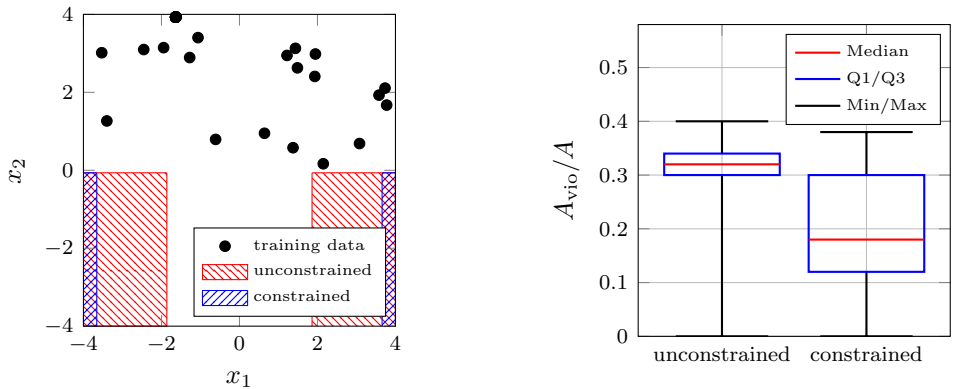

\subsection{Data-dependency of Safety Regions for Learning-based Control}
\label{subsec:safety}

Due to the standard deviation dependence of the tracking error bound 
in \cref{thm:Stable}, the guaranteed tracking performance is strongly influenced
by the training data distribution and density. We investigate these relationships on a 
two-dimensional system of the form~\eqref{eq:sys} 
with\linebreak~\mbox{$f(\bm{x}) = 1-\sin(2x_1) +  \frac{1}{1+\exp(-x_2)}$}. In 
order to demonstrate the effect of the distribution, we use a uniform 
grid over~$[0\ 3]\times[-5\ 5]$ with~$25$ points and~$\sigma_n^2 = 0.04$ as 
training data set, such that half of the considered state 
space~$\mathbb{X} =[-4\ 4]\times[-6\ 6]$ is not covered by training data.
The impact of the training data density is showed by determining the tracking error 
for data sets with $N=4m^2$, $m=1,\ldots,50$, samples on a uniform grid 
over $\mathbb{X}=[-3,3]\times[-5,5]$.
We consider a circular reference trajectory~$x_d(t) = 2\sin(t)$ 
and choose small gain controller gains~$k_c=4$ and~$\lambda=2$. A Gaussian
process with automatic relevance determination is employed for regression
and the hyperparameters are constrained with $\bar{f}=2.5$, 
$\bar{f}^{\partial 1}=1.6$ and $\bar{f}^{\partial 2}=0.2$ with probability 
$\delta_f=\delta_L=0.99$ according to~\eqref{eq:fbound} and \eqref{eq:Lfbound}.
The Lipschitz constant $L_f$ is determined probabilistically using 
Corollary~\ref{th:Lip_f} with $\delta_L=0.01$ leading to a conservative value
which is compensated by $\tau=10^{-6}$ in \cref{th:errbound_with}. We 
combine Corollary~\ref{th:Lip_f} and \cref{th:errbound_with} with 
$\delta=0.01$ using a union bound approximation. 

\begin{figure*}[t]
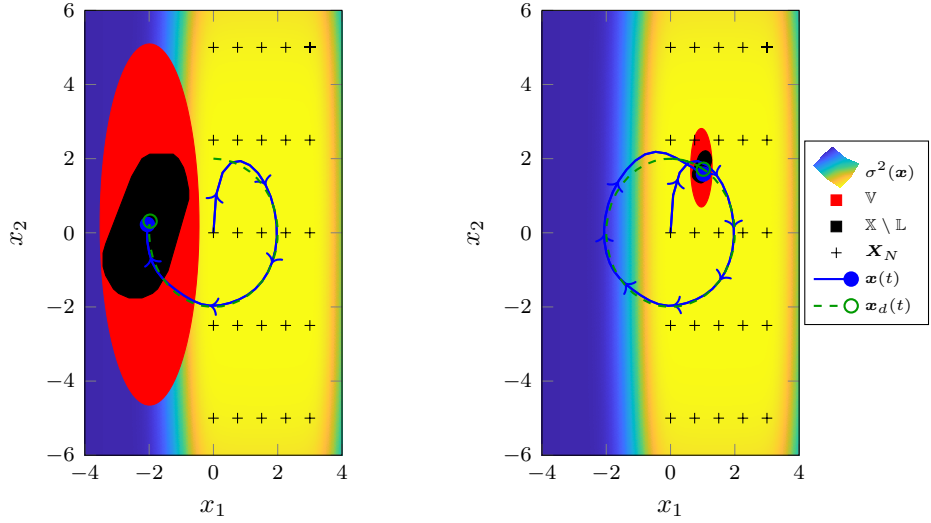

	\pgfplotsset{width=5cm,
		height = 7.5cm}
	\begin{minipage}{0.47\textwidth}
		\centering 
		\def\file{plots/2D_v2.txt}	 \def\imax{34} \def\ifin{200}
		\tikzsetnextfilename{LyapDecrTe_max}
		\begin{tikzpicture}
		\begin{axis}[view={0}{90},grid=none,enlargelimits=false, axis on top,
		xlabel={$x_1$}, ylabel={$x_2$}, legend pos = north west,
		xmin=-4, xmax = 4, ymin = -6, ymax = 6,	legend columns=5,
		legend style={at={(0,1.02)},anchor=south west}
		]
		\input{plots/2D_v2_GPvar_surf.tex}
		\addplot[red,fill=red, forget plot] table[x=Xhullmax_1,y=Xhullmax_2]{\file};
		\addplot[black,fill=black, forget plot] table[x=Linc_hull_max_1,y=Linc_hull_max_2]{\file};
		\addplot[only marks,black, mark=+]	table[x = Xtr_1, y = Xtr_2] {\file};
		\addplot[blue,-*,thick, forget plot]	
		table[x=Xsim_1,y=Xsim_2,skip coords between 
		index={\imax}{\ifin}]{\file};
		\addplot[blue,thick,forget plot, postaction={decorate,decoration={markings,
				mark=between positions 0.1 and 0.9 step 0.2 with {\arrow{>}}}}]	
		table[x=Xsim_1,y=Xsim_2,skip coords between 
		index={\imax}{\ifin}]{\file};
		\addplot[dashed,green!60!black,-o, thick]	
		table[x=Xd_1,y=Xd_2,skip coords between index={\imax}{\ifin}] {\file};
		\end{axis}
		\end{tikzpicture}
	\end{minipage}
	\begin{minipage}{0.47\textwidth}
		\def\file{plots/2D_v2.txt}	\def\imin{47} \def\ifin{200}
		\tikzsetnextfilename{LyapDecrTe_min}
		\begin{tikzpicture}
		\begin{axis}[view={0}{90},grid=none,enlargelimits=false, axis on top, 
		legend columns=1,
		xlabel={$x_1$}, ylabel={$x_2$}, legend style={at={(1.02,0.5)},anchor=west},
		xmin=-4, xmax = 4, ymin = -6, ymax = 6,
		]
		\input{plots/2D_v2_GPvar_surf.tex}
		\addlegendentry{$\sigma^2(\bm{x})$};
		\addplot[red,fill=red,forget plot] table[x=Xhullmin_1,y=Xhullmin_2]{\file};
		\addlegendimage{only marks, mark=square*,color=red}
		\addlegendentry{ $\mathbb{V}$};
		\addplot[black,fill=black, forget plot] table[x=Linc_hull_min_1,y=Linc_hull_min_2]{\file};
		\addlegendimage{only marks, mark=square*,color=black}
		\addlegendentry{ $\mathbb{X}\setminus\mathbb{L}$};
		\addplot[only marks,black, mark=+]	table[x = Xtr_1, y = Xtr_2] {\file};
		\addlegendentry{$\bm{X}_N$};
		\addplot[blue,-*,thick]	
		table[x=Xsim_1,y=Xsim_2,skip coords between 
		index={\imin}{\ifin}]{\file};
		\addplot[forget plot,blue,thick,postaction={decorate,decoration={markings,
				mark=between positions 0.1 and 0.9 step 0.1 with {\arrow{>}}}}]	
		table[x=Xsim_1,y=Xsim_2,skip coords between 
		index={\imin}{\ifin}]{\file};
		\addlegendentry{$\bm{x}(t)$};
		\addplot[dashed,green!60!black,-o, thick]	
		table[x=Xd_1,y=Xd_2, skip coords between index={\imin}{\ifin}] {\file};
		\addlegendentry{$\bm{x}_d(t)$};
		\end{axis}
		\end{tikzpicture}
	\end{minipage}
	\caption{Reference trajectory and simulated trajectory together with the 
		high probability ultimately bounded set $\mathbb{V}$. Low posterior standard 
		deviation leads to significantly smaller ultimate bounds.}
	\label{fig:LyapDecrTe}
\end{figure*}

\begin{figure}[t]
	\center
	\begin{minipage}{0.48\textwidth}
		\centering 
		\def\file{plots/2D_v2.txt}	\tikzsetnextfilename{error}
		\begin{tikzpicture}
		\begin{semilogyaxis}[xlabel={$t$},ylabel={$\sqrt{\bm{e}^T\bm{P}\bm{e}}$},
		xmin=0, ymin = 0.02, xmax = 30,ymax=8,legend columns=2]
		\addplot[blue]	table[x = T, y  = norme]{\file};
		\addplot[red]	table[x = T, y  = vubar]{\file};
		\legend{$\sqrt{\bm{e}^T\bm{P}\bm{e}}$, $\sqrt{\ubar{v}}$}
		\end{semilogyaxis}
		\end{tikzpicture}
	\end{minipage}\hfill
	\begin{minipage}{0.48\textwidth}
		\centering 
		\def\file{plots/2D_density_test.txt}	\tikzsetnextfilename{error_asym}
		\begin{tikzpicture}
		\begin{semilogyaxis}[xlabel={$N$},
		xmin=0, ymin = 0.002, xmax = 9999,ymax=2,legend columns=3]
		\addplot[blue]	table[x = Ns, y  = err_sim]{\file};
		\addplot[red]	table[x = Ns, y  = err_bound]{\file};
		\addplot[green]	table[x = Ns, y  = err_O]{\file};
		\legend{$\sqrt{\bm{e}^T\bm{P}\bm{e}}$, $\sqrt{\ubar{v}}$,$\sqrt{\ubar{v}_{\mathrm{asym}}}$}
		\end{semilogyaxis}
		\end{tikzpicture}
	\end{minipage}
		
	\caption{Relationship between ultimately bounded set $\mathbb{V}$ and simulated tracking error. 
		Left: High tracking errors correlate with large ultimate bounds. Right: The tracking error 
		and ultimate bound asymptotically converge to $0$ with increasing training set sizes.}
	\label{fig:error}
\end{figure}
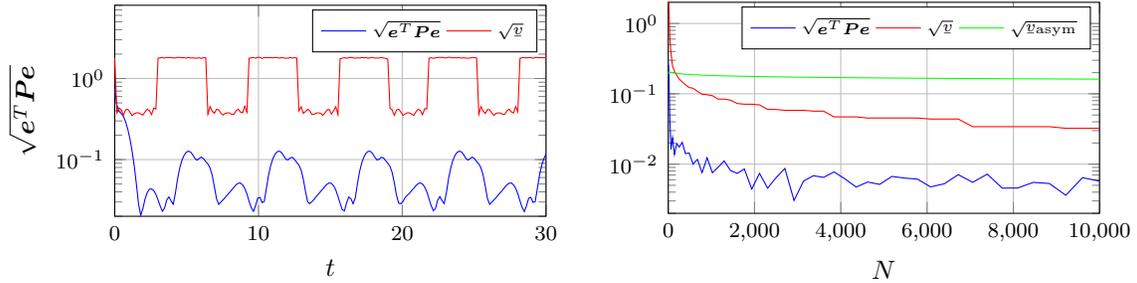

\cref{fig:LyapDecrTe,fig:error} depict the simulation results. It is clearly
illustrated that training data has a crucial impact on the posterior variance 
and thus, a lack of samples causes a large ultimately bounded set $\mathbb{V}$. 
It can be observed that the ultimate bound $\ubar{v}$ strongly correlates with the size 
of the tracking error observed in simulation as shown on the left side of 
\cref{fig:error}. However, the derived ultimate
bound is rather conservative which is a consequence of the small 
violation probability of~$1\%$. Moreover, the tracking error and the ultimate bound decrease 
with a growing number of training samples, as illustrated on the right side of \cref{fig:error}. 
In fact, it follows from Corollary~\ref{th:asstab} that $\ubar{v}\in\mathcal{O}(\ubar{v}_{\mathrm{asym}})$ 
with $\ubar{v}_{\mathrm{asym}}=\log^{\frac{1}{2}}(N)N^{-\frac{1}{2d+2}}$, as it is 
straightforward to derive that the uniform grid admits $\rho(N)\in\mathcal{O}(N^{-\frac{1}{d+1}})$ 
and $\frac{1}{|\mathbb{B}_{\rho(N)}(\bm{x})|}\in\mathcal{O}(N^{-\frac{1}{d+1}})$. Since the asymptotic
posterior variance bound for the squared exponential kernel is rather loose as outlined in \cref{subsec:VarUniform}, $\ubar{v}_{\mathrm{asym}}$ is also conservative and we can observe 
a faster decay rate of the ultimate bound in the simulations. Nevertheless, Corollary~\ref{th:asstab}
is an important result since it guarantees a vanishing tracking error in the limit of infinite 
training data.

\subsection{Safety Region Evaluation in Robotic Manipulator Simulations}
\label{subsec:robot}

We investigate the tracking error bound in the real-world application of controlling
a robotic manipulator with 2 degrees of freedom (DoFs) in simulation. The considered 
robotic manipulator, whose dynamics are derived according to \cite[Chapter 4]{Murray1994}, 
has links with unit length and unit masses/ inertia. We assume that
the Lipschitz constant~$L_f$ is known and straightforwardly extend \cref{th:errbound_with} 
to the multidimensional case based on the union bound. Since the considered robot has
2 DoFs, its state space is four dimensional~$[q_1\ \dot{q}_1\ q_2\ \dot{q}_2]$ and 
we evaluate the uniform error bound on $\mathbb{X}=[-\pi\ \pi]^4$. The GP is 
trained using $81$ samples, which are equally spaced in the region~$[-1.5\ 1.5]^4$. 
For both DoFs, we use control gains $k_c=15$ and $\lambda=3$. The reference trajectories
for tracking are sinusoidal as shown in \cref{fig:RobotLyap} on the right side.

In order to visualize the tracking error bound in \cref{thm:Stable} in an illustrative 
way, we transform it from the state space into the task space as shown in 
\cref{fig:RobotLyap} on the left. Exploiting only the learned dynamics, it can be
guaranteed that the robot manipulator will not leave the 
depicted area, which thereby can be considered as safe. While similar theoretical
results can be derived using previous error bounds for GPs, there applicability 
to these practical settings is severely limited due to
\begin{enumerate}
\item they do not allow Gaussian observation noise on the training data~\citep{Srinivas2012}, 
which is commonly assumed in control,
\item they require constants, such as the maximal information gain in ~\citep{Srinivas2010},
 which cannot be computed efficiently in practice,
\item they base on assumptions, which are unintuitive and difficult to verify in practice (e.g., 
the RKHS norm of the unknown dynamical system~\citep{Berkenkamp2016a}).
\end{enumerate}

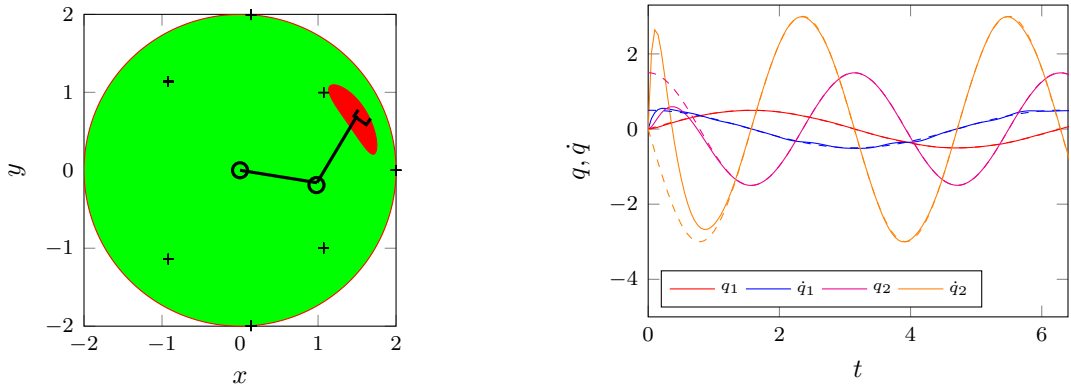
\begin{figure}
	\pgfplotsset{width=8\columnwidth/10,
		height = 80\columnwidth/100 }
	\begin{minipage}{0.47\textwidth}
		\centering 
		\def\file{plots/2DofRobot.txt}	
		\tikzsetnextfilename{RobotLyap}
		\def\msize{0.1cm}  	 \def\mx{1.54} \def\my{0.64}
		\begin{tikzpicture}
		\begin{axis}[grid=none, axis equal,
		xlabel={$x$}, ylabel={$y$}, legend pos = north west,
		xmin=-2, xmax = 2, ymin = -2, ymax = 2,	
		]
		\addplot[red,fill=green, forget plot] table[x=Xhullte1_1,y=Xhullte1_2]{\file};
		\addplot[red,fill=red, forget plot] table[x=Xhull1_1,y=Xhull1_2]{\file};
		\addplot[only marks,black, mark=+]table[x = Xtasktr_1,y=Xtasktr_2]{\file};
		\addplot[very thick, black] table[x=Xjoint11_1, y=Xjoint11_2]{\file};
		\addplot[very thick, black] table[x=Xjoint21_1 , y=Xjoint21_2 ]{\file};
		\draw[very thick] (axis cs:0,0) circle (0.1cm);
		\draw[very thick] (axis cs:0.98,-0.19) circle (0.1cm);
		\draw[very thick,rotate around={-35:(axis cs:\mx, \my)}] 
		(axis cs:\mx, \my) --++ (\msize,0cm)--++ (0cm,\msize);	
		\draw[very thick,rotate around={-35:(axis cs:\mx, \my)}] 
		(axis cs:\mx, \my) --++ (-\msize,0cm)--++ (0cm,\msize);				
		\end{axis}
		\end{tikzpicture}
	\end{minipage}\hfill
	\begin{minipage}{0.47\textwidth}
	\pgfplotsset{width=10\columnwidth/10,
		height = 80\columnwidth/100 }
		\centering 
		\def\file{plots/2DofRobot.txt}	
		\tikzsetnextfilename{RobotJoints}
		\begin{tikzpicture}
		\begin{axis}[grid=none,	xlabel={$t$}, ylabel={$q,\dot{q}$}, 
		legend pos = south west,xmin=0,xmax=6.4,ymin=-5.0,ymax=3.3,	
		legend columns=4]
		\addplot[red,] table[x=T,y=Xsim_1]{\file};
		\addplot[blue]table[x=T,y=Xsim_2]{\file};
		\addplot[magenta,] table[x=T,y=Xsim_3]{\file};
		\addplot[orange,] table[x=T,y=Xsim_4]{\file};
		\addplot[red,dashed,forget plot] table[x=T,y=Xd_1]{\file};
		\addplot[blue,dashed,forget plot] table[x=T,y=Xd_2]{\file};
		\addplot[magenta,dashed,forget plot] table[x=T,y=Xd_3]{\file};
		\addplot[orange,dashed,forget plot] table[x=T,y=Xd_4]{\file};
		\legend{$q_1$,$\dot{q}_1$,$q_2$, $\dot{q}_2$}
		\end{axis}
		\end{tikzpicture}
	\end{minipage}\
	
	\caption{The robot manipulator (left) is guaranteed to remain in~$\mathbb{V}$ (red) after a transient 
	phase, while the remaining task space~$\mathbb{X}\setminus\mathbb{V}$~(green) can be 
	considered as safe. The trajectories of the joint angles and velocities (right) approach the 
	reference trajectories (dashed lines) over time. 
		}
	\label{fig:RobotLyap}
\end{figure}

\section{Conclusion}
\label{sec:conclusion}
This paper presents a novel uniform error bound for Gaussian process regression. 
By exploiting the inherent probability distribution of Gaussian processes instead
of the RKHS attached to the covariance function, a wider class of functions can be 
considered. A novel method to compute interpretable hyperparameters based on 
prior knowledge is presented and the importance of suitably chosen hyperparameters is 
demonstrated for uniform error bounds. By deriving an analytical bound for the posterior 
variance of Gaussian processes and analyzing the asymptotic behavior of the bound it is 
shown that the derived uniform error bound converges to zero under weak assumptions on 
the training data distribution. The derived results are employed to develop a 
provably safe tracking control algorithm for which asymptotic stability in the 
limit of infinite training data is shown. The theoretical results are validated 
in simulations illustrating the behavior of the derived posterior variance bound, 
investigating the effect of badly chosen hyperparameters on uniform error bounds 
and demonstrating the safe tracking control of a robotic manipulator.

\appendix

\section{Proofs for Posterior Variance Bounds}

\subsection{Variance Bounds}

\begin{proof}[Proof of \cref{th:var_bound}]
	Since $\bm{K}_N+\sigma_n^2\bm{I}_N$ is a 
	positive definite, quadratic matrix, it follows 
	that
	\begin{align*}
	\sigma_{N}^2(\bm{x})&\leq k(\bm{x},\bm{x})-
	\frac{\left\|\bm{k}_N(\bm{x})\right\|^2}
	{\lambda_{\max}\left(\bm{K}_N\right)+\sigma_n^2}.
	\end{align*}
	Applying the Gershgorin theorem \citep{Gershgorin1931} 
	the maximal eigenvalue is bounded by
	\begin{align*}
	\lambda_{\max}(\bm{K}_N)\leq N
	\max\limits_{\bm{x}',\bm{x}''\in\mathbb{D}_N^x}
	k(\bm{x}',\bm{x}'').
	\end{align*}
	Furthermore, due to the definition of 
	$\bm{k}_N(\bm{x})$ we have
	\begin{align*}
	\|\bm{k}_N(\bm{x})\|^2\geq N 
	\min\limits_{\bm{x}'\in\mathbb{D}_N^x}
	k^2(\bm{x}',\bm{x}).
	\end{align*}
	Therefore, $\sigma_N^2(\bm{x})$ can be bounded by
	\begin{align}
	\sigma_{N}^2(\bm{x})&\leq k(\bm{x},\bm{x})-
	\frac{N\min\limits_{\bm{x}'\in\mathbb{D}_N^x}
		k^2(\bm{x}',\bm{x})}{N
		\max\limits_{\bm{x}',\bm{x}''\in\mathbb{D}_N^x}
		k(\bm{x}',\bm{x}'')+\sigma_n^2}.
	\label{eq:sigma_bound1}
	\end{align}
	This bound can be further simplified exploiting 
	the fact that $\sigma_N^2(\bm{x})\leq
	\sigma_{N-1}^2(\bm{x})$ \citep{Vivarelli1998} 
	and considering only samples inside the ball 
	$\mathbb{B}_{\rho}(\bm{x}^*)$ with radius 
	$\rho\in\mathbb{R}_+$. Using this reduced data 
	set instead of $\mathbb{D}_N^x$ and writing the 
	right side of \eqref{eq:sigma_bound1} as a single 
	fraction results in
	\begin{align}
	\label{eq:sigma_bound3}
	\sigma_{N}^2(\bm{x})&\leq\frac{k(\bm{x},\bm{x})
		\sigma_n^2+\left|\mathbb{B}_{\rho}(\bm{x}^*)\right|
		\xi(\bm{x},\rho)}{\left|\mathbb{B}_{\rho}
		(\bm{x}^*)\right|\max\limits_{\bm{x}',\bm{x}''
			\in\mathbb{B}_{\rho}(\bm{x})}k(\bm{x}',\bm{x}'')
		+\sigma_n^2},
	\end{align}
	where
	\begin{align*}
	\xi(\bm{x},\rho&)=
	k(\bm{x},\bm{x})\max\limits_{\bm{x}',\bm{x}''
		\in\mathbb{B}_{\rho}(\bm{x}^*)}k(\bm{x}',\bm{x}'')
	\!-\!\min\limits_{\bm{x}'\in \mathbb{B}_{\rho}
		(\bm{x}^*)}k^2(\bm{x}',\bm{x}).
	\end{align*}
	Under the assumption that $\rho\leq 
	\frac{k(\bm{x},\bm{x})}{L_k}$ it follows from 
	the Lipschitz continuity of $k(\cdot,\cdot)$ that
	\begin{align*}
	\min\limits_{\bm{x}'\in \mathbb{B}_{\rho}(\bm{x}^*)}
	k^2(\bm{x}',\bm{x})\geq (k(\bm{x}^*,\bm{x})-
	L_k\rho)^2.
	\end{align*}
	Furthermore, it holds that
	\begin{align*}
	\max\limits_{\bm{x}',\bm{x}''\in\mathbb{B}_{\rho}
		(\bm{x})}k(\bm{x}',\bm{x}'')\leq 
	k(\bm{x}^*,\bm{x}^*)+2L_k\rho.
	\end{align*}
	Therefore, $\xi(\bm{x},\rho)$ can be bounded by
	\begin{align*}
	\xi(\bm{x},\rho)&\leq 2L_k\rho k(\bm{x}^*,\bm{x})+2L_k\rho k(\bm{x},\bm{x})-L_k^2\rho^2
	\end{align*}
	since $k(\bm{x},\bm{x})k(\bm{x}^*,\bm{x}^*)=k^2(\bm{x}^*,\bm{x})$.
	Hence, the result is proven.
\end{proof}

\begin{proof}[Proof of Corollary~\ref{cor:var_bound}]
	The proof follows directly from 
	\eqref{eq:sigma_bound3} and the fact that 
	\begin{align*}
	\min\limits_{\bm{x}'\in\mathbb{B}_{\rho}(\bm{x})}
	k(\bm{x}',\bm{x})&\leq k(\rho)\\
	\max\limits_{\bm{x}',\bm{x}''\in\mathbb{B}_{\rho}
		(\bm{x})}k(\bm{x}',\bm{x}'')&= k(0)
	\end{align*}
	since the kernel is isotropic and decreasing.
\end{proof}

\subsection{Asymptotic Behavior of Variance Bounds}

\begin{proof}[Proof of \cref{th:varvan}]
	By setting $\bm{x}^*=\bm{x}$ in Theorem~\ref{th:var_bound} and simplifying 
	we obtain
	\begin{align*}
		\sigma_N^2(\bm{x})\leq 4L_k \rho +\frac{\sigma_n^2}{|\mathbb{B}_{\rho}(\bm{x})|}.
	\end{align*}
	Considering only the asymptotic behavior described by the $\mathcal{O}$-notation
	we therefore have
	\begin{align*}
		\sigma_N^2(\bm{x})=\mathcal{O}\left( \frac{1}{\alpha(N)} \right),
	\end{align*}
	which proofs the theorem.
\end{proof}

In order to prove Lemma~\ref{th:ball} , some auxiliary results
for binomial distributions are necessary. These
are provided in the following Lemmas.
\begin{lemma}
	\label{lem:Bernoulli}
	The $k$-th central moment of a Bernoulli distributed random variable $X$ is given by
	\begin{align}
	E[(X-E[X])^k]=\sum\limits_{i=0}^{k-1}(-1)^i\binom{k}{i}p^{i+1}+p^k
	\end{align}
\end{lemma}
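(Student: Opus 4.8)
The plan is to use the two defining features of a Bernoulli variable: it takes only the values $0$ and $1$, and its mean is $E[X]=p$. The first feature yields the idempotence $X^j=X$ for every integer $j\ge 1$, whence $E[X^j]=p$ for all $j\ge 1$, whereas $E[X^0]=1$. This collapse of every higher raw moment to the single value $p$ is the heart of the argument and is what makes the central moment computable in closed form.

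First I would expand the central moment with the binomial theorem, treating $X$ as the variable and $-E[X]=-p$ as the constant, and then pull the expectation inside the finite sum:
\begin{align*}
E\!\left[(X-E[X])^k\right]=E\!\left[\sum_{i=0}^{k}\binom{k}{i}X^{k-i}(-p)^i\right]=\sum_{i=0}^{k}\binom{k}{i}(-p)^i\,E\!\left[X^{k-i}\right].
\end{align*}
Next I would split off the single index $i=k$, which carries the factor $E[X^0]=1$, from the remaining indices $i=0,\dots,k-1$, for which $k-i\ge 1$ and hence $E[X^{k-i}]=p$. Substituting these two values gives
\begin{align*}
E\!\left[(X-E[X])^k\right]=\sum_{i=0}^{k-1}\binom{k}{i}(-p)^i\,p+(-p)^k=\sum_{i=0}^{k-1}(-1)^i\binom{k}{i}p^{i+1}+(-p)^k,
\end{align*}
which is the claimed identity; note that the boundary term reduces to $(-p)^k=p^k$ precisely when $k$ is even, the case relevant for the even-moment concentration bounds invoked in the subsequent proof of \cref{th:ball}.

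There is no genuine obstacle here, as the computation is elementary; the only point that requires care — and the step I regard as the crux — is the isolation of the $i=k$ term. Every other summand legitimately uses $E[X^{k-i}]=p$, but this last one must use $E[X^0]=1$ instead, and it is exactly this asymmetry that produces the trailing term sitting outside the sum. An equally short alternative would be to evaluate $E[(X-p)^k]=p(1-p)^k+(1-p)(-p)^k$ directly from the two-point distribution and then expand $(1-p)^k$ by the binomial theorem; I would nonetheless prefer the raw-moment route above, since the idempotence $X^j=X$ keeps the bookkeeping marginally cleaner.
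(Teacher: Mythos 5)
Your proof is correct and follows essentially the same route as the paper's: expand $(X-p)^k$ by the binomial theorem and use the fact that every positive-order raw moment of a Bernoulli variable equals $p$, isolating the $i=k$ term where $E[X^0]=1$. You are in fact more careful than the paper at that last step: the trailing term is $(-p)^k=(-1)^kp^k$, not $p^k$ as the lemma states (for $k=1$ the stated formula gives $2p$ instead of the correct $0$); since the lemma is only invoked downstream for even-order moments and the subsequent bound uses $\lvert h_k(p)\rvert\le 2^k$ anyway, this sign discrepancy is harmless, but your version is the correct one.
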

\begin{proof}
	The polynom $(X-E[X])^k$ can be expanded as
	\begin{align*}
	(X-&E[X])^k=\sum\limits_{i=0}^k \binom{k}{i}(-1)^{i}X^{k-i}E[X]^i.
	\end{align*}
	The $k$-th moment about the origin of the Bernoulli distribution is given by $p$ for $k>0$ \citep{Forbes2011}. Therefore, the expectation of this polynomial is given by
	\begin{align*}
	E[(X-&E[X])^k]=\sum\limits_{i=0}^{k-1} \binom{k}{i}(-1)^{i}pp^i+p^k,
	\end{align*}
	which directly yields the result.
\end{proof}

\begin{lemma}
	\label{lem2}
	The $2k$-th central moment of a binomial distributed random variable $M$ with $N>2k$ samples is bounded by
	\begin{align}
	E[(X-E[X])^{2k}]\leq \sum\limits_{m=1}^k(Np)^m\alpha_m
	\label{eq3}
	\end{align}
	where $\alpha_m\in\mathbb{R}$ are finite coefficients.
\end{lemma}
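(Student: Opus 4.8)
The plan is to represent $M$ as a sum of independent Bernoulli variables, expand the central moment with the multinomial theorem, and then use the mean-zero property to discard all but $O(N^k)$ of the terms. First I would write $M=\sum_{i=1}^N X_i$ with $X_i$ i.i.d.\ Bernoulli($p$) and set $Y_i=X_i-p$, so that $M-E[M]=\sum_{i=1}^N Y_i$ and each $Y_i$ has zero mean. Independence then lets the expectation factor over the indices after expansion,
\begin{align*}
E[(M-E[M])^{2k}]=\sum_{j_1+\cdots+j_N=2k}\binom{2k}{j_1,\ldots,j_N}\prod_{i=1}^N E[Y_i^{j_i}].
\end{align*}

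The key observation is that any multi-index containing an exponent $j_i=1$ contributes the factor $E[Y_i]=0$ and hence vanishes, so only multi-indices whose nonzero exponents are all at least $2$ survive. If a surviving multi-index has exactly $m$ nonzero entries, then $2m\leq\sum_i j_i=2k$ forces $m\leq k$, while clearly $m\geq 1$. I would therefore group the surviving terms by this number $m$: the number of ways to select which $m$ of the $N$ indices are active is $\binom{N}{m}\leq N^m$, whereas the multinomial coefficients and the number of admissible exponent patterns on those $m$ active indices depend only on $k$ and not on $N$.

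It remains to control the per-index factors $E[Y_i^{j}]$ for $j\geq 2$, and this is precisely where Lemma~\ref{lem:Bernoulli} enters. That result exhibits the $j$-th central moment of a Bernoulli variable as a polynomial in $p$ whose lowest-order term is $p$ (the $i=0$ summand contributes $p^1$), so each such moment can be written as $p\,q_j(p)$ with $q_j$ a polynomial. Since $p\in[0,1]$, the factor $|q_j(p)|$ is bounded by a finite constant $C_j$, yielding the uniform estimate $|E[Y_i^{j}]|\leq C_j\,p$. Bounding in absolute value, the product of the $m$ active factors is then of order $p^m$, and combining this with the $N^m$ index choices and the finite $k$-dependent combinatorial constants collects the entire group-$m$ contribution into a single term $\alpha_m (Np)^m$ with $\alpha_m\geq 0$ finite and independent of $N$ and $p$, which is exactly the claimed bound.

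I expect the main obstacle to be the bookkeeping that secures the uniform linear estimate $|E[Y_i^{j}]|\leq C_j p$ on all of $[0,1]$, rather than merely asymptotically as $p\to0$: it is this linear-in-$p$ control, read off from the structure of Lemma~\ref{lem:Bernoulli}, that upgrades the naive $N^m$ count into the sharper $(Np)^m$ form demanded by the statement. The remaining combinatorics are routine once one verifies that every constant entering $\alpha_m$ is independent of $N$ and $p$.
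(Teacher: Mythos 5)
Your proposal is correct and follows essentially the same route as the paper's proof: the same decomposition of the binomial variable into centered Bernoulli summands, the same multinomial expansion with the vanishing of all terms containing a first central moment, the same grouping by the number $m\leq k$ of active indices with the $\binom{N}{m}\leq N^m$ count, and the same use of Lemma~\ref{lem:Bernoulli} to extract the factor $p$ from each Bernoulli central moment and bound the residual polynomial uniformly on $[0,1]$. No gaps.
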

\begin{proof}
	A binomial random variable is defined as the sum of $N$ i.i.d. Bernoulli random variables $X_i$. Therefore, the $2k$-th central moment of the binomial distribution is given by
	\begin{align*}
	E[(M-E[M])^{2k}]=E\left[\left(\sum\limits_{i=1}^{N}(X_i-p)\right)^{2k}\right].
	\end{align*}
	Define the multinomial coefficient as
	\begin{align*}
	\binom{N}{i_1,\ldots,i_k}=\frac{N!}{\prod\limits_{j=1}^ki_j!}.
	\end{align*}
	Then, the sum in the expectation can be expanded, which yields
	\begin{align}
	E[(M-E[M])^{2k}]=
	\sum\limits_{i_1+\ldots+i_{N}=2k}\!\binom{2k}{i_1,\ldots,i_{N}}\prod\limits_{j=1}^{N}E\!\left[\left(X_j-p)\right)^{i_j}\right]\!.
	\label{eq2}
	\end{align}
	This equation expresses the moments of the binomial distribution in terms of the moments of the Bernoulli distribution. Since the first central moment of every distribution equals $0$, summands containing a $i_j=1$ equal $0$. Therefore, we obtain the equality
	\begin{align}
	E[(M-E[M])^{2k}]=
	\sum\limits_{\subalign{i_1+\ldots+i_{N}=2k\\i_j\neq 1\forall j=1,\ldots,N}}\!\binom{2k}{i_1,\ldots,i_{N}}\!\prod\limits_{i_j>1}\!E\!\left[\left(X_j-p)\right)^{i_j}\right]\!.
	\label{eq1}
	\end{align}
	Moreover, we have
	\begin{align*}
	E[(X-E[X])^k]=p h_k(p)
	\end{align*}
	with
	\begin{align*}
	h_k(p)=\sum\limits_{i=0}^{k-1}(-1)^i\binom{k}{i}p^{i}+p^{k-1}
	\end{align*}
	due to Lemma~\ref{lem:Bernoulli}. By substituting this into \eqref{eq1} we obtain  
	\begin{align*}
	E[(M-E[M])^{2k}]=
	\sum\limits_{\subalign{i_1+\ldots+i_{N}=2k\\i_j\neq 1\forall j=1,\ldots,N}}\binom{2k}{i_1,\ldots,i_N}\prod\limits_{i_j>1}ph_{i_j}(p).
	\end{align*}
	The product can have between $1$ and $k$ factors due to the structure of the problem. Therefore, it is not necessary for the sum to consider all $N$ coefficients $i_j$, but rather consider only $1\leq m\leq k$ coefficients which are greater than $1$. This leads to the following equality
	\begin{align}
	E[(M-E[M])^{2k}]=
	\sum\limits_{m=1}^k\!\binom{N}{m}p^m\!\sum\limits_{\subalign{i_1+\ldots+i_{m}=2k\\i_j> 1\forall j=1,\ldots,m}}\!\binom{2k}{i_1,\ldots,i_m}\!\prod\limits_{i_j>1}\!h_{i_j}\!(p).
	\label{eq5}
	\end{align}
	Due to \citep{Cormen2009} it holds that $\binom{N}{m}\leq \frac{N^m}{m!}$. Furthermore, the functions $h_k(\cdot)$ can be upper bounded by $\sum\limits_{i=1}^k\binom{k}{i}=2^k$ because $0\leq p\leq 1$. Therefore, we can upper bound the $2k$-th central moment of the binomial distribution by	
	\begin{align*}
	E[(M-E[M])^{2k}]\leq \sum\limits_{m=1}^k(Np)^m\alpha_m
	\end{align*}
	with
	\begin{align}
	\label{eq4}
	\alpha_m&=\frac{\sum\limits_{\subalign{i_1+i_2+\ldots+i_{m}=2k\\i_j> 1\forall j=1,\ldots,m}}\binom{2k}{i_1,i_2,\ldots,i_m}\prod\limits_{i_j>1}2^{i_j}}{m!}
	\end{align}
	and the result is proven.
\end{proof}
The restriction to $N>2k$ samples allows to derive a relatively simple expression for the expansion in \eqref{eq2}. However, the bound \eqref{eq3} also holds without this condition, since it only guarantees that for $i_j=1$, $\forall j=1,\ldots,N$, $\sum\limits_{j=1}^N i_j\geq 2k$ and therefore, all possible combinations of $i_j$ can be estimated simpler in \eqref{eq5}. Hence, the corresponding summands in \eqref{eq4} can be considered $0$ for $N\leq 2k$ and the upper bound \eqref{eq3} still holds for $N\leq 2k$.

Based on these preliminary results we can state the proof of Lemma~\ref{th:ball}.

\begin{proof}[Proof of Lemma~\ref{th:ball}]
	We have to show 
	that the number of samples from the probability 
	distribution with density $p(\cdot)$ inside the 
	balls with radius $\rho(N)$ grows to infinity 
	sufficiently fast. The number of samples 
	$|\mathbb{B}_{\rho(N)}(\bm{x})|$ follows a 
	binomial distribution with mean
	\begin{align*}
	E\left[\left|\mathbb{B}_{\rho(N)}(\bm{x})
	\right|\right]&=N\tilde{p}(N),
	\end{align*}
	where
	\begin{align*}
	\tilde{p}(N)=\int\limits_{\{\bm{x}'\in\mathbb{X}:
		\|\bm{x}-\bm{x}'\|\leq \rho(N)\}}p(\bm{x}')
	\mathrm{d}\bm{x}'
	\end{align*}
	is the probability of a sample lying inside 
	the ball around $\bm{x}$ with radius $\rho(N)$ 
	for fixed $N\in\mathbb{N}$. Since we have 
	\begin{align*}
	\int\limits_{\{\bm{x}'\in\mathbb{X}:\|\bm{x}
		-\bm{x}'\|\leq \rho(N)\}}p(\bm{x}')\mathrm{d}\bm{x}'
	&\geq cN^{-1+\epsilon}
	\end{align*}
	by assumption, this mean exhibits a rate
	\begin{align*}
	E\left[\left|\mathbb{B}_{\rho(N)}(\bm{x})\right|\right]
	=\mathcal{O}\left(N^{\epsilon}\right).
	\end{align*}
	Therefore, it is sufficient to show that 
	$|\mathbb{B}_{\rho(N)}(\bm{x})|$ converges to 
	its expectation almost surely, which is 
	identically to proving that 
	\begin{align*}
	\lim\limits_{N\rightarrow\infty}\frac{
		|\mathbb{B}_{\rho(N)}(\bm{x})|}{E\left[
		\left|\mathbb{B}_{\rho(N)}(\bm{x})\right|
		\right]}=1\quad a.s.
	\end{align*}
	Due to the Borel-Cantelli lemma, this convergence 
	is guaranteed if 
	\begin{align}
	\sum\limits_{N=1}^{\infty}P\left( \left|\frac{
		|\mathbb{B}_{\rho(N)}(\bm{x})|}{E\left[\left|
		\mathbb{B}_{\rho(N)}(\bm{x})\right|\right]}-1\right|
	>\xi \right)<\infty
	\label{eq:bclemma}
	\end{align}
	holds for all $\xi>0$. The probability for each 
	$N\in\mathbb{N}$ can be bounded by 
	\begin{align*}
	P\Bigg( \Bigg| \frac{|\mathbb{B}_{\rho(N)}(\bm{x})|}
	{N\tilde{p}(N)}\!- \!1\Bigg|&\!>\!\xi \Bigg)\leq 
	\frac{	E\left[\left(\left|\mathbb{B}_{\rho(N)}
		(\bm{x})\right|\!-\!N\tilde{p}(N)\right)^{2k}\right]}
	{(\xi N\tilde{p}(N))^{2k}}.
	\end{align*}
	for each $k\in\mathbb{N}_+$ due to Chebyshev's
	inequality, where the $2k$-th central moment of 
	the binomial distribution can be bounded by
	\begin{align*}
	E\left[\left(\left|\mathbb{B}_{\rho(N)}(\bm{x})
	\right|-N\tilde{p}(N)\right)^{2k}\right]\leq
	\sum\limits_{i=1}^{k}\alpha_i\tilde{p}^i(N)
	N^i
	\end{align*} 
	with some coefficients $\alpha_i<\infty$ due to
	Lemma~\ref{lem2}.
	Therefore, we can bound each probability in 
	\eqref{eq:bclemma} by
	\begin{align*}
	P\Bigg( \Bigg| \frac{|\mathbb{B}_{\rho(N)}
		(\bm{x})|}{N\tilde{p}(N)}-1\Bigg|&>\xi \Bigg)
	\leq \sum\limits_{i=1}^{k}\alpha_i
	\tilde{p}^{-2k+i}(N)N^{-2k+i}.
	\end{align*}
	Due to \eqref{eq:cond2newmain} this bound can be 
	simplified to 
	\begin{align*}
	P\Bigg( \Bigg| \frac{|\mathbb{B}_{\rho(N)}
		(\bm{x})|}{N\tilde{p}(N)}-1\Bigg|&>\xi \Bigg)
	\leq N^{-k\epsilon}\sum\limits_{i=0}^{k-1}
	\tilde{\alpha}_{k-i}N^{-i\epsilon},
	\end{align*}
	where $\tilde{\alpha}_i=c^{-2k+i}\alpha_i$.
	Let $k=\left\lceil\frac{1}{\epsilon}\right\rceil+1$. 
	Then, each exponent is smaller than or equal to 
	$-1-\epsilon$. Hence, the sum of probabilities can 
	be bounded by 
	\begin{align*}
	\sum\limits_{N=1}^{\infty}P\Bigg( \Bigg| 
	\frac{|\mathbb{B}_{\rho(N)}(\bm{x})|}
	{N\tilde{p}(N)}-1\Bigg|&>\xi \Bigg)\leq 
	\sum\limits_{i=0}^{k-1}\tilde{\alpha}_{k-i} 
	\zeta\big((k+i)\epsilon\big),
	\end{align*}
	where $\zeta(\cdot)$ is the Riemann zeta function, 
	which has finite values. Therefore, we obtain
	\begin{align*}
	\sum\limits_{N=1}^{\infty}P\Bigg( \Bigg|
	\frac{|\mathbb{B}_{\rho(N)}(\bm{x})|}
	{N\tilde{p}(N)}-1\Bigg|>\epsilon \Bigg)< 
	\infty
	\end{align*}
	and consequently, the theorem is proven.
\end{proof}

\begin{proof}[Proof of Corollary~\ref{cor:varvan}]
	Let 
	\begin{align*}
	\bar{p}&=\min\limits_{\|\bm{x}-\bm{x}'\|\leq 
		\xi}p(\bm{x}')\\
	\tilde{p}(N)&=\int\limits_{\{\bm{x}'\bm{x}'
		\in\mathbb{X}:\|\bm{x}-\bm{x}'\|\leq 
		\xi\}}p(\bm{x}')\mathrm{d}\bm{x}',
	\end{align*}
	where $\bar{p}$ is positive by assumption. Then, 
	we can bound $\tilde{p}(N)$ by
	\begin{align*}
	\tilde{p}(N)\geq \bar{p}V_{d}\rho^{d}(N),
	\end{align*}
	where $V_{d}$ is the volume of the $d$ 
	dimensional unit ball. Since $\rho(N)\geq c
	N^{-\frac{1}{d}+\epsilon}$ for some 
	$c,\epsilon>0$ by assumption, it follows that 
	\begin{align*}
	\tilde{p}(N)\geq \bar{p}V_{d}cN^{-1+
		\frac{\epsilon}{d}}.
	\end{align*}
	Hence, $\tilde{p}(N)$ satisfies the conditions 
	of Lemma~\ref{th:ball}, which proves the corollary.
\end{proof}

\section{Proofs for the Probabilistic Uniform Error Bound}

\subsection{Hyperparameter Bounds}

In order to make the proof of Theorem~\ref{th:fmax} easier accessible, we 
split it up into the following auxiliary lemma and the main proof. The 
auxiliary lemma derives a bound for the expected supremum of a Gaussian process.
\begin{lemma}
	\label{lem:expsup}
	Consider a Gaussian process with a continuously differentiable covariance 
	function $k(\cdot,\cdot)$ and let $L_k$ denote its Lipschitz constant on 
	the set $\mathbb{X}$ with maximum extension  
	\mbox{$\theta=\max_{\bm{x},\bm{x}'\in\mathbb{X}}\|\bm{x}-\bm{x}'\|$}. 
	Then, the expected supremum of a sample function $f(\cdot)$ of this
	Gaussian process satisfies
	\begin{align*}
	E\left[\sup\limits_{\bm{x}\in \mathbb{X}}f(\bm{x})\right]\leq 
	12\sqrt{d}	\max\limits_{\bm{x}\in\mathbb{X}}\sqrt{k(\bm{x},\bm{x})}\sqrt{2\log\left( \frac{\sqrt{4\theta L_k}(1+\sqrt{2})\mathrm{e}}{\max\limits_{\bm{x}\in\mathbb{X}}\sqrt{k(\bm{x},\bm{x})}} \right) }.
	\end{align*}
\end{lemma}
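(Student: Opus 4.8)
The plan is to control $E[\sup_{\bm x}f(\bm x)]$ with Dudley's entropy integral, using the Lipschitz continuity of $k(\cdot,\cdot)$ to estimate the covering numbers of $\mathbb X$ under the canonical pseudometric of the process. First I would introduce the canonical metric $d(\bm x,\bm x')=\sqrt{E[(f(\bm x)-f(\bm x'))^2]}$, which for a zero-mean GP equals $\sqrt{k(\bm x,\bm x)-2k(\bm x,\bm x')+k(\bm x',\bm x')}$. Writing this as $[k(\bm x,\bm x)-k(\bm x,\bm x')]+[k(\bm x',\bm x')-k(\bm x',\bm x)]$ and applying Lipschitz continuity to each bracket (in each the first argument is held fixed, so only the second moves, by $\|\bm x-\bm x'\|$) gives $d(\bm x,\bm x')^2\le 2L_k\|\bm x-\bm x'\|$. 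Thus the canonical metric is dominated by the square root of the Euclidean metric. Cauchy--Schwarz additionally yields $d(\bm x,\bm x')\le 2\bar\sigma$ with $\bar\sigma=\max_{\bm x\in\mathbb X}\sqrt{k(\bm x,\bm x)}$, so the $d$-diameter of $\mathbb X$ is at most $2\bar\sigma$.

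Next I would translate Euclidean covering numbers into $d$-covering numbers. Since $d(\bm x,\bm x')\le\epsilon$ whenever $\|\bm x-\bm x'\|\le\epsilon^2/(2L_k)$, every Euclidean cover at scale $\epsilon^2/(2L_k)$ is a $d$-cover at scale $\epsilon$, so $N(\mathbb X,d,\epsilon)\le N(\mathbb X,\|\cdot\|,\epsilon^2/(2L_k))$. As $\mathbb X$ has maximal extension $\theta$ it is contained in a Euclidean ball of radius $\theta$, whose covering number at scale $r$ is at most $(1+2\theta/r)^d$. Substituting $r=\epsilon^2/(2L_k)$ and, on the relevant range of $\epsilon$, absorbing the additive $1$ into the constant produces the clean estimate $N(\mathbb X,d,\epsilon)\le\bigl(\sqrt{4\theta L_k}(1+\sqrt2)/\epsilon\bigr)^{2d}$.

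Finally I would invoke Dudley's inequality in the form $E[\sup_{\bm x}f(\bm x)]\le 12\int_0^{\mathrm{diam}/2}\sqrt{\log N(\mathbb X,d,\epsilon)}\,d\epsilon$, bound the upper limit by $\bar\sigma$, and evaluate the integral in closed form. With $c_0=\sqrt{4\theta L_k}(1+\sqrt2)$ the integrand is $\sqrt{2d\log(c_0/\epsilon)}$, and Jensen's inequality (concavity of $\sqrt{\cdot}$) together with $\tfrac1{\bar\sigma}\int_0^{\bar\sigma}\log(c_0/\epsilon)\,d\epsilon=\log(c_0e/\bar\sigma)$ gives $\int_0^{\bar\sigma}\sqrt{\log(c_0/\epsilon)}\,d\epsilon\le\bar\sigma\sqrt{\log(c_0e/\bar\sigma)}$. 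Combining the factors reproduces the claimed bound $12\sqrt d\,\bar\sigma\sqrt{2\log(c_0e/\bar\sigma)}$.

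The main obstacle is bookkeeping the constants so that the covering estimate and the integral evaluation land exactly on $\sqrt{4\theta L_k}(1+\sqrt2)e$: one must verify that the additive $1$ in $(1+2\theta/r)^d$ is absorbed into the factor $(1+\sqrt2)^2$ across the whole integration range, that the argument of each logarithm stays positive there, and that truncating the Dudley integral at $\mathrm{diam}/2\le\bar\sigma$ is legitimate. The Jensen step that collapses the entropy integral into a single $\sqrt{\log(\cdot)}$ is the key simplification yielding the closed-form constant, and it is where I would focus the care in the full write-up.
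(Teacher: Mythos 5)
Your proposal is correct and follows essentially the same route as the paper: Dudley's entropy integral under the canonical pseudometric, the bound $d(\bm{x},\bm{x}')\leq\sqrt{2L_k\|\bm{x}-\bm{x}'\|}$ from Lipschitz continuity of the kernel, translation to Euclidean covering numbers with the additive $1$ absorbed into the $(1+\sqrt{2})^2$ factor, and truncation of the integral at $\max_{\bm{x}\in\mathbb{X}}\sqrt{k(\bm{x},\bm{x})}$. The only cosmetic difference is that you evaluate the final entropy integral via Jensen's inequality, whereas the paper uses a change of variables and the bound from \citet{Grunewalder2010}; both land on the identical closed form, and the caveat you flag about the logarithm's argument staying positive is likewise present (and unaddressed) in the paper's own proof.
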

\begin{proof}
	We prove this lemma by making use of the metric entropy criterion for the 
	sample continuity of some version of a Gaussian process \citep{Dudley1967}.
	This criterion allows to bound the expected supremum of a sample function $f(\cdot)$
	by
	\begin{equation}
	\mathrm{E}\left[ \sup\limits_{\bm{x}\in\mathbb{X}}f(\bm{x}) \right]\leq \int\limits_0^{\max\limits_{\bm{x}\in\mathbb{X}}\sqrt{k(\bm{x},\bm{x})}}
	\sqrt{\log(N(\varrho,\mathbb{X}))}\mathrm{d}\varrho,
	\label{eq:metEntropy}
	\end{equation}
	where $N(\varrho,\mathbb{X})$ is the $\varrho$-packing number of $\mathbb{X}$ 
	with respect to the covariance pseudo-metric
	\begin{align*}
	d_k(\bm{x},\bm{x}')=\sqrt{k(\bm{x},\bm{x})+k(\bm{x}',\bm{x}')-2k(\bm{x},\bm{x}')}.
	\end{align*}
	Instead of bounding the $\varrho$-packing number, we bound the $\varrho/2$-covering 
	number, which is known to be an upper bound. The covering number can be easily bounded 
	by transforming the problem of covering $\mathbb{X}$ with respect to the pseudo-metric 
	$d_k(\cdot,\cdot)$ into a coverage problem in the original metric of $\mathbb{X}$. For 
	this reason, define
	\begin{align*}
	\psi(\varrho')=\sup\limits_{\subalign{\bm{x},\bm{x}'&\in \mathbb{X}\\ 
			\|\bm{x}-\bm{x}'&\|_{\infty}\leq \varrho'}} d_k(\bm{x},\bm{x}'),
	\end{align*}
	which is continuous due to the continuity of the covariance kernel $k(\cdot,\cdot)$. 
	Consider the inverse function
	\begin{align*}
	\psi^{-1}(\varrho)=\inf\left\{\varrho'>0:~\psi(\varrho')>\varrho\right\}.
	\end{align*}
	Continuity of $\psi(\cdot)$ implies $\varrho=\psi(\psi^{-1}(\varrho))$. In particular, 
	this means that we can guarantee $d_k(\bm{x},\bm{x}')\leq \frac{\varrho}{2}$ if 
	\mbox{$\|\bm{x}-\bm{x}'\|\leq \psi^{-1}(\frac{\varrho}{2})$}. Due to this relationship 
	it is sufficient to construct an uniform grid with grid constant $2\psi^{-1}(\frac{\varrho}{2})$ 
	in order to obtain a $\varrho/2$-covering net of $\mathbb{X}$. Furthermore, the cardinality 
	of this grid is an upper bound for the $\varrho/2$-covering number, i.e.
	\begin{align*}
	M(\varrho/2,\mathbb{X})\leq 
	\left\lceil \frac{\theta}{2\psi^{-1}(\frac{\varrho}{2})} \right\rceil^{d}.
	\end{align*}
	Therefore, it follows that
	\begin{align*}
	N(\varrho,\mathbb{X})\leq 
	\left\lceil \frac{\theta}{2\psi^{-1}(\frac{\varrho}{2})} \right\rceil^{d}.
	\end{align*}
	Due to the Lipschitz continuity of the covariance function, we can bound 
	$\psi(\cdot)$ by
	\begin{align*}
	\psi(\varrho')&\leq \sqrt{2L_k\varrho'}.
	\end{align*}
	Hence, the inverse function satisfies
	\begin{align*}
	\psi^{-1}\left(\frac{\varrho}{2}\right)\geq 
	\left(\frac{\varrho}{2\sqrt{2L_k}}\right)^2
	\end{align*}
	and consequently
	\begin{align*}
	N(\varrho,\mathbb{X})\leq \left(1+\frac{4\theta L_k}{\varrho^2}\right)^{d}
	\end{align*}
	holds, where the ceil operator is resolved through the addition of $1$.
	Substituting this expression in the metric entropy bound 
	\eqref{eq:metEntropy} yields
	\begin{align*}
	E\left[\sup\limits_{\bm{x}\in \mathbb{X}}f(\bm{x})\right]\leq 12\sqrt{d}
	\int\limits_0^{\max\limits_{\bm{x}\in\mathbb{X}}\sqrt{k(\bm{x},\bm{x})}}
	\sqrt{\log\left(1+\frac{4\theta L_k}{\varrho^2}\right)}\mathrm{d}\varrho.
	\end{align*}
	This integral can be solved similarly as shown in \citep{Grunewalder2010} 
	using the inequality
	\begin{align*}
		\int\limits_0^{\max\limits_{\bm{x}\in\mathbb{X}}\sqrt{k(\bm{x},\bm{x})}}
		\sqrt{\log\left(1+\frac{4\theta L_k}{\varrho^2}\right)}\mathrm{d}\varrho\leq 
		\int\limits_0^{\max\limits_{\bm{x}\in\mathbb{X}}\sqrt{k(\bm{x},\bm{x})}}
		\sqrt{\log\left(\left(1+\sqrt{2}\right)^{2}\frac{4\theta L_k}{\varrho^2}\right)}\mathrm{d}\varrho.
	\end{align*}
	Through a change of variables we obtain
	\begin{align*}
		\int\limits_0^{\max\limits_{\bm{x}\in\mathbb{X}}\sqrt{k(\bm{x},\bm{x})}}
		\sqrt{\log\left(\left(1+\sqrt{2}\right)^{2}\frac{4\theta L_k}{\varrho^2}\right)}\mathrm{d}\varrho\leq
		(1+\sqrt{2})\sqrt{4\theta L_k} \int\limits_{0}^{\frac{\max\limits_{\bm{x}\in\mathbb{X}}\sqrt{k(\bm{x},\bm{x})}}{(1+\sqrt{2})\sqrt{4\theta L_k}}} \sqrt{-\log(\tilde{\varrho}^{-2})}\mathrm{d}\tilde{\varrho}.
	\end{align*}
	This integral can be bounded
	\begin{align*}
		(1+\sqrt{2})\sqrt{4\theta L_k}\int\limits_{0}^{\frac{\max\limits_{\bm{x}\in\mathbb{X}}\sqrt{k(\bm{x},\bm{x})}}{(1+\sqrt{2})\sqrt{4\theta L_k}}} \sqrt{-\log(\tilde{\varrho}^{-2})}\mathrm{d}\tilde{\varrho}\leq
		\max\limits_{\bm{x}\in\mathbb{X}}\sqrt{k(\bm{x},\bm{x})}\sqrt{2\log\left( \frac{\sqrt{4\theta L_k}(1+\sqrt{2})\mathrm{e}}{\max\limits_{\bm{x}\in\mathbb{X}}\sqrt{k(\bm{x},\bm{x})}} \right) }
	\end{align*}
	which concludes the proof.
\end{proof}
Based on the expected supremum of Gaussian process it is possible to 
derive a high probability bound for the supremum of a sample function
which we exploit to proof Theorem~\ref{th:fmax} in the following.
\begin{proof}[Proof of Theorem~\ref{th:fmax}]
	We prove this lemma by exploiting the wide theory of concentration inequalities 
	to derive a bound for the supremum of the sample function $f(\bm{x})$. We apply the 
	Borell-TIS inequality \citep{Talagrand1994}
	\begin{align}
	P\Bigg( \sup\limits_{\bm{x}\in\mathbb{X}}f(\bm{x})-
	E\Bigg[ \sup\limits_{\bm{x}\in\mathbb{X}}f&(\bm{x}) \Bigg] \geq 
	c \Bigg)\leq \exp\left( -\frac{c^2}{2\max\limits_{\bm{x}\in\mathbb{X}}
		k(\bm{x},\bm{x})} \right).
	\label{eq:Tal}
	\end{align}
	Due to Lemma~\ref{lem:expsup} we have
	\begin{align}
	\label{eq:Esup}
	E\left[\sup\limits_{\bm{x}\in \mathbb{X}}f(\bm{x})\right]\leq 
	12\sqrt{d}	\max\limits_{\bm{x}\in\mathbb{X}}\sqrt{k(\bm{x},\bm{x})}\sqrt{2\log\left( \frac{\sqrt{4\theta L_k}(1+\sqrt{2})\mathrm{e}}{\max\limits_{\bm{x}\in\mathbb{X}}\sqrt{k(\bm{x},\bm{x})}} \right) }.
	\end{align}
	We conclude the proof of this lemma by
	substituting \eqref{eq:Esup} in \eqref{eq:Tal} and choosing 
	$c=\sqrt{2\log\left( \frac{1}{\delta_f} \right)}\max\limits_{\bm{x}\in\mathbb{X}}\sqrt{k(\bm{x},\bm{x})}$.
\end{proof}
For the proof of Corollary~\ref{th:Lip_f} we make use of the fact that the derivative 
of a sample function can be considered as  a sample function from another Gaussian process.
Thus, we merely have to apply Theorem~\ref{th:fmax} to this new Gaussian process as 
shown in the following.
\begin{proof}[Proof of Corollary~\ref{th:Lip_f}]
	It has been shown in \cite[Theorem 5]{Ghosal2006} that the derivative functions 
	$\frac{\partial}{\partial x_i}f(\bm{x})$ are samples from derivative Gaussian 
	processes with covariance functions
	\begin{align*}
	k_{\partial i}(\bm{x},\bm{x}')=
	\frac{\partial^2}{\partial x_i\partial x_i'}k(\bm{x},\bm{x}').
	\end{align*}
	Therefore, we can apply \cref{th:fmax} to each of the derivative processes which 
	proves the corollary.
\end{proof}

\subsection{Error Bound}

\begin{proof}[Proof of Theorem~\ref{th:errbound_with}]
	We first prove the Lipschitz constant bounds of the posterior mean $\nu_N(\bm{x})$ and 
	the posterior variance $\sigma_N^2(\bm{x})$, before
	we derive the bound of the regression error. The norm of the difference between 
	the posterior mean $\nu_N(\bm{x})$ evaluated at two different points is given by
	\begin{align*}
	\|\nu_N(\bm{x})-\nu_N(\bm{x}')\|&=
	\left\| \left(\bm{k}(\bm{x},\bm{X}_N)-\bm{k}(\bm{x}',\bm{X}_N)\right)
	\bm{\alpha}\right\|
	\end{align*}
	with 
	\begin{align*}
	\bm{\alpha}=(\bm{K}(\bm{X}_N,\bm{X}_N)+\sigma_n^2\bm{I}_N)^{-1}\bm{y}_N.
	\end{align*}
	Due to the Cauchy-Schwarz inequality and the Lipschitz continuity of the 
	kernel we obtain
	\begin{align*}
	\|\nu_N(\bm{x})-\nu_N(\bm{x}')\|&\leq L_k\sqrt{N} \left\| \bm{\alpha} 
	\right\|\|\bm{x}-\bm{x}'\|,
	\end{align*}
	which proves Lipschitz continuity of the mean $\nu_N(\bm{x})$. In order to bound
	the Lipschitz constant of the posterior variance we employ the Cauchy-Schwarz 
	inequality to the absolute value of the difference of variances such that
	\begin{align}
	&|\sigma_N^2(\bm{x})-\sigma_N^2(\bm{x}')|\leq |k(\bm{x},\bm{x})-k(\bm{x}',\bm{x}')|\nonumber\\
	&+\!\left\|\bm{k}(\bm{x},\bm{X}_N)-\bm{k}(\bm{x}',\bm{X}_N)\right\|
	\left\|(\bm{K}(\bm{X}_N,\bm{X}_N)+\sigma_n^2\bm{I}_N)^{-1}\right\|
	\left\|\bm{k}(\bm{X}_N,\bm{x})+\bm{k}(\bm{X}_N,\bm{x}')\right\|.
	\label{eq:sigdiff}
	\end{align}
	On the one hand, we have
	\begin{align}
	\|\bm{k}(\bm{x},\bm{X}_N)-\bm{k}(\bm{x}',\bm{X}_N)\|\leq 
	\sqrt{N}L_k\|\bm{x}-\bm{x}'\|
	\label{eq:sigminus}
	\end{align}
	due to Lipschitz continuity of $k(\bm{x},\bm{x}')$. On the other hand 
	we have
	\begin{align}
	\|\bm{k}(\bm{x},\bm{X}_N)+\bm{k}(\bm{x}',\bm{X}_N)\|\leq 2\sqrt{N}
	\max\limits_{\bm{x},\bm{x}'\in\mathbb{X}}k(\bm{x},\bm{x}').
	\label{eq:sigplus}
	\end{align}
	The bound for the Lipschitz constant $L_{\sigma_N^2}$ follows from 
	substituting \eqref{eq:sigminus} and \eqref{eq:sigplus} in \eqref{eq:sigdiff}. 
	The Lipschitz continuity of the posterior variance can be transferred to the 
	posterior standard deviation. In order to see this more clearly 
	observe that the difference of the variance at two points 
	$\bm{x},\bm{x}'\in\mathbb{X}$ can be expressed as 
	\begin{align*}
	|\sigma_N^2(\bm{x})-\sigma_N^2(\bm{x}')|&=|\sigma_N(\bm{x})-\sigma_N(\bm{x}')||
	\sigma_N(\bm{x})+\sigma_N(\bm{x}')|.
	\end{align*}
	Since the standard deviation is positive semidefinite we have
	\begin{align*}
	|\sigma_N(\bm{x})+\sigma_N(\bm{x}')|\geq |\sigma_N(\bm{x})-\sigma_N(\bm{x}')|
	\end{align*}
	and hence, we obtain
	\begin{align*}
	|\sigma_N^2(\bm{x})-\sigma_N^2(\bm{x}')|\geq |\sigma_N(\bm{x})-\sigma_N(\bm{x}')|^2.
	\end{align*}
	Therefore, the difference of the posterior standard deviation at two points
	can be bounded by taking the square root of the Lipschitz constant multiplied
	with the distance between both points. Finally, we use these properties to prove
	the probabilistic uniform error bound by exploiting the fact that for every 
	grid $\mathbb{X}_{\tau}$ with $|\mathbb{X}_{\tau}|$ grid points and 
	\begin{align}
	\max\limits_{\bm{x}\in\mathbb{X}} \min\limits_{\bm{x}'\in\mathbb{X}_{\tau}}
	\|\bm{x}-\bm{x}'\|\leq \tau
	\label{eq:gridconstant}
	\end{align}
	it holds with probability of at least
	$1-|\mathbb{X}_{\tau}|\mathrm{e}^{-\beta(\tau)/2}$ that \citep{Srinivas2012}
	\begin{align*}
	|f(\bm{x})-\nu_{N}(\bm{x})|\leq \sqrt{\beta(\tau)}\sigma_{N}(\bm{x}) 
	\quad \forall\bm{x}\in \mathbb{X}_{\tau}.
	\end{align*}
	Choose \mbox{$\beta(\tau)=2\log\left(\frac{|\mathbb{X}_{\tau}|}{\delta}\right)$}, 
	then
	\begin{align*}
	|f(\bm{x})-\nu_{N}(\bm{x})|\leq \sqrt{\beta(\tau)}\sigma_{N}(\bm{x}) 
	\quad \forall\bm{x}\in \mathbb{X}_{\tau}
	\end{align*}
	holds with probability of at least $1-\delta$. Due to continuity of 
	$f(\bm{x})$, $\nu_N(\bm{x})$ and $\sigma_N(\bm{x})$ we obtain
	\begin{align*}
	\min\limits_{\bm{x}'\in\mathbb{X}_{\tau}}|f(\bm{x})-f(\bm{x}')|&\leq 
	\tau L_f\quad \forall \bm{x}\in\mathbb{X}\\
	\min\limits_{\bm{x}'\in\mathbb{X}_{\tau}}|\nu_N(\bm{x})-\nu_N(\bm{x}')|&\leq 
	\tau L_{\nu_N}\quad \forall \bm{x}\in\mathbb{X}\\
	\min\limits_{\bm{x}'\in\mathbb{X}_{\tau}}|\sigma_N(\bm{x})-\sigma_N(\bm{x}')|&\leq 
	\sqrt{L_{\sigma_N^2}\tau}\quad \forall \bm{x}\in\mathbb{X}.
	\end{align*}
	Moreover, the minimum number of grid points satisfying \eqref{eq:gridconstant} is 
	given by the covering number $M(\tau,\mathbb{X})$. Hence, we obtain
	\begin{align*}
	P\left(|g(\bm{x})-\nu_{N}(\bm{x})|\leq 
	\sqrt{\beta(\tau)}\sigma_{N}(\bm{x})+\gamma(\tau), 
	~\forall\bm{x}\in\mathbb{X}\right)\geq 1-\delta,
	\end{align*}
	where
	\begin{align*}
	\beta(\tau)&=2\log\left(\frac{M(\tau,\mathbb{X})}{\delta}\right)\\
	\gamma(\tau)&=(L_f+L_{\nu_N})\tau+\sqrt{\beta(\tau)L_{\sigma_N^2}\tau}.
	\end{align*}
\end{proof}

\subsection{Asymptotic Convergence}

\begin{proof}[Proof of Theorem~\ref{th:as_err}]
	Due to Theorem~\ref{th:errbound_with} with 
	$\beta_N(\tau)=\log\left( \frac{M(\tau,\mathbb{X})\pi^2N^2}{3\delta} \right)$
	and the union bound over all $N>0$ it follows that
	\begin{align}
	\sup\limits_{\bm{x}\in \mathbb{X}}|f(\bm{x})-\nu_{N}(\bm{x})|\leq
	\sqrt{\beta_N(\tau)}\sigma_{N}(\bm{x})+\gamma_N(\tau)\quad \forall N>0
	\label{eq:un_err}
	\end{align}
	with probability of at least $1-\delta/2$ for $\delta\in(0,1)$. A trivial 
	bound for the covering number can be obtained by considering a uniform grid over the 
	cube containing $\mathbb{X}$. This approach leads to
	\begin{align*}
	M(\tau,\mathbb{X})\leq \left(\frac{\theta\sqrt{d}}{2\tau}\right)^{d},
	\end{align*}
	where \mbox{$\theta=\max_{\bm{x},\bm{x}'\in\mathbb{X}}\|\bm{x}-\bm{x}'\|$}. Therefore,
	we have
	\begin{align}
	\beta_N(\tau)\leq d\log\left(\frac{\theta\sqrt{d}}{2\tau}\right)-\log(3\delta)+2\log(\pi N).
	\label{eq:beta(tau)}
	\end{align}
	Furthermore, the Lipschitz constant $L_{\nu_N}$ is bounded by 
	\begin{align*}
	L_{\nu_N}&\leq L_k\sqrt{N} 
	\left\| (\bm{K}(\bm{X}_N,\bm{X}_N)+\sigma_n^2\bm{I}_N)^{-1}\bm{y}_N \right\|
	\end{align*}
	due to Theorem~\ref{th:errbound_with}. Since the Gram matrix $\bm{K}(\bm{X}_N,\bm{X}_N)$ 
	is positive semidefinite and $f(\cdot)$ is bounded by some $\bar{f}$ with probability 
	of at least $1-\delta_f$ for $\delta_f\in(0,1)$ due to Theorem~\ref{th:fmax}, we can 
	bound $\left\| (\bm{K}(\bm{X}_N,\bm{X}_N)+\sigma_n^2\bm{I}_N)^{-1}\bm{y}_N \right\|$ by
	\begin{align*}
	\left\| (\bm{K}(\bm{X}_N,\bm{X}_N)+\sigma_n^2\bm{I}_N)^{-1}\bm{y}_N \right\|&
	\leq\frac{\|\bm{y}_N\|}
	{\rho_{\min}(\bm{K}(\bm{X}_N,\bm{X}_N)+\sigma_n^2\bm{I}_N)}\nonumber\\
	&\leq \frac{\sqrt{N}\bar{f}
		+\|\bm{\xi}_N\|}{\sigma_n^2},
	\end{align*}
	where $\bm{\xi}_N$ is a vector of $N$ i.i.d. zero mean Gaussian random variables with
	variance $\sigma_n^2$. Therefore, it follows that 
	$\frac{\|\bm{\xi}_N\|^2}{\sigma_n^2}\sim\chi_N^2$. Due to 
	\citep{Laurent2000}, with probability of at least $1-\exp(-\eta_N)$ we have 
	\begin{align*}
	\|\bm{\xi}_N\|^2\leq \left(2\sqrt{N\eta_N}+2\eta_N+N\right)\sigma_n^2.
	\end{align*}
	Setting $\eta_N=\log(\frac{\pi^2N^2}{3\delta})$ and applying the union bound 
	over all $N>0$ yields
	\begin{align*}
	\left\| (\bm{K}(\bm{X}_N,\bm{X}_N)+\sigma_n^2\bm{I}_N)^{-1}\bm{y}_N \right\|\leq
	\frac{\sqrt{N}\bar{f}+\sqrt{2\sqrt{N\eta_N}+2\eta_N+N}\sigma_n}
	{\sigma_n^2}\quad \forall N>0
	\end{align*}
	with probability of at least $1-\delta-\delta_f$. Hence, the Lipschitz constant of the 
	posterior mean function $\nu_N(\cdot)$ satisfies with probability of at least 
	$1-\delta-\delta_f$
	\begin{align*}
	L_{\nu_N}\leq L_k\frac{N\bar{f}+\sqrt{N(2\sqrt{N\eta_N}+2\eta_N+N)}\sigma_n}
	{\sigma_n^2}\quad \forall N>0.
	\end{align*}
	Since $\eta_N$ grows logarithmically with the number of training samples 
	$N$, it holds that $L_{\nu_N}\in\mathcal{O}(N)$ with 
	probability of at least $1-\delta-\delta_f$. The Lipschitz constant of the posterior
	variance $L_{\sigma_N^2}$ can be bounded by 
	\begin{align*}
		L_{\sigma_N^2}\leq 2N\max\limits_{
			\tilde{\bm{x}},\tilde{\bm{x}}'\in\mathbb{X}}k(\tilde{\bm{x}},
		\tilde{\bm{x}}')\frac{L_k}{\sigma_n^2}
	\end{align*}
	because $\|(\bm{K}(\bm{X}_N,\bm{X}_N)+\sigma_n^2\bm{I}_N)^{-1}\|\leq 
	\frac{1}{\sigma_n^2}$. Moreover, the unknown function admits a Lipschitz 
	constant $L_f$ with probability of at least $1-\delta_L/d$ with $\delta_L\in(0,1)$
	due to Corollary~\ref{th:Lip_f}. Therefore, application of the union bound 
	to \eqref{eq:un_err} yields
	\begin{align*}
	\gamma_N(\tau)\leq&\sqrt{2N\max\limits_{\tilde{\bm{x}},\tilde{\bm{x}}'
			\in\mathbb{X}}k(\tilde{\bm{x}},\tilde{\bm{x}}')\frac{L_k\tau(N)}{\sigma_n^2}}
	+L_f\tau(N)+L_k\frac{N\bar{f}+\sqrt{N(2\sqrt{N\eta_N}+2\eta_N+N)}}{\sigma_n^2}\tau(N)
	\end{align*}
	with probability of at least $1-\delta-\delta_f-\delta_L/d$. 
	This function must converge to $0$ for $N\rightarrow\infty$ in order to guarantee 
	a vanishing regression error which is only ensured if $\tau(N)$ decreases faster 
	than $\mathcal{O}(N^{-1})$. Therefore, set $\tau(N)\in\mathcal{O}(N^{-2})$ in order 
	to guarantee
	\begin{align*}
	\gamma_N(\tau_N)\in\mathcal{O}\left( N^{-1} \right).
	\end{align*}
	However, this choice of $\tau(N)$ implies that $\beta_N(\tau(N))\in\mathcal{O}(\log(N))$ 
	due to \eqref{eq:beta(tau)}. Since there exists an $\epsilon>0$ such that 
	$\sigma_N(\bm{x})\in\mathcal{O}\left( \frac{1}{\alpha(N)} \right)\subseteq \mathcal{O}\left(\log(N)^{-\frac{1}{2}-\epsilon}\right)$, 
	$\forall\bm{x}\in\mathbb{X}$ by assumption, we have
	\begin{align}
	\sqrt{\beta_N(\tau(N))}\sigma_N(\bm{x})+\gamma(\tau(N))\in\mathcal{O}\left( \log(N)^{-\epsilon} \right)
	\quad\forall\bm{x}\in\mathbb{X}
	\label{eq:ass_analysis}
	\end{align}
	with probability of at least $1-2\delta-\delta_L-\delta_f$ for all 
	$\delta\in(0,1)$, $\delta_f\in(0,1)$ and $\delta_L\in(0,1)$. It remains to show
	that this holds with probability $1$. We prove this by contradiction and consider
	$\delta=\delta_f=\delta_L$ for notational simplicity in the following. Assume that 
	\eqref{eq:ass_analysis} holds not with probability $1$. Then, there exists a 
	$\delta'\in(0,1)$ such that \eqref{eq:ass_analysis} holds with probability no more than 
	$1-\delta'$. However, we can set $\delta=\frac{1}{5}\delta'$ such that 
	\eqref{eq:ass_analysis} holds with probability of at least $1-\frac{4}{5}\delta'$ 
	which contradicts the assumption. Hence, \eqref{eq:ass_analysis} holds almost 
	surely and the proof is concluded.
\end{proof}

\begin{proof}[Proof of Corollary~\ref{th:as_err_data}]
	The corollary directly follows from Theorems~\ref{th:as_err} and \ref{th:varvan}.
\end{proof}

\section{Stability Proofs for Robust Tracking Control}

\begin{proof}[Proof of \cref{thm:Stable}]
	Since $\bm{\lambda}$ is Hurwitz, there exists a unique, positive definite matrix $\bm{P}\in\mathbb{R}^{d\times d}$ 
	such that \citep{Khalil2002}
	\begin{align*}
	\bm{A}^T\bm{P}+\bm{P}\bm{A}=-\bm{I}_d.
	\end{align*}
	Based on this matrix $\bm{P}$, we define a quadratic Lyapunov function~$V(\bm{e})=\bm{e}^T\bm{P}\bm{e}$. 
	The time derivative of this Lyapunov function is given by
	\begin{align*}
	\dot{V}(\bm{e})&=\bm{e}^T\bm{A}^T\bm{P}\bm{e}+\bm{e}^T\bm{P}\bm{A}\bm{e}+2\bm{e}^T\bm{p}_d(f(\bm{x})-\nu_N(\bm{x})),
	\end{align*}
	where $\bm{p}_d$ denotes the last column of the matrix $\bm{P}$.	Using the uniform error bound from \cref{th:errbound_with} 
	and the definition of $\bm{P}$, we can bound this derivative by
	\begin{align*}
	\dot{V}(\bm{e})&\leq -\|\bm{e}\|^2+2\left(\sqrt{\beta(\tau)}\sigma_N(\bm{x})+\gamma(\tau)\right)|\bm{e}^T\bm{p}_d|&\forall \bm{x}\in\mathbb{X}.
	\end{align*}
	Therefore, we obtain 
	\begin{align*}
	\dot{V}(\bm{x}-\bm{x}_{\mathrm{ref}})<0 &&\forall \bm{x}\in\mathbb{L}.
	\end{align*}
	If $\mathbb{X}\setminus\mathbb{L}\subseteq\mathbb{X}_0$, we can apply Lemma~\ref{lem:Lyap} to 
	characterize the ultimate bound by
	\begin{align*}
	\ubar{v}=\max\limits_{\bm{x}\in\mathbb{L}}(\bm{x}-\bm{x}_{\mathrm{ref}})^T\bm{P}(\bm{x}-\bm{x}_{\mathrm{ref}}).
	\end{align*}
\end{proof}

\begin{proof}[Proof of Corollary~\ref{th:asstab}]
	This corollary follows from a straight forward combination of Corollary~\ref{th:as_err_data} 
	and Theorem~\ref{thm:Stable}.
\end{proof}
\vskip 0.2in
\bibliography{example_paper}

\end{document}